\newcommand{\cmark}{\ding{51}}%
\newcommand{\xmark}{\ding{55}}%
\def\1{\bm{1}}
\DeclareMathAlphabet{\mathsfit}{\encodingdefault}{\sfdefault}{m}{sl}
\SetMathAlphabet{\mathsfit}{bold}{\encodingdefault}{\sfdefault}{bx}{n}
\def\gF{{\mathcal{F}}}
\def\gR{{\mathcal{R}}}
\newcommand{\R}{\mathbb{R}}
\newcommand{\Var}{\mathrm{Var}}
\DeclareMathOperator*{\argmax}{arg\,max}
\DeclareMathOperator*{\argmin}{arg\,min}
\DeclareMathOperator{\sign}{sign}
\DeclareMathOperator{\Exp}{\mathop \mathbb{E}}
\newcommand*{\one}{{\bm 1}}
\newcommand*{\E}{\mathrm{E}}
\newcommand{\eps}{\epsilon}
\newcommand{\rada}{\mathcal{R}}
\newcommand{\model}{f}
\newcommand{\para}{\theta}
\newcommand{\distr}{\mathcal{P}_{\mathrm{xor}}}
\newcommand{\feature}{x}
\newcommand{\target}{y}
\newcommand{\loss}{\ell}
\newcommand{\Loss}{L}
\newcommand{\mseLoss}{L^{\mathrm{mse}}}
\newcommand{\logLoss}{L^{\mathrm{log}}}
\newcommand{\Trace}{\mathrm{Tr}}
\newcommand{\ReLU}{ \mathsf{relu}}
\newcommand{\relu}{\ReLU}
\newcommand{\mseloss}{\ell_{\mathrm{mse}}}
\newcommand{\logistic}{\ell_{\mathrm{logistic,p}}}
\newcommand{\simplebn}{\mathsf{SBN}}
\newcommand{\weight}{W}
\newcommand{\bias}{b}
\newcommand{\sgn}{\mathrm{sign}}
\newcommand{\mlpnobias}{\textsf{2-MLP-No-Bias}}
\newcommand{\mlpbias}{\textsf{2-MLP-Bias}}
\newcommand{\mlpsimplebn}{\textsf{2-MLP-Sim-BN}}
\newcommand{\mlpsimpleln}{\textsf{2-MLP-Sim-LN}}
\newcommand{\modelnobias}{f^{\mathrm{nobias}}}
\newcommand{\modelbias}{f^{\mathrm{bias}}}
\newcommand{\modeldeepbias}{f^{\mathrm{bias,D}}}
\newcommand{\modelbn}{f^{\mathrm{sbn}}}
\newcommand{\modelln}{f^{\mathrm{sln}}}
\newcommand{\tracehessian}{trace of Hessian of the loss}
\newcommand{\sharpness}[1]{\Trace\left(\nabla^2 \Loss(#1)\right)}
\newcommand{\deepmlpbias}{\textsf{MLP-Bias}}
\newcommand{\datad}{\mathcal{D}}
\newcommand{\VC}{\mathrm{VC}}
\newtheorem{theorem}{Theorem}[section]
\newtheorem{proposition}{Proposition}[section]
\newtheorem{lemma}{Lemma}[section]
\newtheorem{definition}{Definition}[section]
\newtheorem{question}{Question}
\newtheorem{condition}{Condition}
\newtheorem{corollary}{Corollary}[section]
\title{
Sharpness Minimization Algorithms Do Not Only Minimize Sharpness To Achieve Better Generalization
}
\date{}
\author{Kaiyue Wen \\
Tsinghua University\\
\texttt{wenky20@mails.tsinghua.edu.cn}
\and
Zhiyuan Li \\
Stanford University\\
\texttt{zhiyuanli@stanford.edu} \\
\and
Tengyu Ma  \\
Stanford University\\
\texttt{tengyuma@stanford.edu} \\
}
\begin{document}

\maketitle

\begin{abstract}
Despite extensive studies, the underlying reason as to why overparameterized neural networks can generalize remains elusive. 
Existing theory shows that common stochastic optimizers prefer flatter minimizers of the training loss, 
and thus a natural potential explanation is that flatness implies generalization. 
This work critically examines this explanation. Through theoretical and empirical investigation, we identify the following three scenarios for two-layer ReLU networks:
(1) flatness provably implies generalization; (2)  there exist non-generalizing flattest   
models  and sharpness minimization algorithms fail to generalize, and 
(3) perhaps most surprisingly, there exist non-generalizing  flattest 
models, but sharpness minimization algorithms still generalize. Our results suggest that the relationship between sharpness and generalization 
subtly depends on the data distributions and the model architectures and sharpness minimization algorithms do not only 
minimize sharpness to achieve better generalization. This calls for the search for other explanations for the generalization of over-parameterized neural networks.
\end{abstract}

\section{Introduction}

It remains mysterious why stochastic optimization methods such as stochastic gradient descent (SGD) can find generalizable models even when the architectures are overparameterized~\citep{zhang2016understanding,gunasekar2017implicit,li2017algorithmic,soudry2018implicit,woodworth2020kernel}. Many empirical and theoretical studies suggest that generalization is correlated with or guaranteed by the flatness of the loss landscape at the learned model~\citep{hochreiter1997flat,keskar2016large,dziugaite2017computing,jastrzkebski2017three,neyshabur2017exploring,wu2018sgd,jiang2019fantastic,blanc2019implicit,wei2019data,wei2019improved,haochen2020shape,foret2021sharpnessaware,damian2021label,li2021happens,ma2021linear,ding2022flat,nacson2022implicit,wei2022statistically,lyu2022understanding,norton2021diametrical,wu2023implicit}. Thus, a natural theoretical question is 

\setcounter{question}{-1}
\begin{question}
\label{hypo:1}
 Does the flatness of the minimizers always correlate with the generalization capability?
\end{question}
The answer to the question turns out to be false. First, \citet{dinh2017sharp} theoretically construct \textit{very sharp} networks with good generalization. Second,  recent empirical results~\citep{andriushchenko2023modern} find that sharpness may not have a strong correlation with test accuracy for a collection of modern architectures and settings, partly due to the same reason---there exist sharp models with good generalization.
We note that, technically speaking, Question~\ref{hypo:1} is ill-defined without specifying the collection of models on which the correlation is evaluated. However, those sharp but generalizable models appear to be the main cause for the non-correlation. 

Observing the existing theoretical and empirical evidence, it is natural to ask the one-side version of \Cref{hypo:1}, where we are only interested in whether sharpness implies generalization but not vice versa.

\begin{question}
\label{hypo:flat}
 Do all the flattest neural network minimizers generalize well?
\end{question}

Though there are some theoretical works that answer~\Cref{hypo:flat} affirmatively for simplified linear models~\citep{li2021happens,ding2022flat,nacson2022implicit,gatmiry2023inductive}, the answer to~\Cref{hypo:flat} for standard neural networks remains unclear. Those theoretical results linking generalization to sharpness for more general architectures typically also involve other terms in generalization bounds, such as parameter dimension or norm~\citep{neyshabur2017exploring,foret2021sharpnessaware,wei2019data,wei2019improved,norton2021diametrical}, thus do not answer \Cref{hypo:flat} directly.

Our first contribution is a theoretical analysis showing that the answer to~\Cref{hypo:flat} can be \textbf{false}, even for simple architectures like 2-layer ReLU networks. Intriguingly, we also find that the answer to~\Cref{hypo:flat} subtly depends on the architectures of neural networks. For example, simply removing the bias in the first layer turns the aforementioned negative result into a positive result, as also shown in the Theorem 4.3 of \citet{wu2023implicit} (that the authors only came to be aware of after putting this work online).

More concretely, we show that for the 2 parity xor problem with mean square loss and with data sampled from hypercube $\{-1,1\}^d$, all flattest 2-layer ReLU neural networks without bias provably generalize. However, when bias is added, for the same data distribution and loss function, there exists a flattest minimizer that fails to generalize for every unseen data. Since adding bias in the first layer can be interpreted as appending a constant input feature, this result suggests that the generalization of the flattest minimizer is sensitive to both network architectures and data distributions.

Recent theoretical studies \citep{wu2018sgd,blanc2019implicit,damian2021label,li2021happens,arora2022understanding,wen2022does,nacson2022implicit,lyu2022understanding,bartlett2022dynamics,li2022fast} also show that optimizers including SGD with large learning rates or label noise and Sharpness-Aware Minimization (SAM, \citet{foret2021sharpnessaware}) may implicitly regularize the sharpness of the training loss landscape. These optimizers are referred to as \emph{sharpness minimization algorithms} in this paper.
Because ~\Cref{hypo:flat} is not always true, it is then natural to hypothesize that sharpness-minimization algorithms will fail for architectures and data distributions where~\Cref{hypo:flat} is not true.

\begin{question}
\label{hypo:work}
Will sharpness minimization algorithm fail to generalize when there exist non-generalizing flattest minimizers?
\end{question}

A priori, the authors were expecting that the answer to~\Cref{hypo:work} is affirmative, which means that a possible explanation is that the sharpness minimization algorithm works if and only if for certain architecture and data distribution, \Cref{hypo:flat} is true. However, surprisingly, we also answer this question negatively for some architectures and data distributions. In other words, we found that sharpness-minimization algorithms can still generalize well even when the answer to~\Cref{hypo:flat} is false. The result is consistent with our theoretical discovery that for many architectures, there exist both non-generalizing and generalizing flattest minimizers of the training loss. We show empirically that sharpness-minimization algorithms can find different types of minimizers for different architectures.

\begin{table}[t]
    \centering
    \begin{tabular}{|l|c|c|}
    \hline 
    Architecture & \makecell{All Flattest Minimizers \\ Generalize Well.}& \makecell{Sharpness Minimization \\
    Algorithms Generalize.} \\
    \hline
    2-layer w/o Bias & \cmark \ (\Cref{thm:sharpness_generalization_complexity}) & \cmark \\
    2-layer w/ Bias & \xmark \ (\Cref{thm:sharpness_counter_example}) & \xmark \\
    2-layer w/ simplified BatchNorm & \cmark \ (\Cref{thm:sharpness_generalization_feature}) & \cmark \\
    2-layer w/ simplified LayerNorm & \xmark \ (\Cref{thm:sharpness_counter_example_ln}) & \cmark \\
    \hline
    \end{tabular}
    \vspace{0.3cm}
    \caption{\textbf{Overview of Our Results.} Each row in the table corresponds to one architecture. The second column indicates whether all flattest minimizers of training loss generalize well. \cmark\ indicates that all (near) flattest minimizers of training loss provably generalize well and \xmark\ indicates that there provably exists flattest minimizers that generalize poorly.
    The third column indicates whether the sharpness minimization algorithms generalize well in our experiments. Results in row $2$ and $4$ deny~\Cref{hypo:flat} and \Cref{hypo:work} respectively.
    }
    \label{table:overview}
    \end{table}

Our results are summarized in~\Cref{table:overview}. 
We show through theoretical and empirical analysis that the relationship 
between sharpness and generalization can fall into three different regimes depending on the architectures and distributions. 
The three regimes include:

\begin{itemize}[leftmargin=*]
\setlength\itemsep{-0.1ex}
\vspace{-0.2cm}
   \item \textbf{Scenario 1.} Flattest minimizers of training loss provably generalize and sharpness minimization 
    algorithms find generalizable models. This regime (\Cref{thm:sharpness_generalization_complexity,thm:sharpness_generalization_feature}) includes 2-layer ReLU MLP without bias and 2-layer ReLU MLP with a simplified BatchNorm (without mean subtraction and bias). We answer both the \Cref{hypo:flat} and~\Cref{hypo:work} affirmatively in this scenario.\footnote{The condition for \Cref{hypo:work} is not satisfied and thus the answer to \Cref{hypo:work} is affirmative.}
  \item \textbf{Scenario 2.} There exists a flattest minimizer that has the worst generalization over all minimizers. Also, sharpness minimization algorithms fail to find generalizable models. This regime includes $2$ layer ReLU MLP with bias. We deny~\Cref{hypo:flat} while affirm~\Cref{hypo:work} in this scenario.
   \item \textbf{Scenario 3.} There exist flattest minimizers that do not generalize but the
    sharpness minimization algorithm still finds the generalizable flattest model empirically. This regime includes 2-layer ReLU MLP with a simplified LayerNorm (without mean subtraction and bias). In this scenario, the
    sharpness minimization algorithm relies other unknown mechanisms beyond minimizing sharpness to find a generalizable model. We deny both~\Cref{hypo:flat} and~\Cref{hypo:work} in this scenario.
\end{itemize}

\section{Setup}
\paragraph{Rademacher Complexity.} Given $n$ data $S = \{x_i\}_{i=1}^n$, the \emph{empirical Rademacher complexity} of function class $\gF$ is defined as 
$\gR_S(\gF) = \frac{1}{n}\Exp_{\epsilon \sim \{\pm 1\}^n} \sup_{f\in \gF}\sum_{i=1}^n  \epsilon_i f(x_i)$.
\paragraph{Architectures.} As summarized in~\Cref{table:overview}, we will consider multiple network architectures and discuss how architecture influences the relationship between 
sharpness and generalization. For each model $\model_\para$ parameterized by $\para$, we will use $d$ to denote the input dimension and $m$ to denote the network width. We will now describe the architectures in detail.
\\
\\
\mlpnobias.$\modelnobias_\para(x) = \weight_2 \ReLU \left( \weight_1 x \right)$ with $\theta = (\weight_1, \weight_2)$.
\\
\\
\mlpbias. $\modelbias_\para(x) = \weight_2 \ReLU \left( \weight_1 x + \bias_1 \right)$ with $\theta = (\weight_1, \bias_1, \weight_2)$. We additionally define \deepmlpbias~as $\modeldeepbias_\para(x) = \weight_D \ReLU \cdots \weight_2 \ReLU \left( \weight_1 x + \bias_1 \right)$,
\\
\\
\mlpsimplebn. $\modelbn_\para(x, \{\feature_i\}_{i \in [n]})  = \weight_2 \simplebn_\gamma\left(\ReLU\left(\weight_1 x + \bias_1\right), \left\{\ReLU\left(\weight_1 x_i+ \bias_1\right)\right\}\right)$
, where the simplified BatchNorm $\simplebn$ is defined as $\forall m, n \in N, \forall i \in [n], x, x_i \in \R^m, j \in [m], \simplebn_{\gamma}(x, \{x_i\}_{i \in [n]})[j] = \gamma x[j] / \left( \sum_{i=1}^n (x_{i}[j])^2 / n\right)^{1/2}$ and $\theta =  (\weight_1, \bias_1, \gamma, \weight_2)$.
\\
\\
\mlpsimpleln.
$\modelln_\para(x) = \weight_2 \frac{\ReLU(\weight_1 x + \bias_1)}{\max\{\|\ReLU(\weight_1 x + \bias_1)\|_2, \epsilon \}}$ where $\epsilon$ is a sufficiently small positive constant. 
\\

Surprisingly, our results show that the relationships between sharpness and generalization are strikingly different among these simple yet similar architectures.

\paragraph{Data Distribution.} We will consider a simple data distribution as our testbed. 
Data distribution $\distr$ is a joint distribution over data point $\feature$ and label $\target$. The data point is sampled uniformly from the hypercube $\{-1,1\}^d$ and the label satisfies $y = x[1]x[2]$. Many of our results, including our generalization bound in~\Cref{sec:scenario1} and experimental observations can be generalized to broader family of distributions (\Cref{sec:appendix_theory}).

\paragraph{Loss.} We will 
use mean squared error $\mseloss$ for training and denote the training loss as $\Loss$. In~\Cref{sec:appendix_theory}, we will show that all our theoretical results and empirical observations hold for logistic loss with label smoothing probability $p > 0$. We will also consider zero one loss $\Pr(y\model_{\para}(x) > 0)$ for evaluating the model. We will use interpolating model to denote the model with parameter $\para$ that minimizes $\Loss$.

\begin{definition}[Interpolating Model]
    A model $f_\para$ interpolates the dataset $\{(\feature_i, \target_i)\}_{i=1}^n$ if and only if  
    $\forall i, f_\para(\feature_i) = \target_i$.
\end{definition}

\paragraph{Sharpness.}
Our theoretical analysis focuses on understanding the \emph{sharpness} of the trained models. Precisely, for a model ${\model}_\para$ parameterized by $\para$, a dataset $\{({\feature}_i, {\target}_i )\}_{i=1}^n$ and loss function $\loss$, we will use the trace of Hessian of loss function, $\Trace(\nabla^2 \Loss(\para))$ to measure how sharp the loss is at $\para$, which is a proxy for the sharpness along a random direction~\citep{wen2022does}, or equivalently, the expected increment of loss under a random gaussian perturbation~\citep{foret2021sharpnessaware,orvieto2022explicit} .

$\Trace(\nabla^2 L(\theta))$ is not the only  choice for defining sharpness, but theoretically many sharpness minimization algorithms have been shown to minimize this term over interpolating models. In particular,
under the assumptions that the minimizer of the training loss form a smooth manifold~\citet{cooper2018loss,fehrman2020convergence}, Sharpness-Aware Minimization (SAM) \citep{foret2021sharpnessaware} with batch size $1$ and sufficiently small learning rate $\eta$ and perturbation radius $\rho$~\citep{wen2022does,bartlett2022dynamics}, or 
Label Noise SGD with sufficiently small learning rate $\eta$~\citep{blanc2019implicit,damian2021label,li2021happens}, 
prefers interpolating models with small \tracehessian.    
Hence, we choose to analyze \tracehessian~and will use SAM with batch size $1$ (we denote it by 1-SAM) as our sharpness minimization algorithm in our experiments.

\paragraph{Notations.}  We  use $\mathrm{Tr}$ to denote the trace of a matrix and $x[i]$ to denote the value of the $i$-th coordinate of vector $x$. We will use $\odot$ to represent element-wise product. We use $\one$ as the (coordinate-wise) indicator function, for example, $\one\left[x > 0\right]$ is a vector of the same length as $x$ whose $j$-th entry is $1$ if $x[j] > 0$ and $0$ otherwise. We 
will use $\tilde O(x)$ to hide logarithmic multiplicative factors.

\section{Scenario I: All Flattest Models Generalize}
\label{sec:scenario1}
\begin{figure}
    \centering
    \begin{subfigure}[t]{0.45\textwidth}
        \centering
        \includegraphics[width=\textwidth]{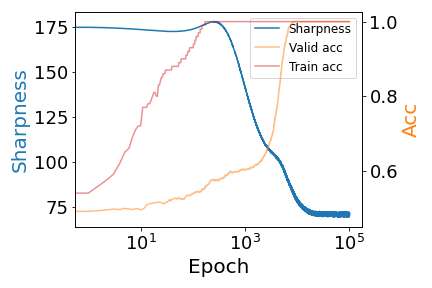}
        \caption{Baseline}
        \label{fig:classification_no_bias_baseline}
    \end{subfigure}
    \begin{subfigure}[t]{0.45\textwidth}
       \centering
       \includegraphics[width=\textwidth]{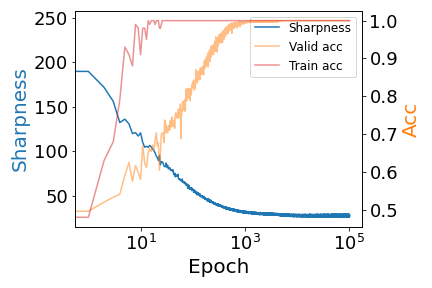}
       \caption{1-SAM}
       \label{fig:classification_no_bias_sam}
   \end{subfigure}
\caption{
\textbf{Scenario I.} We train a 2-layer MLP with ReLU activation without bias using gradient descent with weight decay and 1-SAM on $\distr$ with dimension $d = 30$ and training set size $n = 100$. In both cases, the model reaches perfect generalization. Notice that although weight decay doesn't explicitly regularize model sharpness, the flatness of the model decreases through training, which is consistent with our~\Cref{lem:norm_bound} relating sharpness to the norm of the weight.}
\label{fig:classification_no_bias}
\end{figure}

\subsection{Flattest models provably generalize}

When the architecture is \mlpnobias, we will show that the flattest models can provably generalize, hence answering~\Cref{hypo:flat} affirmatively for this architecture and data distribution $\distr$.

\begin{theorem}
\label{thm:sharpness_generalization_complexity}
    For any $ \delta \in (0,1)$ and input dimension $d$, for $n = \Omega\left(d \log\left(\frac{d}{\delta}\right)\right)$, with probability at least $1 - \delta$ over the random draw of training set $\{(x_i,y_i)\}_{i=1}^n$ from $\distr^{n}$, let $L(\theta)\triangleq \frac{1}{n}\sum_{i=1}^n \mseloss(\modelnobias_\theta(x_i),y_i)$ be the training loss for \mlpnobias, it holds that for all $\para^*\in  \argmin_{L(\para) = 0}\mathrm{Tr}\left( \nabla^2 L \left ( \theta \right) \right)$, we have that
    \begin{align}
        &\E_{x, y \sim \distr} \left[\mseloss\left(\modelnobias_{\para^*}\left(x\right),y\right) \right ]\le \tilde O\left(d/n\right). \notag 
        \end{align}
\end{theorem}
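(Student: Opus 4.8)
The plan is to prove this in four moves: (i) reduce the sharpness at an interpolating solution to a sum of squared per-example parameter gradients; (ii) lower bound that quantity by a dimension-free path norm of the network, exploiting the positive homogeneity of $\ReLU$ (this is the content of \Cref{lem:norm_bound}); (iii) exhibit an explicit low-norm representation of the target $x[1]x[2]$ to upper bound the minimal sharpness, forcing every sharpness minimizer to have $O(1)$ path norm; and (iv) turn this norm control into the claimed fast rate via an optimistic-rate uniform convergence argument.

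For step (i), since $\modelnobias$ is piecewise smooth and $L(\para^*)=0$, every residual $\modelnobias_{\para^*}(x_i)-y_i$ vanishes, so the curvature term involving $\nabla^2_\para \modelnobias$ drops out of the Gauss--Newton decomposition and $\Trace(\nabla^2 L(\para^*)) = \frac{2}{n}\sum_{i=1}^n \|\nabla_\para \modelnobias_{\para^*}(x_i)\|^2$. Writing $\modelnobias_\para(x)=\sum_j a_j \ReLU(w_j^\top x)$ with $w_j$ the rows of $\weight_1$ and $a_j$ the entries of $\weight_2$, and using $\|x\|_2^2=d$ on the cube, one gets $\|\nabla_\para \modelnobias_\para(x)\|^2 = \sum_j \ReLU(w_j^\top x)^2 + d\sum_j a_j^2\,\one[w_j^\top x>0]$. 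For step (ii), applying AM--GM to the neuron rescaling symmetry $(w_j,a_j)\mapsto(cw_j,a_j/c)$ yields, per neuron and hence for \emph{every} interpolator, $\Trace(\nabla^2 L(\para))\ge 4\sqrt d \sum_j |a_j|\sqrt{\hat p_j \hat q_j}$, where $\hat p_j = \tfrac1n\sum_i \ReLU(w_j^\top x_i)^2$ and $\hat q_j=\tfrac1n\sum_i \one[w_j^\top x_i>0]$.

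Next I would show that with $n=\Omega(d\log(d/\delta))$ the empirical activation moments concentrate \emph{uniformly over all directions}: $\hat p_j \gtrsim \|w_j\|_2^2$ and $\hat q_j\gtrsim \text{const}$ simultaneously for every $w_j$, using a covering net of the unit sphere in $\R^d$ (whose log-cardinality is $O(d)$) together with Bernstein's inequality; here $\E[\ReLU(w^\top x)^2]=\tfrac12\|w\|_2^2$ and $\Pr(w^\top x>0)=\tfrac12$ by the symmetry of the cube. This upgrades the bound to $\Trace(\nabla^2 L(\para))\gtrsim \sqrt d \sum_j |a_j|\|w_j\|_2$. Constructing a bias-free two-layer representation of $x[1]x[2]$ supported on the first two coordinates (using $\ReLU(\pm(x[1]\pm x[2]))$ and $\ReLU(\pm x[1])$ to synthesize the constant term on the cube) certifies that the minimal sharpness is $O(\sqrt d)$, so every $\para^*$ in the argmin obeys $\sum_j|a_j|\|w_j\|_2 = O(1)$. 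Finally, restrict to $\gF_B=\{x\mapsto \sum_j a_j \ReLU(w_j^\top x): \sum_j |a_j|\|w_j\|_2\le B\}$ with $B=O(1)$, whose empirical Rademacher complexity on inputs with $\|x_i\|_2=\sqrt d$ satisfies $\gR_S(\gF_B)\le 2B\sqrt{d/n}=O(\sqrt{d/n})$ by Ledoux--Talagrand contraction and a convex-hull argument. Since $\para^*$ lies in $\gF_B$ and interpolates ($\hat L=0$), an optimistic-rate bound for the smooth, bounded squared loss converts the $1/\sqrt n$ complexity into $\E_{x,y\sim\distr}[\mseloss]\lesssim \hat L + \tilde O(\gR_S(\gF_B)^2) = \tilde O(d/n)$, with deviation and covering terms absorbed into the $\tilde O$ and into $\delta$.

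The hard part will be steps (ii)--(iii): making the sharpness lower bound dimension-free and uniform. Sharpness is an intrinsically data-dependent quantity---it only sees activation patterns at the training points---so I must argue that controlling it forces a genuinely low-complexity function \emph{off} the sample, which is exactly where uniform concentration of $\hat p_j,\hat q_j$ over the whole sphere, and hence $n=\Omega(d\log(d/\delta))$, enters. A secondary technical nuisance is the nonsmoothness of $\ReLU$: I must check that $\Trace(\nabla^2 L)$ is well defined at the relevant minimizers and that the Gauss--Newton identity holds there, e.g. by restricting to generic activation patterns or using one-sided second derivatives. The concluding fast-rate step is also delicate, since a vanilla Rademacher bound would only yield $\sqrt{d/n}$; obtaining $d/n$ genuinely relies on interpolation together with the smoothness of the squared loss.
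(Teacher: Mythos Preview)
Your proposal is correct and follows essentially the same route as the paper: Gauss--Newton identity at interpolation, uniform concentration of the per-neuron activation statistics over the sphere via a covering net, an AM--GM/rescaling step to extract the path norm $\sum_j |a_j|\|w_j\|_2$, an explicit $O(1)$-norm representation of $x[1]x[2]$ to pin the minimal sharpness at $\Theta(\sqrt d)$, and finally the Srebro--Sridharan--Tewari optimistic rate to turn $\gR_S=O(\sqrt{d/n})$ into $\tilde O(d/n)$. The only organizational difference is that the paper first sandwiches $\Trace(\nabla^2 L)$ between constant multiples of $\|W_1\|_F^2+d\|W_2\|_2^2$ and applies AM--GM afterward, whereas you apply AM--GM to the empirical moments $\hat p_j,\hat q_j$ first and then concentrate; these are equivalent. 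For the specific hypercube distribution your boundedness claim for the squared loss is fine (outputs are bounded by $O(\sqrt d)$ under the path-norm constraint), so the paper's truncation device---introduced to cover general subgaussian inputs---is not needed here.
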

\Cref{thm:sharpness_generalization_complexity} shows that for $\distr$, flat models can generalize under almost linear sample complexity with respect to the input dimension. We note that~\Cref{thm:sharpness_generalization_complexity} implies that $\Pr_{x, y \sim \distr}\left[\modelnobias_{\para^*}(x) y > 0\right] \le \tilde O\left(d/n\right).$ because if $\modelnobias_{\para^*}(x) y \le 0$, it holds that $\mseloss\left(\modelnobias_{\para^*}\left(x\right),y\right) \ge 1$. This shows that the model can classify the input with high accuracy.
The major proof step is relating 
sharpness
to the norm of the weight itself.

\begin{lemma}
\label{lem:norm_bound}
 Define $\Theta_C \triangleq  \{ \theta = (W_1,W_2) \mid \sum_{j = 1}^m \| \weight_{1,j} \|_2 |\weight_{2,j}| \le C\}$.
    Under the setting of~\Cref{thm:sharpness_generalization_complexity}, there exists a absolute constant $C$ independent of $d$ and $\delta$, such that with probability at least $1 - \delta$, $\argmin_{L(\para) = 0}\mathrm{Tr}\left( \nabla^2 L \left( \theta \right) \right) \subseteq \Theta_C$ and $\rada_S(\{  \modelnobias_{\theta} \mid \theta \in \Theta_C\}) \le \tilde O\left(\sqrt{d/n}\right)$.
\end{lemma}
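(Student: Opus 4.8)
The plan is to prove the two assertions of the lemma separately: the containment $\argmin_{L(\theta)=0}\Trace(\nabla^2 L(\theta)) \subseteq \Theta_C$ (a norm bound induced by sharpness minimization), and the Rademacher complexity estimate for $\Theta_C$.

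For the norm bound, I first compute the trace of the Hessian at an interpolating point. Since $L$ is the empirical MSE and every residual $\modelnobias_\theta(x_i)-y_i$ vanishes, the Gauss--Newton decomposition collapses to $\nabla^2 L(\theta)=\frac{2}{n}\sum_i \nabla_\theta \modelnobias_\theta(x_i)\,\nabla_\theta \modelnobias_\theta(x_i)^\top$, so
\begin{align}
\Trace\!\left(\nabla^2 L(\theta)\right)=\frac{2}{n}\sum_{i=1}^n \left\|\nabla_\theta \modelnobias_\theta(x_i)\right\|_2^2 .\notag
\end{align}
Computing the per-neuron gradients and using $\|x_i\|_2^2=d$ on the hypercube gives
\begin{align}
\Trace\!\left(\nabla^2 L(\theta)\right)=\frac{2}{n}\sum_{i=1}^n\sum_{j=1}^m\left(\ReLU(\weight_{1,j}\cdot x_i)^2 + d\,\weight_{2,j}^2\,\one[\weight_{1,j}\cdot x_i>0]\right).\notag
\end{align}

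The core estimate is a per-term AM--GM: for each $i,j$ one has $\ReLU(\weight_{1,j}\cdot x_i)^2 + d\,\weight_{2,j}^2\,\one[\weight_{1,j}\cdot x_i>0]\ge 2\sqrt d\,|\weight_{2,j}|\,\ReLU(\weight_{1,j}\cdot x_i)$ (using $\one[\weight_{1,j}\cdot x_i>0]\,\ReLU(\weight_{1,j}\cdot x_i)=\ReLU(\weight_{1,j}\cdot x_i)$), so summing yields the lower bound $\frac{4\sqrt d}{n}\sum_j |\weight_{2,j}|\sum_i \ReLU(\weight_{1,j}\cdot x_i)$. I then establish a uniform lower bound $\frac{1}{n}\sum_i \ReLU(W\cdot x_i)\ge c\,\|W\|_2$ holding simultaneously for every $W$: by the $x\mapsto -x$ symmetry of $\distr$ and Khintchine, $\E\,\ReLU(W\cdot x)=\tfrac12\E|W\cdot x|\ge\tfrac{1}{2\sqrt2}\|W\|_2$, and a uniform deviation bound over the unit sphere (covering number $(3/\epsilon)^d$, sub-Gaussian tails of $W\cdot x$) promotes this to the empirical average once $n=\Omega(d\log(d/\delta))$. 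Combining, every interpolating $\theta$ satisfies $\Trace(\nabla^2 L(\theta))\ge c'\sqrt d\sum_j \|\weight_{1,j}\|_2|\weight_{2,j}|$. Finally I exhibit one explicit interpolating network — four neurons realizing $x[1]x[2]=\tfrac12\ReLU(x[1]+x[2])+\tfrac12\ReLU(-x[1]-x[2])-\tfrac12\ReLU(x[1]-x[2])-\tfrac12\ReLU(-x[1]+x[2])$, rescaled (via ReLU homogeneity) so its two Hessian terms balance — which shows $\min_{L(\theta)=0}\Trace(\nabla^2 L(\theta))=O(\sqrt d)$. Since $\theta^*$ minimizes sharpness, $\sum_j \|\weight_{1,j}^*\|_2|\weight_{2,j}^*|\le O(\sqrt d)/(c'\sqrt d)=C$, an absolute constant.

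For the Rademacher bound I again exploit positive homogeneity: rescaling each neuron to $\|\weight_{1,j}\|_2=1$ shows $\{\modelnobias_\theta:\theta\in\Theta_C\}$ equals $C$ times the symmetric convex hull of $\{x\mapsto \ReLU(W\cdot x):\|W\|_2=1\}$. Since Rademacher complexity is unchanged under taking symmetric convex hulls, it suffices to bound the base class; Talagrand's contraction lemma ($\ReLU$ is $1$-Lipschitz with $\ReLU(0)=0$) followed by $\frac1n\E_\epsilon\|\sum_i \epsilon_i x_i\|_2\le\frac1n(\sum_i\|x_i\|_2^2)^{1/2}=\sqrt{d/n}$ gives $\rada_S(\{\modelnobias_\theta:\theta\in\Theta_C\})\le C\sqrt{d/n}=\tilde O(\sqrt{d/n})$.

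I expect the main obstacle to be the uniform-in-$W$ lower bound on $\frac1n\sum_i \ReLU(W\cdot x_i)$: the single-direction statement is immediate from Khintchine, but making it hold simultaneously over the entire sphere — so that it applies to the a priori unknown weights of the sharpness minimizer $\theta^*$ — is what forces the covering-number argument and is precisely where the sample requirement $n=\Omega(d\log(d/\delta))$ enters. The Hessian identity and the Rademacher contraction are routine by comparison.
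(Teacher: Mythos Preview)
Your proposal is correct and reaches the same conclusion as the paper, but the route to the sharpness lower bound differs in an instructive way. The paper first establishes a two-sided equivalence $\Trace(\nabla^2 L(\theta))\asymp \|W_1\|_F^2 + d\|W_2\|_2^2$ at every interpolating point, via two separate uniform-concentration lemmas (one controlling $\tfrac1n\sum_i \ReLU(w^\top x_i)^2$ and one controlling $\tfrac1n\sum_i \one[w^\top x_i>0]\,\|x_i\|_2^2$, each uniformly in $w$); only afterwards does it apply AM--GM to $\|W_1\|_F^2 + d\|W_2\|_2^2\ge 2\sqrt d\sum_j\|W_{1,j}\|_2|W_{2,j}|$. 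You instead apply AM--GM \emph{pointwise inside the sum over $i$} first, which reduces everything to a single uniform lower bound on the first moment $\tfrac1n\sum_i \ReLU(w^\top x_i)\ge c\|w\|_2$. Your route is a little more economical for this lemma specifically (one covering argument instead of two); the paper's route buys the stronger intermediate statement that sharpness is two-sidedly comparable to $\|W_1\|_F^2+d\|W_2\|_2^2$, which is of independent interest. The explicit four-neuron interpolant and the Rademacher bound (homogeneity, symmetric convex hull, Talagrand contraction, linear-class bound $\sqrt{d/n}$) are essentially identical in both proofs. One small technical caution: in your four-neuron construction the preactivations hit exactly $0$ on half the training points, where $\nabla_\theta f_\theta$ is not defined; the paper sidesteps this by perturbing the first-layer weights slightly (e.g.\ $(1\pm\epsilon)$), which you may want to do as well.
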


We would like to note that similar results of~\Cref{thm:sharpness_generalization_complexity,lem:norm_bound} have also been shown in a prior work~\cite{wu2023implicit} (that the authors were not aware of before the first version of this work was online).

The almost linear complexity in~\Cref{thm:sharpness_generalization_complexity} is not trivial. For example,~\cite{wei2019regularization} shows that learning the distribution will require $\Omega(d^2)$ samples for Neural Tangent Kernel (NTK)~\citep{jacot2018neural}. In contrast, our result shows that learning the distribution only requires $\tilde O(d)$ samples as long as the flatness of the model is controlled.

Beyond reducing model complexity, flatness may also encourage the model to find a more interpretable solution. We prove that under a stronger than i.i.d condition over the training set, the near flattest interpolating model with architecture \mlpsimplebn~will provably generalize and the weight of the first layer will be centered on the first two coordinates of the input, i.e., $ \| \weight_{1,i}[3:d]\|_2 \le \epsilon \| \weight_{1,i}\|_2$. 

\begin{condition}[Complete Training Set Condition]
    \label{condition:complete_training}
    There exists set $S \subset \{-1,1\}^{d - 2}$, such that the linear space spanned by 
    $S - S = \{s_1 - s_2 \mid s_1, s_2 \in S\}$ has rank $d - 2$ and the training set is $\{(x, y) \mid x \in \R^d, x[3:d] \in S, x[1], x[2] \in \{-1,1\}, y = x[1] \times x[2]\}$.
    \end{condition}

    \begin{theorem}
    \label{thm:sharpness_generalization_feature}
    Given any training set $\{(x_i,y_i)\}_{i=1}^n$satisfying~\Cref{condition:complete_training}, for any width $m$ and any $\epsilon > 0$, 
    there exists constant $\kappa> 0$, such that for any width-$m$ \mlpsimplebn~, $\modelbn$, satisfying $\modelbn_\para$
    interpolates the training set and $\mathrm{Tr}\left( \nabla^2 L(\theta) \right)  \le \kappa+  \inf_{\Loss(\para') = 0}\mathrm{Tr}\left(  \nabla^2 L(\para') \right)$,
     it holds that $\forall x \in \{-1,1\}^d, \big |x[1]x[2] - f_{\para}(x) \big| \le \epsilon$ and that $\forall i \in [m], \| \weight_{1,i}[3:d]\|_2 \le \epsilon \| \weight_{1,i}\|_2$.
    \end{theorem}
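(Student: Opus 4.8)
The plan is to reduce the sharpness of an interpolating \mlpsimplebn~model to a scale-invariant representation-cost functional, show that this functional penalizes any dependence on the coordinates $x[3{:}d]$, and conclude that near-minimizers must concentrate their first-layer weight on the first two coordinates and hence coincide with $x[1]x[2]$.

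First I would compute the sharpness at an interpolating point. Since $\modelbn_\para$ interpolates, the residuals $\modelbn_\para(x_i)-y_i$ vanish, so the curvature term in the Hessian of the squared loss drops out and $\mathrm{Tr}(\nabla^2\Loss(\para))=\tfrac{2}{n}\sum_{i=1}^n\|\nabla_\para\modelbn_\para(x_i)\|_2^2$, where $\nabla_\para$ is the total gradient that also differentiates through the BatchNorm statistics $\sigma_j=(\tfrac1n\sum_i \ReLU(\weight_{1,j}\cdot x_i+\bias_{1,j})^2)^{1/2}$. Thus minimizing sharpness over interpolating models is exactly minimizing the averaged squared parameter-gradient of the output.

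Next I would exploit the scale invariance introduced by the normalization: $\modelbn_\para$ is unchanged under the per-neuron rescaling $(\weight_{1,j},\bias_{1,j})\mapsto c_j(\weight_{1,j},\bias_{1,j})$ with $c_j>0$, because $\sigma_j$ scales identically and cancels. Consequently the first-layer parameters' contribution to $\|\nabla_\para\modelbn_\para(x_i)\|_2^2$ scales as $1/c_j^2$ and can be driven to $0$, so the infimal sharpness over each scale orbit is carried entirely by the second-layer and scale parameters. Writing $\hat\phi_j(x)=\ReLU(\weight_{1,j}\cdot x+\bias_{1,j})/\sigma_j$ for the unit-RMS feature (so $\tfrac1n\sum_i\hat\phi_j(x_i)^2=1$) and $\gamma_j$ for the channel scale, one finds $\sum_i(\partial\modelbn_\para(x_i)/\partial\weight_{2,j})^2=n\gamma_j^2$ and $\sum_i(\partial\modelbn_\para(x_i)/\partial\gamma_j)^2=n\weight_{2,j}^2$. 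Setting $\beta_j=\weight_{2,j}\gamma_j$ and applying AM--GM per neuron gives the clean lower bound $\mathrm{Tr}(\nabla^2\Loss(\para))\ge 2n\sum_j|\beta_j|$, approached when $|\gamma_j|=|\weight_{2,j}|$ and the first-layer weights are scaled up. Hence the infimal sharpness equals $2n$ times the minimal weighted $\ell_1$ (variation-norm) cost of interpolating $x[1]x[2]$ on the training set by unit-RMS ReLU features, minimized also over neuron directions.

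The crux is then to characterize this minimal-cost interpolation under \Cref{condition:complete_training}. Here I would use that $S-S$ spans $\R^{d-2}$ and that every sign pattern of $(x[1],x[2])$ occurs for each $x[3{:}d]\in S$, while the target is constant in $x[3{:}d]$. The target quantitative statement is a strengthened lower bound of the form (minimal cost) $+\,c_0\sum_j|\beta_j|\,\|\weight_{1,j}[3{:}d]\|_2^2/\|\weight_{1,j}\|_2^2$ for some $c_0>0$: any first-layer mass on coordinates $3{:}d$ produces genuine $x[3{:}d]$-variation in the feature $\hat\phi_j$ across the spanning set $S$, and because the interpolant must be constant in $x[3{:}d]$ this variation has to be cancelled among neurons at strictly positive $\ell_1$ cost, with the spanning guaranteeing the variation cannot be absorbed for free. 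Granting this bound, choosing $\kappa$ small forces $\sum_j|\beta_j|\,\|\weight_{1,j}[3{:}d]\|_2^2/\|\weight_{1,j}\|_2^2$ to be small, which (after discarding neurons with negligible $\beta_j$ and using that the ratio $\|\weight_{1,j}[3{:}d]\|_2/\|\weight_{1,j}\|_2$ is scale-invariant) yields the per-neuron alignment $\|\weight_{1,i}[3{:}d]\|_2\le\epsilon\|\weight_{1,i}\|_2$. Once the contributing neurons depend essentially only on $(x[1],x[2])$, the interpolant is an $\epsilon$-perturbation of a function of $(x[1],x[2])$ alone that matches $x[1]x[2]$ on all four sign patterns, so it is $\epsilon$-close to $x[1]x[2]$ on all of $\{-1,1\}^d$, giving $|x[1]x[2]-f_\para(x)|\le\epsilon$.

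I expect the main obstacle to be exactly the quantitative representation-cost bound of the third paragraph: proving that dependence on $x[3{:}d]$ strictly and quantifiably raises the $\ell_1$ cost using only the spanning of $S-S$ (rather than an exploitable group symmetry under which one could symmetrize), and making the estimate uniform enough to upgrade "every exact minimizer is aligned" to "every $\kappa$-approximate minimizer is $\epsilon$-aligned." Because the scale invariance renders the parameter space non-compact (first-layer norms may diverge at the infimum), this step must be phrased in terms of the scale-invariant neuron directions and the realized function, and I would need to treat separately the neurons with vanishing output weight $\beta_j$, whose directions are otherwise unconstrained by the sharpness.
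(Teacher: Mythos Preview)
Your reduction of the sharpness at an interpolating point to the $\ell_1$ representation cost $\sum_j|\gamma_j\weight_{2,j}|$ (your $\sum_j|\beta_j|$) is correct and matches the paper, though your constant $2n$ should be $4$. The paper then handles the two remaining steps differently from your plan.

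For the \emph{exact} minimizer of $\sum_j|\beta_j|$, the paper does not argue abstractly that dependence on $x[3{:}d]$ must be cancelled at positive cost. Instead it uses the four-points-per-suffix structure of \Cref{condition:complete_training} directly: writing $a_i=\weight_1x_i+\bias_1$, each block of four training points (same $x[3{:}d]$, all sign patterns of $(x[1],x[2])$) satisfies $a_{4k+1}+a_{4k+4}=a_{4k+2}+a_{4k+3}$, and the elementary scalar inequality, valid whenever $a+d=b+c$,
\[
\big(\ReLU(a)+\ReLU(d)-\ReLU(b)-\ReLU(c)\big)^2\le \ReLU(a)^2+\ReLU(b)^2+\ReLU(c)^2+\ReLU(d)^2,
\]
combined with Cauchy--Schwarz across blocks, gives $\big|\sum_i y_i\hat\phi_j(x_i)\big|\le n/2$ for every neuron and hence $\sum_j|\beta_j|\ge 2$. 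The equality case of the displayed inequality (three of the four arguments $\le 0$) then forces $a_{4k+t_j}[j]$ to be independent of $k$, which together with $\mathrm{span}(S-S)=\R^{d-2}$ yields $\weight_{1,j}[3{:}d]=0$. This concrete combinatorial step is missing from your sketch.

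For the \emph{approximate} case, the paper does \emph{not} prove the quantitative strengthened lower bound you propose. It uses a soft compactness argument instead: normalize so that $\|\weight_1\|_F^2+\|\bias_1\|^2=1$ (legitimate by the scale invariance you already noted), observe that $(\weight_1,\bias_1,\weight_2,\gamma)$ then lies in a bounded set, and pass to an accumulation point of a hypothetical sequence of counterexamples with $\kappa\to 0$; continuity of $L$ and of $\|\gamma\|_2^2+\|\weight_2\|_2^2$ produces an exact minimizer violating the exact-case conclusion, a contradiction. This sidesteps entirely the obstacle you flag, at the price of a non-explicit $\kappa(\epsilon)$. Your quantitative route may be feasible but is substantially harder than needed here.
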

    One may notice that in~\Cref{thm:sharpness_generalization_feature} we only consider the approximate minimizer of sharpness.
    This is because the gradient of output with respect to $\weight_1, \bias_1$, despite never being zero, will converge to zero as the norm of $\weight_1, \bias_1$ converges to $\infty$.
    
    \Cref{condition:complete_training} may seem stringent. In practice (\Cref{tab:weight}), we find it not necessary for 1-SAM to find a generalizable solution. 
    We hypothesize that this condition is mainly technical.~\Cref{thm:sharpness_generalization_feature} shows that 
    sharpness minimization may guide the model to find an interpretable and low-rank representation. Similar implicit bias of SAM has also been discussed in~\cite{andriushchenko2023sharpness}
    The proof is deferred to~\Cref{sec:appendix_theory_mlp_bn}

\subsection{SAM empirically finds the flattest model that generalizes}
\label{sec:sam_generalize_empiric}

\begin{figure}
    \centering
    \begin{subfigure}[t]{0.45\textwidth}
        \centering
        \includegraphics[width=\textwidth]{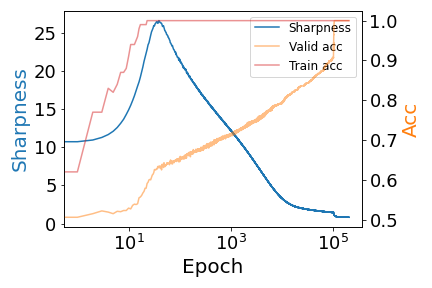}
        \caption{2-layer MLP with simplified BN}
        \label{fig:classification_bn}
    \end{subfigure}
    \begin{subtable}[t]{0.45\textwidth}
    \vspace{-8em}    
    \begin{tabular}[b]{|c|c|c|}\hline
        $W_{1,i}[1]$ & $W_{1,i}[2]$ & $\|W_{1,i}[3:d] \|_2$\\ \hline
        18.581 & -18.582 & 0.02   \\ \hline
        -14.363 & -14.363 & 0.03  \\ \hline
        13.768 & 13.771 & 0.03 \\ \hline
        -12.601 & 12.601 & 0.01 \\ \hline
    \end{tabular}
    \vspace{2em}
    \caption{Weights of the four neurons with the largest norm in the first Layer}
    \label{tab:weight}
\end{subtable}
    \caption{\textbf{Interpretable Flattest Solution} We train a 2-layer MLP with simplified BN using 1-SAM on $\distr$ with dimension $d = 30$ and training set size $n = 100$. 
    After training, we find that the model is indeed interpretable. In~\Cref{tab:weight}, we inspect the weight of the four neurons of 
    the four largest neurons in the first layer and we observe that the four neurons approximately extract features $\pm x[1] \pm x[2]$.}
\end{figure}

We use 1-SAM to train \mlpnobias~on data distribution $\distr$ to verify 
our~\Cref{thm:sharpness_generalization_complexity} (\Cref{fig:classification_no_bias}). As expected, the model interpolates the training set and reaches a flat minimum that generalizes perfectly 
to the test set. 

We then verify our~\Cref{thm:sharpness_generalization_feature} by training a 2-layer MLP with simplified BN on data distribution $\distr$ (\Cref{fig:classification_bn}). 
Here we do not enforce the strong theoretical~\Cref{condition:complete_training}. However, we still observe that SAM finds a flat minimum that generalizes well.
We then perform a detailed analysis of the model and find that the model is indeed interpretable. For example, the four largest neurons in the first layer approximately extract features $\{ \relu(c_1x[1]+c_2 x[2])\mid c_1,c_2\in \{-1,1\}\}$ (\Cref{tab:weight}). Also, the first 2 columns of the weight matrix of the first layer, corresponding 
to the useful features $\{ \relu(c_1x[1]+c_2 x[2])\mid c_1,c_2\in \{-1,1\}\}$, have norms $42.47$ and $42.48$, while the largest column norm of the rest of the weight matrix is only $5.65$.

\section{Scenario II: Both Flattest Generalizing and Non-generalizing Models Exist, and SAM Finds the Former}
\label{sec:scenario2}

\subsection{Both generalizing and non-generalizing solutions can be flattest}
In previous section, we show through~\Cref{thm:sharpness_generalization_complexity,thm:sharpness_generalization_feature} 
that sharpness benefits generalization under some assumptions. 
It is natural to ask whether it is possible to extend this bound to general architectures. However, in this section, we 
will show that the generalization benefit depends on model architectures. 
In fact, 
simply adding bias to the first layer of~\mlpnobias~makes non-vacuous generalization bound impossible for distribution $\distr$. This then leads to a negative answer to~\Cref{hypo:flat}.

\begin{definition}[Set of extreme points]
\label{def:extreme_points}
    A finite set $S \subset \R^d$ is a set of extreme points if and only if for any $x \in S$, $x$ is a vertex of the convex hull of $S$. 
\end{definition}

\begin{definition}[Memorizing Solutions]
\label{def:memorizing}
A $D$-layer network is a \emph{memorizing solution} for a training dataset if (1) the network interpolates the training dataset, and (2) for any depth $k \in [D - 1]$, there is an injection from the input data to the neurons on depth $k$, such that the activations in layer $k$ for each input data is a one-hot vector with the non-zero entry being the corresponding neuron.
\end{definition}

\begin{theorem}
    \label{thm:sharpness_counter_example}
    For any $D \ge 2$, if the input data points $\{x_i\}$ of the training set form a set of extreme points (\Cref{def:extreme_points}), 
    then there exists a  width $n$ layer $D$ \deepmlpbias~that is a memorizing solution (\Cref{def:memorizing}) for the training dataset and has minimal sharpness over all the interpolating solutions. 
\end{theorem}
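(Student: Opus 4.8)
The plan is to reduce the sharpness of any interpolating network to a sum of per-example squared parameter-gradients and then prove a tight per-example lower bound that the memorizing solution attains. Write $f\triangleq\modeldeepbias_\para$. Since $\Loss$ is the mean squared error and the solution interpolates, the residuals $f(x_i)-y_i$ vanish, so the Hessian collapses to its Gauss--Newton part and $\Trace(\nabla^2 \Loss(\para)) = \frac{1}{n}\sum_{i=1}^n \|\nabla_\para f(x_i)\|^2$ up to an absolute constant. It therefore suffices to (i) lower bound $\|\nabla_\para f(x_i)\|^2$ for every interpolating $\para$ by a quantity depending only on $\|x_i\|$ and $y_i$, and (ii) exhibit a memorizing solution meeting this bound simultaneously at all examples.

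For the lower bound, fix $x=x_i$ with $y=y_i=f(x)$ and write $h_k$, $z_k=\weight_k h_{k-1}$ for the forward activations and preactivations and $\delta_k \triangleq (\nabla_{h_k}f)\odot\one[z_k>0]$ for the backprop signals, with $h_0$ the input augmented by the bias coordinate so that $\|h_0\|^2=\|x\|^2+1$. A direct computation gives $\|\nabla_\para f(x)\|^2=\sum_{k=1}^D \|\delta_k\|^2\|h_{k-1}\|^2$, with $\delta_D\equiv 1$. The key identity is that, because biases occur only in the first layer, the output is positively homogeneous of degree one in $h_k$ for every $k\ge 1$, so Euler's relation gives $y=\langle\nabla_{h_k}f,h_k\rangle=\langle\delta_k,h_k\rangle$, the last equality because $h_k$ is supported on the active set where $\delta_k$ agrees with $\nabla_{h_k}f$. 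Cauchy--Schwarz then yields $|y|\le\|\delta_k\|\,\|h_k\|$ for $k=1,\dots,D-1$. Multiplying the $D$ nonnegative terms $T_k\triangleq\|\delta_k\|^2\|h_{k-1}\|^2$ and telescoping, $\prod_{k=1}^D T_k=\|h_0\|^2\prod_{k=1}^{D-1}(\|\delta_k\|\,\|h_k\|)^2\ge(\|x\|^2+1)\,y^{2(D-1)}$, so AM--GM gives $\|\nabla_\para f(x)\|^2\ge D\,(\|x\|^2+1)^{1/D}\,|y|^{2(D-1)/D}$.

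For the construction I use \Cref{def:extreme_points}: since each $x_i$ is a vertex of the convex hull, there is a hyperplane $(w_i,\beta_i)$ with $\langle w_i,x_i\rangle>\beta_i>\langle w_i,x_j\rangle$ for all $j\neq i$, so assigning the $i$-th first-layer neuron the weight $s\,w_i$ and bias $-s\,\beta_i$ makes it active exactly on $x_i$, giving a one-hot $h_1(x_i)$. I then route each example along its own neuron through the deeper layers by taking $\weight_2,\dots,\weight_{D-1}$ diagonal with positive entries and $\weight_D$ a row vector, so that every $h_k(x_i)$ is one-hot (a memorizing solution in the sense of \Cref{def:memorizing}) and $f(x_i)=y_i$. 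Crucially, the per-example path magnitudes (the diagonal entries, the first-layer scale $s$, and the output weight) are independent parameters across examples, so I can choose them to equalize the $D$ factors $T_k$ and to align $\delta_k\parallel h_k$, making both Cauchy--Schwarz and AM--GM tight at every example. This matches the lower bound example-by-example, certifying that the constructed memorizing network globally minimizes $\Trace(\nabla^2\Loss)$ among interpolating solutions.

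The main obstacle is not the construction but making the lower bound rigorous at the boundary of the $\argmin$: the Euler identity and the clean formula for $\Trace(\nabla^2\Loss)$ presume $f$ is twice differentiable at each $x_i$ (no preactivation exactly zero), whereas a general interpolating minimizer may sit at a ReLU kink where the Hessian is ill-defined. I would handle this by restricting to the generic region where the trace is defined and arguing that the infimum of sharpness over interpolating solutions is controlled there, or by a limiting argument approaching kinked solutions. The homogeneity structure, i.e.\ biases only in the first layer, is precisely what makes $y=\langle\delta_k,h_k\rangle$ hold, and it is exactly this identity that would break down once later-layer biases are reintroduced.
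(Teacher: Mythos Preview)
Your proposal is correct and follows essentially the same route as the paper: the Gauss--Newton reduction of $\Trace(\nabla^2\Loss)$ to per-example $\|\nabla_\para f(x_i)\|^2$ (the paper's Lemma~4.1), the layer-wise decomposition $\sum_k\|\delta_k\|^2\|h_{k-1}\|^2$, the Cauchy--Schwarz bound $|y_i|\le\|\delta_k\|\|h_k\|$ combined with AM--GM to obtain $D(\|x_i\|^2+1)^{1/D}|y_i|^{2(D-1)/D}$, and the memorizing construction with separating hyperplanes in the first layer and diagonal routing thereafter. Your phrasing of the Cauchy--Schwarz step via positive homogeneity and Euler's identity is a slightly more conceptual packaging of the paper's direct identity $W_D A_{D-1}\cdots W_{j+1}A_j a_j=y_i$, but the content is identical; the non-differentiability caveat you raise is a fair point about rigor that the paper simply does not address.
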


As one may suspect, these memorizing solutions can have poor generalization performance.
\begin{proposition}
    \label{proposition:sharpness_bad_cube}
        For data distribution $\distr$, for any number of samples $n$, there exists a width-$n$~\mlpbias~that memorizes the training 
        set as in~\Cref{thm:sharpness_counter_example}, reaches minimal sharpness over all the interpolating models and has generalization error 
        $\max\{1 - n/2^d, 0\}$ measured by zero one error.
\end{proposition}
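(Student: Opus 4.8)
The plan is to obtain the counterexample directly from \Cref{thm:sharpness_counter_example} specialized to $D=2$, and then to read off the zero-one error by analyzing what the memorizing solution does on the vertices of $\{-1,1\}^d$ that are absent from the training set. Concretely, I would (i) check that any realization of the training set satisfies the hypothesis of \Cref{thm:sharpness_counter_example}; (ii) invoke that theorem to produce a width-$n$ \mlpbias\ memorizing, minimal-sharpness interpolant; and (iii) argue that this interpolant outputs $0$ on every non-training vertex of $\{-1,1\}^d$, so that the uniform measure of the misclassified region is exactly $1-n/2^d$.

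For step (i): the data are cube vertices, and any finite $S\subseteq\{-1,1\}^d$ is a set of extreme points in the sense of \Cref{def:extreme_points}. Indeed, for $x_i\in S$ the linear functional $z\mapsto x_i^\top z$ is uniquely maximized over $[-1,1]^d$ at $z=x_i$, since $x_i^\top z=\sum_j x_i[j]\,z[j]\le d$ with equality iff $z=x_i$; hence $x_i$ is the unique maximizer over $\mathrm{conv}(S)\subseteq[-1,1]^d$ and therefore a vertex of $\mathrm{conv}(S)$. This lets me apply \Cref{thm:sharpness_counter_example} with $D=2$, yielding a width-$n$ \mlpbias\ $\modelbias_{\para}$ that is a memorizing solution (\Cref{def:memorizing}) with minimal $\Trace(\nabla^2\Loss(\para))$ over all interpolating models.

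The crux is step (iii), because \Cref{def:memorizing} constrains the activation pattern only on the training inputs; a priori a test vertex could still switch on the neuron dedicated to some $x_i$. The key observation is that one may choose the minimal-sharpness memorizing solution so that every neuron is sharply localized to its own training point. Writing the neuron assigned to $x_i$ with incoming weight $\weight_{1,i}\propto x_i$ and bias chosen so the preactivation equals some $h_i>0$ at $x_i$, every other cube vertex differs from $x_i$ in at least one coordinate and so has inner product at most $x_i^\top x_i-2=d-2$; scaling $\weight_{1,i}$ up while readjusting $\bias_{1,i}$ to keep $h_i$ fixed drives the preactivation at all other vertices strictly below $0$. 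Crucially, this scaling does not change the sharpness: at an interpolating minimizer $\Trace(\nabla^2\Loss(\para))=\tfrac{2}{n}\sum_i\|\nabla_{\para}\modelbias_{\para}(x_i)\|^2$, and the $\weight_{1,i}$-block of $\nabla_{\para}\modelbias_{\para}(x_i)$ equals $\weight_{2,i}\,x_i$, whose norm $|\weight_{2,i}|\sqrt d$ is independent of the scale of $\weight_{1,i}$. Hence localization is compatible with the minimality guaranteed by \Cref{thm:sharpness_counter_example}, and on every vertex $x\in\{-1,1\}^d$ outside the training set all neurons are inactive, giving $\modelbias_{\para}(x)=0$.

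Finally I would compute the error. On the $n$ (distinct) training vertices the model interpolates and thus classifies correctly; on the remaining $2^d-n$ vertices $\modelbias_{\para}(x)=0$, so $y\,\modelbias_{\para}(x)=0$ is not positive and each such point is counted as an error. Since $\distr$ is uniform on $\{-1,1\}^d$, the zero-one error equals $(2^d-n)/2^d=1-n/2^d$ when $n\le 2^d$ and $0$ once the training set exhausts the cube, i.e. $\max\{1-n/2^d,0\}$. The main obstacle I anticipate is precisely the localization argument in step (iii): making rigorous that the minimal-sharpness memorizing interpolant can be taken to shut off all neurons on unseen vertices without increasing sharpness, equivalently that the scale of the first-layer weights is a free parameter as far as the trace of the Hessian is concerned. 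Everything else is bookkeeping on the uniform measure of the cube.
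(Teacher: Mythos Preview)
Your proposal is correct and follows essentially the same route as the paper. The paper's proof sketch writes down the memorizing construction directly with $W_{1,i}=\sqrt{r_i|y_i|}\,w_i/\epsilon$ and $b_{1,i}=\sqrt{r_i|y_i|}(-w_i^\top x_i+\epsilon)/\epsilon$, then observes that for sufficiently small $\epsilon$ every neuron is inactive on any cube vertex outside the training set, so the network outputs $0$ there; your step (iii) recovers exactly this by taking $w_i=x_i/\|x_i\|$ and sending $\epsilon\to 0$ (equivalently, scaling $W_{1,i}$ up while pinning $h_i$). Two small remarks: your sharpness-invariance justification should also mention the bias and second-layer contributions, since $\|\nabla_\theta f_\theta(x_i)\|^2=|W_{2,i}|^2(\|x_i\|^2+1)+h_i^2$ for a memorizing solution, and it is the fact that you freeze both $h_i$ and $W_{2,i}$ (not just the $W_{1,i}$-block being $W_{2,i}x_i$) that makes the trace invariant; and the norm you quote should be $|W_{2,i}|\sqrt{d+1}$ once the bias coordinate is included. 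Neither point affects the validity of the argument.
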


This corollary shows that a flat model can generalize poorly. Comparing~\Cref{thm:sharpness_counter_example,thm:sharpness_generalization_complexity}, 
one may observe the perhaps surprising difference caused by slightly modifying the architectures (adding bias or removing the BatchNorm).
To further show the complex relationship between sharpness and generalization, the following theorem suggests, despite the existence of memorizing 
solutions, there also exists a flattest model that \emph{can} generalize well.

\begin{proposition}
\label{thm:sharpness_good_cube}
For data distribution $\distr$, for any number of samples $n$, there exists a width-$n$~\mlpbias~that interpolates the training dataset,
 reaches minimal sharpness over all the interpolating models, and has zero generalization error measured by zero one error. 
\end{proposition}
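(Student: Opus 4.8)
The plan is to pin down the minimal sharpness value explicitly and then exhibit a perfectly generalizing network that attains it. Since we evaluate at an interpolating minimizer, the residual term in the Hessian of the MSE loss vanishes, so the sharpness reduces to the Gauss--Newton form $\Trace(\nabla^2 L(\theta)) = \frac{2}{n}\sum_{i=1}^n \norm{\nabla_\theta \modelbias_\theta(x_i)}^2$ (using the convention $\mseloss(u,y)=(u-y)^2$). Writing $a_j(x) = \ReLU(\weight_{1,j}x + \bias_{1,j})$ and $\sigma_j(x) = \one[\weight_{1,j}x + \bias_{1,j} > 0]$, and using $\norm{x_i}^2 = d$ on the hypercube, I would expand this into a sum of per-neuron contributions $\frac{2}{n}\sum_{j}\big(A_j + (d+1)\,\weight_{2,j}^2 n_j\big)$, where $A_j = \sum_i a_j(x_i)^2$ and $n_j = \sum_i \sigma_j(x_i)$. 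The crucial structural fact is positive-homogeneity of ReLU: rescaling $(\weight_{1,j}, \bias_{1,j}, \weight_{2,j}) \mapsto (c_j \weight_{1,j}, c_j \bias_{1,j}, \weight_{2,j}/c_j)$ leaves $\modelbias_\theta$ (hence interpolation and generalization) unchanged while rescaling the two parts of each neuron's contribution inversely, so each neuron may be balanced independently.

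Next I would derive a clean lower bound that both the memorizing solution of \Cref{thm:sharpness_counter_example} and my generalizing solution meet with equality. Per neuron, AM--GM gives $A_j + (d+1)\weight_{2,j}^2 n_j \ge 2\sqrt{d+1}\,\abs{\weight_{2,j}}\sqrt{n_j A_j}$; Cauchy--Schwarz applied to $(a_j(x_i))_i$ and $(\sigma_j(x_i))_i$ gives $\sqrt{n_j A_j} \ge \sum_i a_j(x_i)$; and summing over $j$ and invoking the triangle inequality together with $\modelbias_\theta(x_i) = \sum_j \weight_{2,j} a_j(x_i)$ yields $\sum_j\big(A_j + (d+1)\weight_{2,j}^2 n_j\big) \ge 2\sqrt{d+1}\sum_i \abs{\modelbias_\theta(x_i)} = 2\sqrt{d+1}\sum_i \abs{y_i} = 2 n \sqrt{d+1}$. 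Hence every interpolating model has sharpness at least $4\sqrt{d+1}$; combined with the construction below that attains it, this identifies the minimum as exactly $4\sqrt{d+1}$, consistent with \Cref{thm:sharpness_counter_example}.

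Finally I would construct the generalizing minimizer. Using the identity $x[1]x[2] = \tfrac{1}{2}\big(\ReLU(x[1]+x[2]) + \ReLU(-x[1]-x[2]) - \ReLU(x[1]-x[2]) - \ReLU(-x[1]+x[2])\big)$, valid for all $x \in \{-1,1\}^d$, I place these four ``parity'' neurons (with $\abs{\weight_{2,j}} = 1/2$ and zero bias) and fill the remaining $n-4$ neurons with dead units ($\weight_{1,j}=0$, $\bias_{1,j}<0$, $\weight_{2,j}=0$) that contribute nothing. Because $\modelbias_\theta(x) = x[1]x[2]$ on the entire hypercube, the model interpolates and has zero zero-one generalization error. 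To see it is flattest I verify the equality conditions: each vertex activates exactly one of the four parity neurons (with $a_j = 2$ constant on its active set, so Cauchy--Schwarz is tight), and on that vertex $\weight_{2,j}a_j = x[1]x[2] = y$ carries the full output with a single sign (so the triangle inequality is tight); after balancing each neuron by the rescaling above, AM--GM is tight as well. Thus the construction attains $4\sqrt{d+1}$ and is a global sharpness minimizer.

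The main obstacle is getting the representation \emph{exactly} right so that it meets the lower bound: the ``obvious'' three-neuron representation $x[1]x[2] = 2\ReLU(x[1]+x[2]-1)+2\ReLU(-x[1]-x[2]-1)-1$ uses a constant neuron active on every point and is provably sharper (its sharpness works out to $8\sqrt{d+1}$), so the symmetric four-neuron decomposition with no constant term is essential. A secondary technical point is that this decomposition places activation boundaries exactly on hypercube vertices, so $\modelbias_\theta(x_i)$ is not classically twice differentiable in $\theta$; I would handle this with the standard convention $\ReLU'(0)=0$ (equivalently, the a.e.\ Gauss--Newton definition of $\Trace(\nabla^2 L)$ used throughout), under which boundary neurons count as inactive and the computation above is exact.
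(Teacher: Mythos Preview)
Your proposal is correct and lands on the same four-neuron parity construction and the same minimal sharpness value $4\sqrt{d+1}$ as the paper. The route differs in two small but notable ways.

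First, your lower bound goes per neuron (AM--GM, then Cauchy--Schwarz on $(a_j(x_i))_i$ against $(\sigma_j(x_i))_i$, then a triangle inequality across neurons). The paper instead bounds per sample: writing $u_i = \weight_2 \odot \one[\weight_1' x_i' > 0]$ and $v_i = \ReLU(\weight_1' x_i')$, one has directly
\[
\|\nabla_\theta \modelbias_\theta(x_i)\|^2 = \|u_i\|^2\,\|x_i'\|^2 + \|v_i\|^2 \;\ge\; 2\,\|x_i'\|\,\bigl|u_i^\top v_i\bigr| \;=\; 2\,\|x_i'\|\,|y_i|,
\]
which gives the bound in one stroke without splitting into neurons or invoking the triangle inequality; the equality condition is simply that $u_i$ and $v_i$ be parallel with $\|x_i'\|\,\|u_i\| = \|v_i\|$, which your (rescaled) construction also satisfies.

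Second, the paper's construction uses a strictly negative bias: with $\weight_{1,k} = r\,[\,(\pm 1,\pm 1,0,\dots,0)\,]$ and $\bias_1[k] = -r$, the pre-activation at any non-target pattern is $-r$ or $-3r$, never $0$. This sidesteps the differentiability issue you flag at the end: no training point sits on a ReLU kink, so $\nabla^2 L$ exists classically and no convention on $\ReLU'(0)$ is needed. Your zero-bias variant is fine under the convention you state, but shifting the bias down is the cleaner fix and costs nothing.
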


The flat solution constructed is highly simple. It contains four activated neurons, each corresponding to one feature in $\pm x[1] \pm x[2]$ (\Cref{eq:good_construction}).

\textbf{Proof sketch.} For simplicity, we will consider~\mlpbias~here.
The construction of the memorizing solution in~\Cref{thm:sharpness_counter_example} is as follows (visualized in~\Cref{fig:memorization_illustration}). As the input data points form a set of extreme points (\Cref{def:extreme_points}), for each input data point $x_i$, there exists a vector 
$\|w_i\| = 1, w_i \in \R^d$, such that $\forall j \neq i, w_i^\top x_i > w_i^\top x_j$. We can then choose 
\begin{align*}
    \weight_1 = [ \sqrt{r_i|y_i|} w_i / \epsilon]_i^\top, 
    \bias_1 = [ \sqrt{r_i |y_i|} \left(-w_i^\top x_i + \epsilon\right)/ \epsilon]^\top, 
    \weight_2 = [\sgn(y_i)\sqrt{|y_i| / r_i} ]_i.
\end{align*}
Here $r_i = (\|x_i\|^2 + 1)^{1/2}$ and $\epsilon$ is a sufficiently small positive number. Then it holds that $\ReLU(\weight_1 x_i + \bias_1) = \sqrt{r_i |y_i|} e_i$, where $e_i$ is the $i-$th coordinate vector. This shows there is a one-to-one correspondence between the input data and the neurons. It is easy to verify that the model interpolates the training dataset. Furthermore, for $\distr$ and  sufficiently small $\epsilon$, for any input $x \not \in \{x_i\}_{i \in [n]}$, it holds that $\ReLU(\weight_1 x + \bias_1) = 0$. Hence the model will output the same label $0$ for all the data points outside the training set. This indicates~\Cref{proposition:sharpness_bad_cube}.

\begin{figure}
    \centering
    \begin{minipage}[c]{0.45\textwidth}
        \centering
        \includegraphics[width=1.1\textwidth]{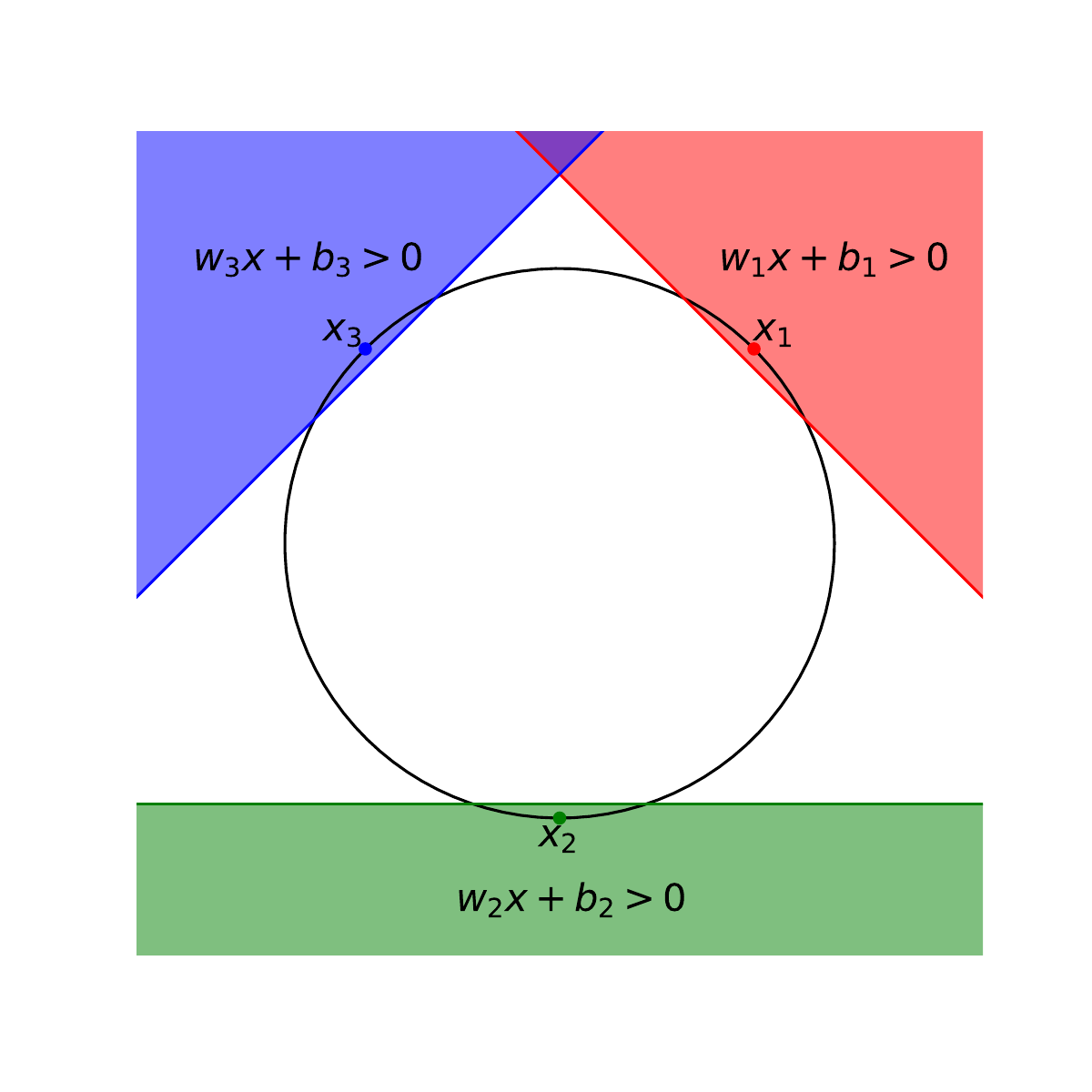}
    \end{minipage}
    \quad
    \begin{minipage}[c]{0.50\textwidth}
       \centering
       \caption{\textbf{Visualization of Memorization Solutions.} This is an illustration of the memorizing solutions constructed in~\Cref{thm:sharpness_counter_example}. Here the input data points come from a unit circle and are marked as dots. The shady area with the corresponding color represents the region where the corresponding neuron is activated. One can see that the network can output the correct label for each input data point in the training set as long as the weight vector on the corresponding neuron is properly chosen. Further, the network will make the same prediction $0$ for all the input data points outside the shady area and this volume can be made almost as large as the support of the training set by choosing $\epsilon$ sufficiently small. Hence the model can interpolate the training set while generalizing poorly.}
       \label{fig:memorization_illustration} 
   \end{minipage}
\vspace{-0.4cm}
\end{figure}

To show the memorization solution has minimal 
sharpness, we need the following lemma that relates the sharpness and the Jacobian of the model.
\begin{lemma}
\label{lemma:sharpness_jacobian}
For mean squared error loss $l_{mse}$, if model $f_\para$ is differentiable and interpolates dataset $\{(x_i, y_i)\}_{i \in [n]}$, then
$\sharpness{\para} =  \frac{2}{n} \sum_{i=1}^n \| \nabla_{\para} \model_\para(\feature_i) \|^2$. 
\vspace{-0.2cm}
\end{lemma}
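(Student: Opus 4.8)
The plan is to compute the Hessian of the mean squared error loss directly via the chain rule and then exploit the interpolation hypothesis to cancel every term that carries second-order information about the model. Writing $r_i(\para) = \model_\para(\feature_i) - \target_i$ for the residual at the $i$-th sample, the loss decomposes as $\Loss(\para) = \frac{1}{n}\sum_{i=1}^n r_i(\para)^2$ (the factor of $2$ in the claimed identity pins down the normalization $\mseloss(a,b) = (a-b)^2$). Differentiating once gives $\nabla_\para \Loss(\para) = \frac{2}{n}\sum_{i=1}^n r_i(\para)\, \nabla_\para \model_\para(\feature_i)$, and differentiating a second time produces the standard Gauss--Newton decomposition
\[
\nabla^2_\para \Loss(\para) = \frac{2}{n}\sum_{i=1}^n \Big( \nabla_\para \model_\para(\feature_i)\, \nabla_\para \model_\para(\feature_i)^\top + r_i(\para)\, \nabla^2_\para \model_\para(\feature_i)\Big).
\]

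First I would take the trace of both sides, using linearity of the trace together with $\Trace(vv^\top) = \|v\|^2$ for any vector $v$, to obtain
\[
\sharpness{\para} = \frac{2}{n}\sum_{i=1}^n \Big( \|\nabla_\para \model_\para(\feature_i)\|^2 + r_i(\para)\, \Trace\big(\nabla^2_\para \model_\para(\feature_i)\big)\Big).
\]
Then I would invoke the interpolation hypothesis: since $\model_\para(\feature_i) = \target_i$ for every $i$, each residual $r_i(\para)$ is exactly $0$, so the entire second family of terms — precisely the terms involving $\nabla^2_\para \model_\para$ — vanishes, leaving $\sharpness{\para} = \frac{2}{n}\sum_{i=1}^n \|\nabla_\para \model_\para(\feature_i)\|^2$, as claimed.

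The only genuine subtlety is regularity: forming $\nabla^2_\para \model_\para$ presupposes that the model is twice differentiable in $\para$, whereas the statement only assumes differentiability and the architectures of interest use ReLU. I expect this to be the point needing care rather than a real obstacle, precisely because the troublesome second-derivative term is multiplied by $r_i(\para) = 0$. I would make this rigorous by either (i) restricting attention to parameters $\para$ at which the relevant ReLU activation patterns are locally constant, so that $\model_\para$ is locally smooth in $\para$ and $\nabla^2_\para \Loss(\para)$ genuinely exists; or (ii) performing a direct second-order Taylor expansion of each once-differentiable function $r_i^2$ about its zero $\para$, which shows that its Hessian there equals the outer-product term $2\,\nabla_\para \model_\para(\feature_i)\,\nabla_\para \model_\para(\feature_i)^\top$ without requiring second derivatives of $\model_\para$. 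Either route confirms that wherever $\nabla^2_\para \Loss(\para)$ is defined at an interpolating parameter, the stated identity holds.
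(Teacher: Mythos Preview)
Your proof is correct and follows essentially the same route as the paper: expand the Hessian of $(\model_\para(\feature_i)-\target_i)^2$ via the chain rule into the outer-product term plus the residual-times-model-Hessian term, take traces, and kill the latter using interpolation. Your additional discussion of the regularity issue (the second derivative of $\model_\para$ for ReLU networks) is a reasonable caveat that the paper's proof simply glosses over.
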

\begin{proof}[Proof of~\Cref{lemma:sharpness_jacobian}]
By standard calculus, it holds that,
\begin{align}
    \sharpness{\para} &= \frac{1}{n} \sum_{i = 1}^n \Trace\left(\nabla^2_{\theta} \left[(\model_\para(\feature_i) - y_i)^2\right]\right) \notag \\
    &= \frac{2}{n} \sum_{i = 1}^n \Trace\left( \nabla^2_{\para} \model_\para(\feature_i) (\model_\para(\feature_i) - y_i) + \left(\nabla_{\para} \model_\para(\feature_i) \right) \left(\nabla_{\para} \model_\para(\feature_i) \right)^\top \right) \notag \\
    &= \frac{2}{n}\sum_{i = 1}^n \Trace\left( \left(\nabla_{\para} \model_\para(\feature_i) \right) \left(\nabla_{\para} \model_\para(\feature_i) \right)^\top\right) 
    = \frac{2}{n} \sum_{i = 1}^n \|\nabla_{\para} \model_\para(\feature_i) \|_2^2. \label{eq:lemma_jacobian}
\end{align}
The first equation in~\Cref{eq:lemma_jacobian} use $\forall i, f_{\theta}(x_i) = y_i$. The proof is then complete.
\end{proof}
After establishing~\Cref{lemma:sharpness_jacobian}, one can then explicitly calculate the lower bound of $ \| \nabla_{\para} \model_\para(\feature_i) \|^2$ condition on $\model_\para(\feature_i) = y_i$. For simplicity of writing, we will view the bias term as a part of the weight matrix by appending a $1$ to the input data point. Precisely, we will use notation $x_i' \in \R^{d + 1}$ to denote transformed input satisfying $\forall j \in [d], x_i'[j] = x_i[j], x_i'[d + 1] = 1$ and $\weight_1' = [\weight_1, \bias_1] \in \R^{m \times (d + 1)}$ to denote the transformed weight matrix.

By the chain rule, we have,
\begin{align}
    \| \nabla_{\para} \model_\para(\feature_i) \|^2 &= 
    \| \nabla_{\weight_1'} \model_\para(\feature_i) \|_{F}^2  +  \| \nabla_{\weight_2} \model_\para(\feature_i) \|_{F}^2 \notag \\
    &= \| (\weight_2 \odot \one\left[\weight_1' x_i' > 0\right]) x_i'^\top \|_{F}^2 + \| \ReLU\left(\weight_1' x_i'\right)\|_2^2.  \notag \\
    &= \|\weight_2 \odot \one\left[\weight_1' x_i' > 0\right]\|_{2}^2 \|x_i'\|^2 + \| \ReLU\left(\weight_1' x_i'\right)\|_2^2. 
\end{align}
Then by Cauchy-Schwarz inequality, we have
\begin{align}
    \| \nabla_{\para} \model_\para(\feature_i) \|^2 &= 
    \|\weight_2 \odot \one\left[\weight_1' x_i' > 0\right]\|_{2}^2 \|x_i'\|^2 + \| \ReLU\left(\weight_1' x_i'\right)\|_2^2 \notag \\
    &\ge 2 \|x_i'\| \left| \left(\weight_2 \odot \one\left[\weight_1 x_i > 0\right]\right)^\top \ReLU\left(\weight_1' x_i'\right) \right| = 2 \|x_i' \| |y_i|. \label{eq:lower_bound_jacobian}
\end{align}
In~\Cref{eq:lower_bound_jacobian}, we use condition $f_{\theta}(x_i) = y_i$. Finally, notice that the lower bound is reached when 
\begin{align}
\label{eq:optimal_weight}
    \weight_2 \odot \one\left[\weight_1' x_i' > 0\right] = \ReLU\left(\weight_1' x_i'\right)/\|x_i'\|.
\end{align}
Condition~\Cref{eq:optimal_weight} is clearly reached for the memorization construction we constructed, where both sides of the equation are equal to $\sqrt{|y_i|/\|x_i'\|} e_i$. This completes the proof of~\Cref{thm:sharpness_counter_example}.

However, the memorization network is not the only parameter that can reach the lower bound. For example, for distribution $\distr$, if parameter $\theta$ satisfies, 
\begin{align}
    &\forall i, j \in \{0,1\}, \weight_{1,2i + j + 1} = r [(-1)^i, (-1)^j,..., 0],  \bias_1[2i + j + 1] = -r, \weight_2[2i + j] = (-1)^{i + j}/r. \label{eq:good_construction}\\
    &\forall k > 4, \weight_{1, k} = [0,...,0], \bias_1[k] = 0, \weight_2[k] = 0, \notag
\end{align}
with $r = (d^2 + 1)^{1/4}$. then for any $x \in \{-1, 1\}^{d}$, it holds that 
$\ReLU(\weight_1 x + \bias_1) = r e_{ 5/2 - x[1] - x[2]/2}$ and $\model_{\para}(x) = x[1] \times x[2]$. Hence it is possible for~\Cref{eq:good_construction} to hold while the model has perfect generalization performance.

\subsection{SAM empirically finds the non-generalizing solutions}
\begin{figure}
    \centering
    \begin{subfigure}[t]{0.45\textwidth}
        \centering
        \includegraphics[width=\textwidth]{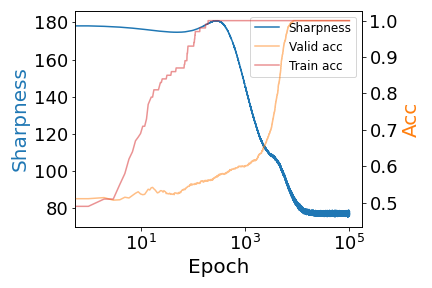}
        \caption{Baseline}
        \label{fig:classification_bias_baseline}
    \end{subfigure}
    \begin{subfigure}[t]{0.45\textwidth}
       \centering
       \includegraphics[width=\textwidth]{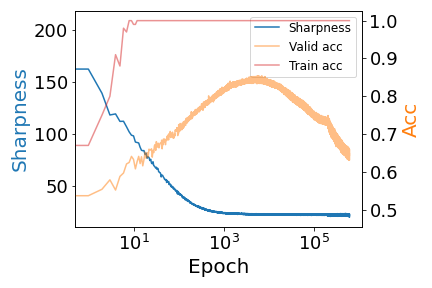}
       \caption{1-SAM}
       \label{fig:classification_bias_sam}
   \end{subfigure}
\caption{
\textbf{Scenario II.} We train a 2-layer MLP with ReLU activation with Bias using gradient descent with weight decay and 1-SAM on $\distr$ with dimension $d = 30$ and training set size $n = 100$. One can clearly observe a distinction between the two settings. The minimum reached by 1-SAM is  flatter but the model fails to generalize and the generalization performance even starts to degenerate after 4000 epochs. The difference between~\Cref{fig:classification_no_bias_sam,fig:classification_bias_sam} indicates a small change in the architecture can lead to a large change in the generalization performance.}
\label{fig:classification_bias}
\end{figure}

\begin{figure}
    \centering
    \begin{subfigure}[t]{0.45\textwidth}
        \centering
        \includegraphics[width=\textwidth]{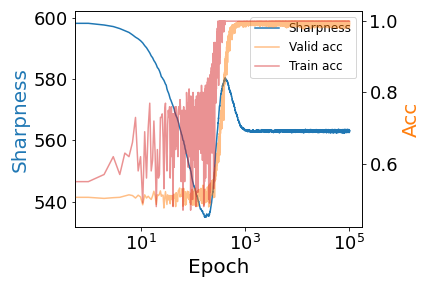}
        \caption{Baseline}
        \label{fig:classification_bias_softplus_baseline}
    \end{subfigure}
    \begin{subfigure}[t]{0.45\textwidth}
       \centering
       \includegraphics[width=\textwidth]{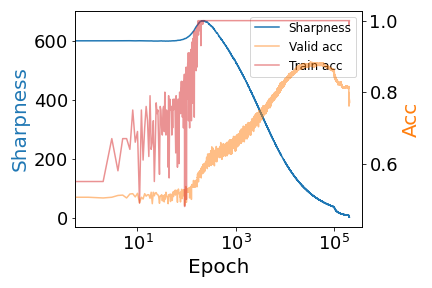}
       \caption{1-SAM}
       \label{fig:classification_bias_softplus_sam}
   \end{subfigure}
\caption{
\textbf{Scenario II with Softplus Activation.} We train a 2-layer MLP with Softplus activation ($\mathsf{SoftPlus}(x) = \log(1 + e^x)$) with bias using gradient descent with weight decay and 1-SAM on $\distr$ with dimension $d = 30$ and training set size $n = 100$. We observe a similar phenomenon as~\Cref{fig:classification_bias}. }
\label{fig:classification_bias_softplus}
\vspace{-0.1in}
\end{figure}

In this section, we will show that in multiple settings, SAM can find solutions that have low 
sharpness but fail to generalize compared to the baseline full batch gradient descent method with weight decay.
This proves that flat minimization can hurt generalization performance. However, one should note that~\Cref{hypo:work} is not denied for the current architectures.

\paragraph{Converged models found by SAM fail to generalize.}
We perform experiments on data distribution $\distr$ in~\Cref{fig:classification_bias}. We apply small learning rate gradient descent with weight decay as our baseline and observe that the converged model found by SAM has a much lower sharpness than the baseline. However, the generalization performance of SAM is much worse than the baseline.
Moreover, 
the generalization performance even starts to degenerate after 4000 epochs.
We conclude that in this scenario, sharpness minimization can empirically hurt generalization performance.

\paragraph{1-SAM may fail to generalize with other activation functions.} A natural question is whether the phenomenon that 1-SAM fails to generalize is limited to 
ReLU activation. In \Cref{fig:classification_bias_softplus}, we show empirically that 1-SAM fails to generalize for 2-layer networks with softplus activation trained on the same dataset, although there is no known guarantee for the existence of memorizing solutions.

\section{Scenario III: Both Flattest Generalizing and Non-generalizing Models Exist, and SAM Finds the Latter}
\label{sec:scenario3}
\subsection{Both generalizing and non-generalizing solutions can be flattest}
\label{sec:scenario3_theory}
Despite the surprising contrary between~\Cref{thm:sharpness_generalization_complexity,thm:sharpness_counter_example},
experiments show that~\Cref{hypo:work} consistently hold. However, we will provide a 
counterexample in this section. Specifically, we will consider data distribution $\distr$ and 2-layer ReLU MLP with simplified LayerNorm. 
One can first show both generalizing and non-generalizing solutions exist similar to~\Cref{thm:sharpness_counter_example,proposition:sharpness_bad_cube,thm:sharpness_good_cube}.

\begin{theorem}
\label{thm:sharpness_counter_example_ln}
If the input data points $\{x_i\}$ of the training set form a set of extreme points (\Cref{def:extreme_points}), for sufficiently small $\epsilon$, 
then there exists a  width-$n$ \mlpsimpleln~with hyperparameter $\epsilon$ that is a memorizing solution (\Cref{def:memorizing}) for the training dataset and has minimal sharpness over all the interpolating solutions. 
\end{theorem}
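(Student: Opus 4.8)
The plan is to mirror the proof of~\Cref{thm:sharpness_counter_example} verbatim at the top level: apply~\Cref{lemma:sharpness_jacobian} to write, for any twice-differentiable interpolating model, $\sharpness{\para} = \frac{2}{n}\sum_{i=1}^n \|\nabla_\para \modelln_\para(\feature_i)\|^2$, then establish a per-example lower bound $\|\nabla_\para \modelln_\para(\feature_i)\|^2 \ge 1$ and exhibit a memorizing construction meeting it with equality. Summing gives minimal sharpness exactly $2$, attained by a memorizing solution. Throughout I use the appended-feature convention of the earlier proof: $x_i' \in \R^{d+1}$ with last coordinate $1$, $\weight_1' = [\weight_1, \bias_1]$, $h_i = \ReLU(\weight_1' x_i')$, and $u_i = h_i/\|h_i\|$ whenever $\|h_i\| > \eps$.

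The structural observation that drives everything is that the simplified LayerNorm layer is \emph{scale-invariant} in $h_i$. In the unclipped regime $\|h_i\| > \eps$ we have $\modelln_\para(\feature_i) = \weight_2 u_i$, so $\nabla_{\weight_2}\modelln_\para(\feature_i) = u_i$ with $\|u_i\| = 1$ identically; hence $\|\nabla_\para \modelln_\para(\feature_i)\|^2 \ge \|\nabla_{\weight_2}\modelln_\para(\feature_i)\|^2 = 1$ for \emph{every} parameter, regardless of the label. In the clipped regime $\|h_i\| \le \eps$ the layer is $\weight_2 h_i/\eps$; interpolation gives $|(\weight_2\odot\one[\weight_1' x_i'>0])^\top h_i| = \eps|y_i|$, so Cauchy--Schwarz yields $\|\weight_2\odot\one[\weight_1' x_i'>0]\| \ge \eps|y_i|/\|h_i\|$, and combining the two gradient blocks with AM--GM gives $\|\nabla_\para \modelln_\para(\feature_i)\|^2 \ge \|h_i\|^2/\eps^2 + y_i^2\|x_i'\|^2/\|h_i\|^2 \ge 2|y_i|\,\|x_i'\|/\eps$. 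Since $\|x_i'\| \ge 1$, this exceeds $1$ once $\eps$ is small (for nonzero labels), so for sufficiently small $\eps$ the per-example bound $\ge 1$ holds in both regimes and $\sharpness{\para} \ge 2$. This is precisely where the ``sufficiently small $\eps$'' hypothesis is consumed: it forces the optimum into the scale-invariant regime and rules out any gain from pushing an example below the clipping threshold.

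For the matching construction I reuse the extreme-point separators of~\Cref{thm:sharpness_counter_example}: for each $i$ choose $w_i$ with $w_i^\top x_i > \max_{j\neq i} w_i^\top x_j$ and set the $i$-th row of $\weight_1'$ to fire strictly on $x_i$ only, scaled so that $h_i = c_i e_i$ with $c_i > \eps$, and put $\weight_2[i] = y_i$. Then $u_i = e_i$, so $\modelln_\para(\feature_i) = \weight_2[i] = y_i$ (interpolation, and one-hot activations as required by~\Cref{def:memorizing}), and $\|\nabla_{\weight_2}\modelln_\para(\feature_i)\|^2 = 1$. The first-layer block is $\nabla_{\weight_1'}\modelln_\para(\feature_i) = \frac{1}{\|h_i\|}\big[(I - u_i u_i^\top)\weight_2^\top \odot \one[\weight_1' x_i' > 0]\big] x_i'^\top$; since $u_i = e_i$ and the activation pattern $\one[\weight_1' x_i' > 0] = e_i$ is one-hot, the projector $I - u_i u_i^\top$ annihilates exactly the single surviving coordinate, so this block is identically $\vzero$. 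Hence $\|\nabla_\para \modelln_\para(\feature_i)\|^2 = 1$ for every $i$ and $\sharpness{\para} = 2$, matching the lower bound.

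The main obstacle is the clipped-regime analysis and the attendant quantification of ``sufficiently small $\eps$'': the normalized-regime bound and the construction are essentially immediate once the scale-invariance and the one-hot cancellation are spotted, but certifying that no interpolating solution benefits from clipping some example requires the AM--GM estimate above and a threshold on $\eps$ relative to $\min_i |y_i|\,\|x_i'\|$. I also need to be careful with differentiability, avoiding both the ReLU kinks and the $\max\{\cdot,\eps\}$ kink; for the construction this is arranged by strict activations and $c_i > \eps$, and for the lower bound by restricting to parameters where the Hessian exists (the same regime in which~\Cref{lemma:sharpness_jacobian} is invoked). Finally, it is worth emphasizing that, unlike the approximate-minimizer phenomenon noted for \mlpsimplebn, the first-layer gradient here vanishes \emph{exactly} for the one-hot memorizing pattern, so minimality is genuinely attained rather than merely approached in a large-norm limit.
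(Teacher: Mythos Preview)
Your proposal is correct and follows essentially the same approach as the paper: both prove the per-example lower bound by splitting on whether $\|h_i\|>\eps$ (using $\|\nabla_{W_2}f\|^2=1$) or $\|h_i\|\le\eps$ (using Cauchy--Schwarz/AM--GM to get $2|y_i|\|x_i'\|/\eps$), and both exhibit the same extreme-point memorizing construction with one-hot activations in the unclipped regime, where the projector $I-u_iu_i^\top$ kills the sole active coordinate and forces $\nabla_{W_1'}f=0$ exactly. The only difference is presentational: the paper first states the lower bound as $\tfrac{2}{n}\sum_i\min(1,2|y_i|\|x_i'\|/\eps)$ for arbitrary $\eps$ and gives a three-case construction matching it, whereas you invoke the small-$\eps$ hypothesis immediately to collapse the bound to $2$ and need only the simplest (unclipped) case of the construction---which is exactly what the paper's construction reduces to once $\eps<\min_i\|x_i'\||y_i|$.
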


\begin{proposition}
\label{proposition:sharpness_bad_cube_ln}
For data distribution $\distr$, for sufficiently small $\epsilon$,  for any number of samples $n$, there exists a width-$n$ \mlpsimpleln~with hyperparameter $\epsilon$ that memorizes the training 
set as in~\Cref{thm:sharpness_counter_example}, reaches minimal sharpness over all the interpolating models and has generalization error 
$\max\{1 - n/2^d, 0\}$ measured by zero one error.
\end{proposition}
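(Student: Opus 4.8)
The plan is to treat this as the simplified-LayerNorm counterpart of \Cref{proposition:sharpness_bad_cube}: I would reuse the geometric memorization construction, verify it remains a minimal-sharpness interpolator for \mlpsimpleln, and then read off the generalization error by checking what the network outputs on the unseen hypercube points. The first step is to confirm that the hypothesis of \Cref{thm:sharpness_counter_example_ln} is met for $\distr$. Since $\{-1,1\}^d$ is exactly the vertex set of the cube $[-1,1]^d$, any collection of distinct training inputs is a set of extreme points (\Cref{def:extreme_points}): each $x_i$ is a vertex of the convex hull of the training inputs because the whole cube's vertices are extremal. Hence \Cref{thm:sharpness_counter_example_ln} applies and supplies, for sufficiently small $\epsilon$, a width-$n$ \mlpsimpleln~memorizing solution $\para$ (\Cref{def:memorizing}) with minimal sharpness over all interpolating models.

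Next I would make the construction explicit so that it can be analyzed off the training set, following the sketch of \Cref{thm:sharpness_counter_example,proposition:sharpness_bad_cube}. For each $i$ pick a unit vector $w_i$ with $w_i^\top x_i > w_i^\top x_j$ for all $j \ne i$ (possible precisely because the inputs are extreme points), and choose $\weight_1, \bias_1$ with a separation margin so small that $\ReLU(\weight_1 x_i + \bias_1)$ is a positive multiple of $e_i$, while $\ReLU(\weight_1 x + \bias_1) = 0$ for every other cube point $x \in \{-1,1\}^d \setminus \{x_i\}$. The only new ingredient relative to the bias case is the normalization layer. On a training point the pre-normalization activation is a nonzero one-hot vector $c_i e_i$; taking $\epsilon$ below $\min_i c_i$ makes the denominator $\max\{c_i,\epsilon\} = c_i$, so the normalized activation is exactly $e_i$, and setting $\weight_2[i] = y_i$ gives $\modelln_\para(x_i) = y_i$, i.e. interpolation. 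Minimal sharpness is inherited directly from \Cref{thm:sharpness_counter_example_ln} and is consistent with the Jacobian identity of \Cref{lemma:sharpness_jacobian}.

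The crux of the generalization claim is the behavior on unseen cube points. For any $x \in \{-1,1\}^d$ that is not a training input, the construction forces $\ReLU(\weight_1 x + \bias_1) = 0$, and the normalization maps this to $0 / \max\{0,\epsilon\} = 0$, so $\modelln_\para(x) = \weight_2 \cdot 0 = 0$. Thus the network classifies all $n$ training points correctly and outputs $0$ on each of the remaining $2^d - n$ cube points; under the convention $\Pr[y\, \modelln_\para(x) > 0]$ for accuracy, a zero output is never counted as correct since $y \in \{-1,1\}$. Because $\distr$ is uniform on $\{-1,1\}^d$, the zero-one error is exactly $(2^d - n)/2^d = \max\{1 - n/2^d, 0\}$, where the outer maximum absorbs the degenerate regime in which fewer than $n$ distinct inputs exist.

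The step I expect to be the main obstacle is the simultaneous calibration of $\epsilon$: it must be small enough both for \Cref{thm:sharpness_counter_example_ln} to deliver a minimal-sharpness memorizer and for the separation margin to force zero activation on all off-training cube points, yet the training-point activation norms $c_i$ must stay strictly above $\epsilon$ so that the thresholded denominator produces exactly $e_i$ and interpolation is preserved. Verifying that a single choice of $\epsilon$ reconciles these constraints — essentially bookkeeping the interaction between the margin and the $\max\{\cdot,\epsilon\}$ denominator — is the only genuinely new work beyond \Cref{proposition:sharpness_bad_cube}; once it is settled, the error count is identical to the bias case.
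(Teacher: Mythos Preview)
Your proposal is correct and follows essentially the same route as the paper: invoke \Cref{thm:sharpness_counter_example_ln} for the memorizing minimal-sharpness solution, then observe that $\ReLU(\weight_1 x + \bias_1) = 0$ on every unseen cube point so the normalized output is $0$ and the zero-one error is $1 - n/2^d$. The only point where the paper is cleaner is the ``calibration'' step you flag as the main obstacle: rather than taking $\epsilon$ below $\min_i c_i$, the paper introduces a separate margin parameter $\delta$ and scales the first layer so that the pre-normalization activation on $x_i$ is exactly $2\epsilon\, e_i$; this makes the activation norm $2\epsilon > \epsilon$ automatically for every $\epsilon$, decoupling the LayerNorm threshold from the separation margin and dissolving the tension you anticipated.
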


\begin{proposition}
\label{thm:sharpness_good_cube_ln}
For data distribution $\distr$,  for sufficiently small $\epsilon$, for any number of samples $n$, there exists a width-$n$ \mlpsimpleln~with hyperparameter $\epsilon$ that interpolates the training dataset,
 reaches minimal sharpness over all the interpolating models, and has zero generalization error measured by zero one error. 
\end{proposition}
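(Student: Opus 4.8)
The plan is to mirror the proof of \Cref{thm:sharpness_good_cube} for \mlpbias, exhibiting the LayerNorm analogue of the generalizing construction in \Cref{eq:good_construction}: a width-$n$ \mlpsimpleln\ with four active neurons that extract the features $\pm x[1]\pm x[2]$, showing it realizes $x[1]x[2]$ on all of $\{-1,1\}^d$, and showing it attains exactly the same sharpness as the memorizing solution of \Cref{thm:sharpness_counter_example_ln} (which is minimal). Concretely, for $i,j\in\{0,1\}$ I would set $\weight_{1,2i+j+1}=[(-1)^i,(-1)^j,0,\dots,0]$, $\bias_1[2i+j+1]=-1$, and $\weight_2[2i+j+1]=(-1)^{i+j}$, padding the remaining $n-4$ neurons with zeros. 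The preactivation of neuron $2i+j+1$ is then $(-1)^i x[1]+(-1)^j x[2]-1$, which equals $1$ when $(x[1],x[2])=((-1)^i,(-1)^j)$ and is at most $-1$ otherwise; hence for every $x\in\{-1,1\}^d$ exactly one neuron fires, $h(x):=\ReLU(\weight_1 x+\bias_1)=e_k$ for the matching index $k$, and after normalization $\modelln_\para(x)=\weight_2[k]=(-1)^{i+j}=x[1]x[2]$. For $\epsilon<1$ the normalization floor is inactive since $\|h(x)\|=1\ge\epsilon$, so the model interpolates the training set and, because the identity holds on the whole cube, has zero generalization error under the zero-one loss.

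It remains to show this construction reaches minimal sharpness. By \Cref{lemma:sharpness_jacobian} the sharpness of any interpolating model equals $\frac{2}{n}\sum_i \|\nabla_\para \modelln_\para(x_i)\|^2$, so it suffices to show every training point contributes exactly $\|\nabla_\para \modelln_\para(x_i)\|^2=1$, matching the memorizing solution. The key structural fact, specific to LayerNorm, is that whenever $h(x)$ is one-hot, $h(x)=\alpha e_k$ with $\alpha\ge\epsilon$, the first-layer Jacobian vanishes: writing $g=h/\|h\|=e_k$, only the firing neuron $k$ has nonzero ReLU derivative, and its contribution to $\nabla_{(\weight_1,\bias_1)}\modelln_\para(x)$ is proportional to the $k$-th column of the normalization Jacobian $\frac{1}{\|h\|}(I-g g^\top)$, namely $\frac{1}{\|h\|}(e_k-g)=0$. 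The second-layer gradient is $\nabla_{\weight_2}\modelln_\para(x)=g=e_k$, contributing $\|g\|^2=1$, so $\|\nabla_\para \modelln_\para(x)\|^2=1$ at every training point. Both our construction and the memorizing solution of \Cref{thm:sharpness_counter_example_ln} have one-hot activations at every training point, so both have per-point Jacobian norm $1$ and identical sharpness; since the memorizing solution is minimal, so is ours, which completes the proof.

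The main obstacle is this Jacobian computation through the LayerNorm map. Unlike the plain-bias case (\Cref{thm:sharpness_counter_example}), where the minimal first-layer Jacobian is strictly positive and the lower bound comes from the Cauchy--Schwarz step in \Cref{eq:lower_bound_jacobian}, here the normalization can annihilate the first-layer Jacobian exactly, and one must verify the cancellation carefully --- in particular that the single active direction coincides with the normalization direction $g$ and is killed by the projection $I-g g^\top$. A secondary technical point, which is what ``sufficiently small $\epsilon$'' buys, is ruling out cheaper interpolators that live below the normalization floor: for any point with $\|h(x_i)\|<\epsilon$ the map is linear in $h$ and a Cauchy--Schwarz argument analogous to \Cref{eq:lower_bound_jacobian} gives $\|\nabla_\para \modelln_\para(x_i)\|^2\ge 2\|x_i'\|\,|y_i|/\epsilon$, where $x_i'=[x_i,1]$; since $\|x_i'\|=\sqrt{d+1}$ and $|y_i|=1$, this exceeds $1$ once $\epsilon$ is small, so the minimum is attained in the regime where our construction lives.
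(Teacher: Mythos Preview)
Your proposal is correct and follows the same approach as the paper: construct the four-neuron generalizing solution, observe that one-hot activations make the first-layer Jacobian vanish through the projection $I-gg^\top$ so each training point contributes exactly $1$, and invoke the sharpness lower bound (packaged in the paper as \Cref{lem:lower_bound_sharpness_ln}, yielding minimal sharpness $2$ when $\epsilon<\sqrt{d+1}$). The paper's construction differs only cosmetically, scaling the first-layer weights by $2\epsilon$ rather than $1$ so that $\|h(x)\|=2\epsilon>\epsilon$ holds automatically rather than requiring $\epsilon<1$, but both work by the scale-invariance of the normalization.
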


The construction and intuition behind~\Cref{thm:sharpness_counter_example_ln,proposition:sharpness_bad_cube_ln,thm:sharpness_good_cube_ln} are similar to that of~\Cref{thm:sharpness_counter_example,proposition:sharpness_bad_cube,thm:sharpness_good_cube}. The 
proof is deferred to~\Cref{sec:appendix_theory}.
\vspace{-0.05in}
\subsection{SAM empirically finds generalizing models}
\begin{figure}
    \centering
    \begin{subfigure}[t]{0.45\textwidth}
        \centering
        \includegraphics[width=\textwidth]{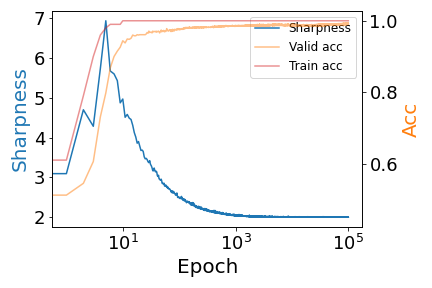}
        \caption{Standard Training}
        \label{fig:classification_simple_ln_bias}
    \end{subfigure}
    \begin{subfigure}[t]
        {0.45\textwidth}
        \centering
        \includegraphics[width=\textwidth]{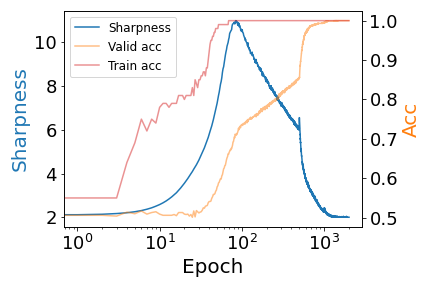}
        \caption{Projected Training}
        \label{fig:classification_simple_ln_bias_norm_bounded}
    \end{subfigure}
    \caption{\textbf{Scenario III.} We train two-layer ReLU networks with simplified LayerNorm on data distribution $\distr$ with dimension $d = 30$ and sample complexity $n = 100$ using 1-SAM. In~\Cref{fig:classification_simple_ln_bias}, we use standard training.
    In~\Cref{fig:classification_simple_ln_bias_norm_bounded}, we restricted the norm of the weight and the bias of the first layer as $10$, to avoid minimizing the sharpness by simply increasing the norm. We can see that in both cases, the models almost perfectly generalize.}
\label{fig:simple_ln_whole}
\vspace{-0.3cm}
\end{figure}

\vspace{-0.05in}
Notice in~\Cref{sec:scenario3_theory} our theory makes the same prediction as in~\Cref{sec:scenario2}. However, strikingly, the experimental observation is reversed (\Cref{fig:simple_ln_whole}). 
Now running SAM can greatly improve the generalization performance till the model perfectly generalizes. 
This directly denies~\Cref{hypo:work} as now we have a scenario in which sharpness minimization algorithms can improve generalization till perfect generalization while there exists a flattest minimizer that will generalize poorly.

\section{Discussion and Conclusion}

We present theoretical and empirical evidence for  (1) whether sharpness minimization implies generalization subtly depends on the choice of architectures and data distributions, and
(2) sharpness minimization algorithms including SAM may still improve generalization even when there exist flattest models that generalize poorly. Our results suggest that low sharpness may not be the only cause of the generalization benefit of sharpness minimization algorithms.

\paragraph{Limitations and future work.} Our results only cover a small subset of existing architectures. A natural extension for our work will be to examine whether flatness implies generalization for deep networks without the bias terms or if the flattest memorizing models exist for such architecture.

In our work, we assume 1-SAM always finds a valid global minimizer for sharpness. However, in previous works, only a local tendency of decreasing sharpness is proven. In all our experiments where the sharpness lower bound can be exactly characterized, we observe that the converged model found by 1-SAM always approximately reaches the lower bound. A possible future direction is characterizing under what condition can 1-SAM be used as a global optimizer for sharpness over minimizers.

Our work also suggests that there does not exist a universal generalization theory for neural networks only based on sharpness. A broader problem would be what other properties of the model can be used to explain the generalization of neural networks.

\section*{ACKNOWLEDGEMENTS}
The authors would like to thank the support from NSF IIS 2045685.

\bibliography{all,ours}
\bibliographystyle{iclr2023_conference}

\appendix
\newpage {
\hypersetup{linkcolor=black}
\tableofcontents
}
\newpage
\section{Omitted Proofs}
\label{sec:appendix_theory}
We will define $\argmin_{\theta}(f(\theta))$ as the set of the minimizers $f$. When $f$ has a unique minimizer, we also overload the definition to refer to that element. We will use $\mathbf{1}$ to denote the indicator function.
\subsection{Formal results for~\mlpsimplebn}
\label{sec:appendix_theory_mlp_bn}

\subsubsection{Proof of~\Cref{thm:sharpness_generalization_feature}}

We will first prove a lemma showing that the model with a \emph{sparse} second layer weight in L1 norm will satisfy the conclusion of~\Cref{thm:sharpness_generalization_feature}.

\begin{lemma}
\label{lem:sharpness_generalization_feature}
Given any training set$\{(x_i,y_i)\}_{i=1}^n$ satisfying~\Cref{condition:complete_training}, for any width $m$ and any $\epsilon > 0$, for any width-$m$ \mlpsimplebn~,$\modelbn$ and parameter $\theta$ satisfying  $\modelbn_\para$
interpolates the training set and $ \| \gamma \odot \weight_2 \|_1 =   \inf_{\Loss(\para') = 0} \| \gamma' \odot \weight_2' \|_1$, 
then it holds that $\forall x \in \{-1,1\}^d, \big |x[1]x[2] - f_{\para}(x) \big| = 0$ and $\forall i \in [m], \| \weight_{1,i}[3:d]\|_2 = 0$.
\end{lemma}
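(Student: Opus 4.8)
The plan is to recast the $L^1$ objective $\|\gamma\odot\weight_2\|_1$ as a geometric quantity on the neuron activations, prove a \emph{tight} lower bound on it through a mixed‑second‑difference functional, and then read off the rigidity of the minimizers from the equality conditions. Throughout, write $a(x)[j]=\ReLU(\weight_{1,j}^\top x+\bias_1[j])$ for the $j$-th activation and $\sigma_j=(\tfrac1n\sum_i a(x_i)[j]^2)^{1/2}$ for its training‑set root‑mean‑square. The first step is to remove the scale freedom of $\simplebn$: the network computes $f(x)=\sum_j c_j\,a(x)[j]$ with effective weights $c_j:=\gamma[j]\weight_2[j]/\sigma_j$, and the key identity $\|\gamma\odot\weight_2\|_1=\sum_j|c_j|\sigma_j$ expresses the objective as a sum of root‑mean‑square norms of the individual neuron contributions. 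Minimizing it subject to interpolation is thus a group‑lasso‑type representation cost over all signed \ReLU\ features whose sum reproduces $y=x[1]x[2]$ on the training set.

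Next I would exploit the product structure guaranteed by \Cref{condition:complete_training}: the training set is $\{-1,1\}^2\times S$ with label $x[1]x[2]$. For each $s\in S$ define the mixed second difference of neuron $j$ across the four corner activations $a(\pm1,\pm1,s)[j]$, namely $\Delta_j(s)=\tfrac14\big(a_{++}-a_{+-}-a_{-+}+a_{--}\big)$, and $\mathrm{RMS}_j(s)=(\tfrac14\sum_{u,w}a(u,w,s)[j]^2)^{1/2}$. Because $f$ interpolates and all four corners over each $s$ lie in the training set, $\sum_j c_j\Delta_j(s)=1$ for every $s$ (the mixed difference of $x[1]x[2]$ is $1$), so averaging over $S$ gives $\sum_j c_j\bar\Delta_j=1$ with $\bar\Delta_j=\tfrac1{|S|}\sum_s\Delta_j(s)$. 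The central estimate is the per‑neuron inequality $|\Delta_j(s)|\le\tfrac12\,\mathrm{RMS}_j(s)$: its proof is the geometric fact that the positive region of a single \ReLU\ is a half‑plane intersected with the square, which can never be exactly a diagonal pair $\{++,--\}$ or $\{+-,-+\}$, so the extremal configuration is firing at a single corner. Chaining this with Cauchy–Schwarz/Jensen across $s$ yields $|\bar\Delta_j|\le\tfrac12\sigma_j$, hence $1=\sum_j c_j\bar\Delta_j\le\tfrac12\sum_j|c_j|\sigma_j=\tfrac12\|\gamma\odot\weight_2\|_1$, i.e.\ $\|\gamma\odot\weight_2\|_1\ge 2$. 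The four‑neuron construction of \Cref{eq:good_construction} attains $2$, so the infimum is exactly $2$ and the given minimizer saturates every inequality above.

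For the rigidity step I would convert saturation into pointwise structure on each active neuron ($c_j\neq0$): (i) $|\Delta_j(s)|=\tfrac12\mathrm{RMS}_j(s)$ for all $s$, forcing the neuron to fire at exactly one corner of the $(x[1],x[2])$-square with the other three inactive; (ii) equality in the mean‑versus‑RMS step forces the firing value to be constant over $s$; and (iii) the sign alignment forces the firing corner onto a fixed diagonal. I would then exclude a firing corner that moves with $s$: writing $(v_1,v_2)=(\weight_{1,j}[1],\weight_{1,j}[2])$, if the neuron fired at $(+,+)$ for one $s$ and at $(-,-)$ for another, comparing the firing value with the ``inactive at the opposite corner'' constraint yields $v_1+v_2>0$ and $v_1+v_2<0$, a contradiction. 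Hence each active neuron fires at the same corner for all $s$ with a constant value, which makes $\weight_{1,j}[3:d]^\top s$ constant over $S$; since $S-S$ spans $\R^{d-2}$ this gives $\weight_{1,j}[3:d]=0$. With every active neuron depending only on $x[1],x[2]$ and matching $x[1]x[2]$ at the four corners, $f$ equals $x[1]x[2]$ on the whole cube, delivering both conclusions (neurons with $c_j=0$ carry no cost and may be taken to have zero first‑layer weight).

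I expect the main obstacle to be the tight per‑neuron bound $|\Delta_j(s)|\le\tfrac12\mathrm{RMS}_j(s)$ together with its sharp equality case: the naive Cauchy–Schwarz estimate is off by a factor of two, and recovering the correct constant genuinely requires the \ReLU\ half‑plane structure (a line cannot separate one diagonal of the square from the other). The second delicate point is upgrading the equality conditions to $\weight_{1,j}[3:d]=0$ for every active neuron, where both the sign contradiction ruling out a moving firing corner and the spanning hypothesis of \Cref{condition:complete_training} are essential.
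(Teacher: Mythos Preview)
Your proposal is correct and follows essentially the same route as the paper: your per-neuron bound $|\Delta_j(s)|\le\tfrac12\,\mathrm{RMS}_j(s)$ is exactly the paper's \Cref{lem:relu_bn} (applied to the four corner pre-activations, which satisfy $a_{++}+a_{--}=a_{+-}+a_{-+}$), and the lower bound $\|\gamma\odot W_2\|_1\ge 2$ together with the rigidity from the equality cases and the spanning of $S-S$ is identical in substance. One small simplification available in the paper's version: instead of your sign-contradiction to show the firing corner cannot move with $s$, the paper notes (in a footnote) that the pairwise differences of the four corner pre-activations depend only on $x[1],x[2]$, so their ordering is the same for every $s$ and the active corner is automatically fixed.
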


\begin{proof}[Proof of~\Cref{lem:sharpness_generalization_feature}]
As the training set satisfies~\Cref{condition:complete_training}, there is an integer $r$ such that $n = 4r$ and we assume WLOG  for $ k \in {0,1,..,r - 1}$, $\{x_{4k + t}\}_{t \in \{0,1,2,3\}}$ has the same last $d - 2$ coordinates and the first two coordinates are $(1,1), (1,-1), (-1,1), (-1,-1)$ respectively.
Define $a_i \triangleq \weight_1 x_i + b_1$. Further define $a$ as 
$a[j] = \sqrt{\sum_{i \in [n]}\frac{1}{n} \relu(a_i[j])^2}$.
As the model interpolates the training set, we have 
\begin{align*}
    \sum_{j = 1}^m (\gamma \odot \weight_2)[j]  y_i \relu(a_i[j]) / a[j] = 1, \forall i \in [n].
\end{align*}

Summing the above $n$ equalities, we have that
\begin{align*}
    \sum_{j = 1}^m (\gamma \odot \weight_2)[j] \sum_{i = 1}^n y_i \relu(a_i[j]) / a[j] = n.
\end{align*}

This then implies 
\begin{align*}
    \sum_{j = 1}^m |(\gamma \odot \weight_2)[j]|  \ge \frac{n}{\max_j |\sum_{i = 1}^n y_i \relu(a_i[j]) / a[j]|}.
\end{align*}

However, we have by Cauchy-Schwarz inequality that for each $1\le j \le m$,
\begin{align*}
    |\sum_{i = 1}^n y_i \relu(a_i[j])| &\le \sqrt{r\left(\sum_{k = 0}^{r -1} \left(\sum_{t = 0}^3 y_{4k + t} \relu(a_{4k+t}[j])\right)^2 \right)}.
\end{align*}

Notice that $x_{4k + 1} + x_{4k+4}  = x_{4k + 2} + x_{4k + 3}$, it holds that $a_{4k + 1} + a_{4k+4} = 2\bias_1 = a_{4k + 2} + a_{4k + 3}$ and hence by~\Cref{lem:relu_bn}, it holds that
\begin{align*}
    \sqrt{r\left(\sum_{k = 0}^{r -1} \left(\sum_{t = 0}^3 y_{4k + t} \relu(a_{4k+t}[j])\right)^2 \right)} \le \sqrt{r \sum_{i = 1}^n \relu(a_i[j])^2} = \frac{n}{2} a[j].
\end{align*}

This then implies
\begin{align*}
    \sum_{j = 1}^m |(\gamma \odot \weight_2)[j]|  \ge \frac{n}{n/2} = 2.
\end{align*}

One can then show $2$ is the minimum of $\| \gamma \odot W_2\|_1$ over minimizers by choosing the weights as 
\begin{align}
    &\forall i, j \in \{0,1\}, \weight'_{1,2i + j+1} = [(-1)^i, (-1)^j,..., 0],  \bias'_1[2i + j+1] = -1, \weight'_2[2i + j] = \frac{1}{2}, \gamma[2i + j +1] = 1; \notag \\
    &\forall k > 4, \weight'_{1, k} = [0,...,0], \bias'_1[k] = 0, \weight'_2[k] = 0, \gamma[k] =1. \notag
\end{align}
The training loss is minimized and $\| \gamma \odot W_2\|_1 = 2$.

Hence $\| \gamma \odot \weight_2 \|_1 =   \inf_{\Loss(\para') = 0} \| \gamma' \odot \weight'_2 \|_1$ implies all the inequalities above must be equality. This implies $\forall j \in [m],| \sum_{i = 1}^n y_i \relu(a_i[j])| / a[j] = \frac{n}{2}$, which by~\Cref{lem:relu_bn} then implies the following condition: for any $j \in [m]$, there exists $t_j$ such that $a_{4k + t}[j] \le 0$ for $t \neq t_j$ and $a_{4k + t_j}[j]$ is a constant independent of $k$. \footnote{Notice that for any $k$, the order of $a_{4k + t}[j]$ is the same.}

This implies for any $s_1, s_2 \in S$ with $S$ defined in~\Cref{condition:complete_training}, it holds that $ \forall i \in [m], \weight_{1,i}[3:d]\left(s_1 - s_2\right) = 0$. As the linear space spanned by $S - S$ has rank $d -2$, this implies that $\forall i \in [m], \| \weight_{1,i}[3:d]\|_2 = 0$. As the model predict correctly over the training set and does not use the last $d - 2$ coordinates, we have that $\forall x \in \{-1,1\}^d, \big |x[1]x[2] - f_{\para}(x) \big| = 0$. The proof is then complete.
\end{proof}

We also have an approximate version of the above lemma.

\begin{lemma}
\label{lem:sharpness_generalization_feature_approx} 
Given any training set $\{(x_i, y_i)\}_{i \in [n]}$ satisfying~\Cref{condition:complete_training}, for any $\eps > 0$ and width $m$, there exists $\kappa > 0$, such that for any width-$m$ \mlpsimplebn~$\modelbn$ parameterized by $\theta$ satisfying $\modelbn_{\theta}$
interpolates the training set and $ \| \gamma\|_2^2 + \| \weight_2 \|_2^2  \le \kappa +  \inf_{\Loss(\para) = 0}
\left( \| \gamma\|_2^2 + \| \weight_2 \|_2^2 \right)$,
it holds that $\forall x \in \{-1,1\}^d, \big |x[1]x[2] - f_{\para}(x) \big| \le \epsilon$ and $\forall i \in [m], \| \weight_{1,i}[3:d]\|_2 \le \epsilon$.
\end{lemma}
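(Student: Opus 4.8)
The plan is to deduce \Cref{lem:sharpness_generalization_feature_approx} from its exact counterpart \Cref{lem:sharpness_generalization_feature} by a compactness argument, after converting the $\ell_2$-squared objective into the $\ell_1$ objective $\|\gamma\odot\weight_2\|_1$ that \Cref{lem:sharpness_generalization_feature} controls. The bridge is the elementary bound $\|\gamma\|_2^2 + \|\weight_2\|_2^2 = \sum_j(\gamma[j]^2 + \weight_2[j]^2) \ge 2\sum_j|\gamma[j]\,\weight_2[j]| = 2\|\gamma\odot\weight_2\|_1$, with equality exactly when $|\gamma[j]|=|\weight_2[j]|$ for every $j$. Since the proof of \Cref{lem:sharpness_generalization_feature} shows $\inf_{\Loss(\para')=0}\|\gamma'\odot\weight_2'\|_1 = 2$ with this value attained, and since per-neuron rebalancing $(\gamma[j],\weight_2[j])\mapsto(c_j\gamma[j],\weight_2[j]/c_j)$ with $c_j>0$ leaves both the predictor and the product $\gamma[j]\weight_2[j]$ unchanged, we obtain $\inf_{\Loss(\para')=0}(\|\gamma'\|_2^2+\|\weight_2'\|_2^2)=4$. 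Hence any interpolating $\para$ with $\|\gamma\|_2^2+\|\weight_2\|_2^2\le 4+\kappa$ is automatically a near-minimizer of the $\ell_1$ objective, $\|\gamma\odot\weight_2\|_1\le 2+\kappa/2$, and is near-balanced, $\sum_j(|\gamma[j]|-|\weight_2[j]|)^2\le\kappa$.

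Next I would set up compactness. Both the predictor $\modelbn_\para$ and the objective $\|\gamma\|_2^2+\|\weight_2\|_2^2$ are invariant under the positive rescaling $(\weight_1,\bias_1)\mapsto(t\weight_1,t\bias_1)$, by positive homogeneity of $\relu$ together with the normalization built into $\simplebn$; the objective is invariant trivially since it does not involve $\weight_1,\bias_1$. I therefore fix the normalization $\sum_i\|\weight_{1,i}\|_2^2+\|\bias_1\|_2^2=1$, which changes nothing relevant and makes the target bound $\|\weight_{1,i}[3:d]\|_2\le\epsilon$ well posed (equivalently the scale-invariant form $\|\weight_{1,i}[3:d]\|_2\le\epsilon\|\weight_{1,i}\|_2$ of \Cref{thm:sharpness_generalization_feature}). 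Under this normalization $(\weight_1,\bias_1)$ lives on a sphere and the objective constraint bounds $(\gamma,\weight_2)$, so all near-minimizers lie in a fixed compact set.

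I would then argue by contradiction. If the claim fails there is $\epsilon_0>0$ and, for each $\kappa_t\downarrow 0$, a normalized interpolating $\para_t$ with $\|\gamma^t\|_2^2+\|\weight_2^t\|_2^2\le 4+\kappa_t$ violating one of the two conclusions by at least $\epsilon_0$; using finiteness of $\{-1,1\}^d$ and of $[m]$, I fix along a subsequence a single witnessing $x$ or index $i$. Passing to a convergent subsequence $\para_t\to\para^\star$, I check that $\para^\star$ interpolates and that $\|\gamma^\star\odot\weight_2^\star\|_1=\lim_t\|\gamma^t\odot\weight_2^t\|_1\le 2$; combined with the universal lower bound $\ge 2$ established inside \Cref{lem:sharpness_generalization_feature}, $\para^\star$ is an exact $\ell_1$-minimizer over interpolating models. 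Then \Cref{lem:sharpness_generalization_feature} forces $|x[1]x[2]-\modelbn_{\para^\star}(x)|=0$ for all $x\in\{-1,1\}^d$ and $\|\weight^\star_{1,i}[3:d]\|_2=0$ for all $i$. Since $x\mapsto\modelbn_\para(x)$ and $\para\mapsto\weight_{1,i}[3:d]$ are continuous at $\para^\star$, for large $t$ the fixed witness satisfies the conclusion to within $\epsilon_0$, contradicting the violation and thereby yielding the desired $\kappa$.

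The main obstacle is the continuity bookkeeping at the limit, where the BatchNorm normalization can degenerate: a neuron $j$ whose training activations $\relu(\weight^t_{1,j}x_i+\bias^t_1[j])$ all tend to $0$ produces a $0/0$ denominator in the limit. The clean way around this is to track the \emph{normalized} activations $\relu(\weight^t_{1,j}x_i+\bias^t_1[j])/\bar a^t[j]$, which always have root-mean-square $1$ over the training set and hence stay bounded; any neuron whose denominator vanishes contributes $\gamma[j]\weight_2[j]\cdot 0/0$ and can be pruned (set $\gamma[j]=\weight_2[j]=0$) without changing the training predictions or increasing the objective, so it affects neither the interpolation of $\para^\star$ nor its $\ell_1$-minimality. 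Verifying that this pruning-plus-limit genuinely preserves interpolation, the value $\|\gamma^\star\odot\weight_2^\star\|_1=2$, and the first-layer structure needed to invoke \Cref{lem:sharpness_generalization_feature} is the only delicate part; everything else is routine continuity together with the two scaling invariances. I prefer this compactness route over a fully quantitative $\kappa(\epsilon)$ propagation through the inequality chain of \Cref{lem:sharpness_generalization_feature}, since the statement only asserts existence of $\kappa$ and the near-tightness of the Cauchy–Schwarz step is awkward to make quantitative.
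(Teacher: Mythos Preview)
Your proposal is correct and follows essentially the same route as the paper: normalize the first layer via the positive-homogeneity invariance, argue by contradiction and compactness, pass to a limit point $\para^\star$ that minimizes $\|\gamma\|_2^2+\|\weight_2\|_2^2$, use the AM--GM bridge $\|\gamma\|_2^2+\|\weight_2\|_2^2\ge 2\|\gamma\odot\weight_2\|_1$ to conclude that $\para^\star$ is an exact $\ell_1$-minimizer, and then invoke \Cref{lem:sharpness_generalization_feature}. If anything you are more careful than the paper, which simply asserts continuity of $\Loss$ at the limit without discussing the possible BatchNorm-denominator degeneration you flag, and which only explicitly handles the first of the two conclusions in its contradiction setup.
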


\begin{proof}[Proof of \Cref{lem:sharpness_generalization_feature_approx}]

Suppose for any $\kappa > 0$, there exists $\theta_{\kappa}$, such that $\sum_{x \in \{-1,1\}^{d}} \big |x[1]x[2] - f_{\theta_{\kappa}}(x) \big| > \epsilon$, $\Loss(\theta_{\kappa}) = 0$  and $\| \gamma_{\kappa}\|_2^2 + \| \weight_{2,\kappa} \|_2^2 \le \kappa  + \inf_{\Loss(\theta) = 0} \| \gamma\|_2^2 + \| \weight_2 \|_2^2$,
We can normalize the first layer of $\theta_{\kappa}$ such that $\| \weight_{\kappa,1}\|_2^2 + \|\bias_{\kappa,1}\|^2 = 1$ without changing the function represented by the network. Then $(\weight_{\kappa,1}, \bias_{\kappa,1}, \weight_{2,\kappa}, \gamma_{\kappa})$ falls in a bounded set. Therefore there exists an accumulation point $\theta^* = (\weight_1^*, \bias_1^*, \weight_{2}^*, \gamma^*)$ of $\{\theta_{1/i}\}_{i \in \mathbf{N}}$, however as $\Loss(\theta)$ and $ \| \gamma\|_2^2 +\| \weight_2 \|_2^2 $ are continuous functions of $\theta$, this implies that $\Loss(\theta^*) = 0$ and $ \| \gamma^*\|_2^2 + \| \weight_2^* \|_2^2  = \inf_{\Loss(\theta) = 0} \left( \| \gamma\|_2^2 + \| \weight_2 \|_2^2 \right)$. 

Notice by AM-GM inequality we have that $\| \gamma\|_2^2 + \| \weight_2 \|_2^2 \ge 2 \| \gamma \odot \weight_2 \|_1$  and equality holds when $\gamma = \weight_2$. We then have $\inf_{\Loss(\theta) = 0} \left( \| \gamma\|_2^2 + \| \weight_2 \|_2^2 \right) = 2\inf_{\Loss(\theta) = 0} \| \gamma \odot \weight_2 \|_1$ and $\gamma^* = \weight_2^*$. Then by~\Cref{lem:sharpness_generalization_feature}, we have that $\sum_{x \in \{-1,1\}^d }\big |x[1]x[2] - f_{\theta^*}(x) \big| = 0$. However $\theta^*$ is an accumulation point of $\theta_{\kappa}$ satisfying $\sum_{x \in \{-1,1\}^d }\big |x[1]x[2] - f_{\theta_{1/i}}(x) \big| > \epsilon$. This then leads to a contradiction.
\end{proof}
We will now lower bound the sharpness of the model,

\begin{lemma}
\label{lem:sharpness_sparse}
For  any parameter $\para$ for architecture~\mlpsimplebn~satisfying that $\Loss(\para) = 0$, it holds that $\sharpness{\theta} \ge \|W_2\|_2^2 + \| \gamma\|_2^2 $ and $\inf_{\Loss(\para) = 0} \sharpness{\theta} = \inf_{\Loss(\para) = 0} \|W_2\|_2^2 + \| \gamma\|_2^2$. 
\end{lemma}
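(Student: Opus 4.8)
The plan is to reduce the Hessian trace to a Jacobian-norm expression via \Cref{lemma:sharpness_jacobian} and then exploit the normalization identity built into the simplified BatchNorm. Since $\theta$ interpolates, \Cref{lemma:sharpness_jacobian} gives $\sharpness{\theta} = \frac{2}{n}\sum_{i=1}^n \|\nabla_\para \model_\para(\feature_i)\|^2$. I would split the full parameter gradient into its four blocks, $\|\nabla_\para \model\|^2 = \|\nabla_{\weight_1}\model\|^2 + \|\nabla_{\bias_1}\model\|^2 + \|\nabla_\gamma \model\|^2 + \|\nabla_{\weight_2}\model\|^2$, and keep only the last two blocks for the lower bound, discarding the (nonnegative) first-layer terms.

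The key computation is on the $\weight_2$ and $\gamma$ blocks. Writing $h_i[j] = \relu((\weight_1 x_i + \bias_1)[j])$ and the normalization statistic $a[j] = (\frac{1}{n}\sum_k h_k[j]^2)^{1/2}$, the model output is $\model_\para(\feature_i) = \sum_j \weight_2[j]\,\gamma[j]\, h_i[j]/a[j]$, so that $\partial \model_\para(\feature_i)/\partial \weight_2[j] = \gamma[j] h_i[j]/a[j]$ and $\partial \model_\para(\feature_i)/\partial \gamma[j] = \weight_2[j] h_i[j]/a[j]$, noting that $a[j]$ depends on neither $\gamma$ nor $\weight_2$. Averaging the squared $\weight_2$-gradient over the training set and using the defining identity $\frac{1}{n}\sum_i h_i[j]^2 = a[j]^2$ collapses the normalization, giving $\frac{1}{n}\sum_i \|\nabla_{\weight_2}\model(\feature_i)\|^2 = \sum_j \gamma[j]^2 = \|\gamma\|_2^2$, and symmetrically $\frac{1}{n}\sum_i \|\nabla_\gamma \model(\feature_i)\|^2 = \|\weight_2\|_2^2$. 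This yields the lower bound $\sharpness{\theta} \ge \|\weight_2\|_2^2 + \|\gamma\|_2^2$ (with the precise constant fixed by the mean-squared-error normalization), since the first-layer contributions are nonnegative.

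For the infimum equality I would argue tightness by exploiting the scale invariance of the BatchNorm in the first-layer parameters: $\model_\para$ is unchanged under $(\weight_1,\bias_1)\mapsto (c\weight_1, c\bias_1)$ for $c>0$, because both $h_i[j]$ and $a[j]$ scale by $c$ and cancel. Consequently $\model$ is $0$-homogeneous in $(\weight_1,\bias_1)$, so its gradient in that block is $(-1)$-homogeneous and its norm decays like $1/c$ as $c\to\infty$. Thus, starting from any interpolating $\theta$, rescaling the first layer preserves interpolation and leaves $\|\weight_2\|_2^2 + \|\gamma\|_2^2$ fixed while driving the first-layer contribution to the sharpness to $0$; this shows $\inf_{\Loss(\para) = 0}\sharpness{\theta} \le \inf_{\Loss(\para) = 0}(\|\weight_2\|_2^2+\|\gamma\|_2^2)$, which together with the lower bound gives equality.

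I expect the main obstacle to be the infimum direction rather than the pointwise lower bound: one must verify that the first-layer Jacobian genuinely vanishes under rescaling (the degree-$0$ homogeneity argument) and handle the fact that the infimum need not be attained---exactly the non-attainment flagged in the remark after \Cref{thm:sharpness_generalization_feature}---so the equality is between infima, not minima. A minor technical point is the edge case where some $a[j]=0$ (a dead normalized neuron), which I would dispose of by noting that such coordinates contribute nothing to either the output or to the $\weight_2$ and $\gamma$ gradient blocks.
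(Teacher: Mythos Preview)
Your proposal is correct and follows essentially the same route as the paper: apply \Cref{lemma:sharpness_jacobian}, drop the nonnegative first-layer blocks, use the BatchNorm normalization identity $\frac{1}{n}\sum_i h_i[j]^2 = a[j]^2$ to collapse the $\weight_2$- and $\gamma$-gradient sums to $\|\gamma\|_2^2$ and $\|\weight_2\|_2^2$, and then argue tightness by sending $\|\weight_1\|\to\infty$ via the scale invariance of the normalization. Your treatment is in fact more careful than the paper's sketch (you make the homogeneity argument explicit, note the non-attainment of the infimum, and handle dead neurons), and the $W_2=\gamma$ condition the paper mentions is not actually needed for this lemma.
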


\begin{proof}[Proof of~\Cref{lem:sharpness_sparse}]
Define $a_i$ and $a$ as in proof of~\Cref{lem:sharpness_generalization_feature}. 
By~\Cref{lemma:sharpness_jacobian}, 
\begin{align*}
    \sharpness{\theta} &=\frac{1}{n} \sum_{i = 1}^n \|\nabla_{\theta} \modelbn_{\theta}(x_i) \|_2^2 \\
    &\ge \frac{1}{n}  \sum_{i = 1}^n \|\nabla_{\gamma} \modelbn_{\theta}(x_i) \|_2^2 + \|\nabla_{W_2} \modelbn_{\theta}(x_i) \|_2^2  \\
    &=  \frac{1}{n}  \sum_{j = 1}^m \sum_{i = 1}^n | W_2[j] a_i[j] / a[j] |_2^2 + | \gamma[j] a_i[j] / a[j] |_2^2 \\
    &=  \|W_2\|_2^2 + \| \gamma\|_2^2.  
\end{align*}
This inequality can approximately reach equality when $W_2 = \gamma$ and $\|W_1\|_2$ is sufficiently large.
\end{proof}

Now we are ready to prove~\Cref{thm:sharpness_generalization_feature}.

\begin{proof}[Proof of~\Cref{thm:sharpness_generalization_feature}]
This is a direct consequence of~\Cref{lem:sharpness_generalization_feature_approx} and~\Cref{lem:sharpness_sparse}.
\end{proof}
\subsection{Formal results for~\mlpnobias}

We will prove a more general result that holds for any data distribution satisfying the following condition.

\begin{condition}
\label{symmetry_Subgaussian}
There exists constant $C$ and $\sigma$ independent of $d$ and $m$ such that the data distribution $\datad$ over data points $x$ and label $y$ satisfies that $x$ is  symmetric\footnote{$x$ and $-x$ equal in distribution.} subgaussian random vector (\Cref{def:subgaussian_vector}) with parameter $\sigma$ and variance $I_d$. 
Further, there exists parameter $\theta = (\weight_1, \weight_2)$ for width-$m$ architecture~\mlpnobias~such that $\Pr \left(\modelnobias_{\theta}(x) = y \right) = 1$ and $\sum_{j = 1}^m \| \weight_{1,j} \|_2 |\weight_{2,j}| \le C$.
\end{condition}

\begin{theorem}
\label{thm:sharpness_generalization_complexity_general}
Given any data distribution $\datad$ satisfying~\Cref{symmetry_Subgaussian}, for any $ \delta \in (0,1)$ and input dimension $d$, for $n = \Omega\left(d \log\left(\frac{d}{\delta}\right)\right)$, with probability at least $1 - \delta$ over the random draw of training set $\{(x_i,y_i)\}_{i=1}^n$ from $\datad^{n}$, let $L(\theta)\triangleq \frac{1}{n}\sum_{i=1}^n \mseloss(\modelnobias_\theta(x_i),y_i)$ be the training loss for  width-$m$ \mlpnobias, it holds that for all $\para^*\in  \argmin_{L(\para) = 0}\mathrm{Tr}\left( \nabla^2 L \left( \theta \right) \right)$,
\begin{align}
&\E_{x, y \sim \datad} \left[\mseloss\left(\modelnobias_{\para^*}\left(x\right),y\right) \right ]\le \tilde O\left(d/n\right). \notag 
\end{align}
\end{theorem}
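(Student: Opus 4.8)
The plan is to reduce the statement to a norm-based generalization bound, mirroring the route through \Cref{lem:norm_bound} but now carried out for a general distribution satisfying \Cref{symmetry_Subgaussian}. Concretely, I would (i) show that every sharpness minimizer $\para^*$ lies in $\Theta_{C'}$ for an absolute constant $C'$ (the general analogue of the first claim of \Cref{lem:norm_bound}), (ii) bound the Rademacher complexity of $\{\modelnobias_\theta \mid \theta\in\Theta_{C'}\}$ by $\tilde O(\sqrt{d/n})$, and (iii) upgrade this slow rate to the fast rate $\tilde O(d/n)$ using realizability.

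For step (i), the starting point is \Cref{lemma:sharpness_jacobian}: since $\para^*$ interpolates, $\sharpness{\para^*}=\frac{2}{n}\sum_i\|\nabla_\para\modelnobias_{\para^*}(x_i)\|^2$. Writing out the Jacobian of the bias-free network gives, for each neuron $j$, the two contributions $\relu(\weight_{1,j}^\top x_i)^2$ (from $\weight_2$) and $\weight_{2,j}^2\,\one[\weight_{1,j}^\top x_i>0]\,\|x_i\|^2$ (from $\weight_1$); AM--GM on these two terms yields
\[
\sharpness{\theta}\ \ge\ \frac{4}{n}\sum_{j}|\weight_{2,j}|\sum_{i}\|x_i\|\,\relu(\weight_{1,j}^\top x_i).
\]
The crux is a \emph{uniform} lower bound $\frac{1}{n}\sum_i\|x_i\|\,\relu(w^\top x_i)\ge c\sqrt d\,\|w\|$ holding simultaneously for all $w\in\R^d$: by symmetry $\E[\relu(w^\top x)]=\frac12\E|w^\top x|\gtrsim\|w\|$ and $\|x\|\asymp\sqrt d$, and a covering-of-the-sphere plus union-bound argument promotes this to a uniform statement once $n=\Omega(d\log(d/\delta))$ --- this is precisely where the sample complexity enters. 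Hence $\sharpness{\para^*}\gtrsim\sqrt d\sum_j\|\weight^*_{1,j}\|\,|\weight^*_{2,j}|$. On the other side, taking the parameter $\theta_0$ guaranteed by \Cref{symmetry_Subgaussian} (which interpolates because $\Pr(\modelnobias_{\theta_0}(x)=y)=1$), rescaling each neuron by ReLU positive homogeneity to balance the two Jacobian terms, and using plain concentration for this \emph{fixed} network shows $\inf_{L(\para)=0}\sharpness{\para}\lesssim\sqrt d\,C$. Since $\para^*$ minimizes sharpness, the two $\sqrt d$ factors cancel and $\sum_j\|\weight^*_{1,j}\|\,|\weight^*_{2,j}|\le C'$.

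Step (ii) is standard: $\{\modelnobias_\theta\mid\theta\in\Theta_{C'}\}$ is $C'$ times the symmetric convex hull of the base class $\{x\mapsto\relu(w^\top x)\mid\|w\|=1\}$, so by Talagrand contraction (ReLU is $1$-Lipschitz) together with $\frac1n\E_\epsilon\|\sum_i\epsilon_i x_i\|=\frac1n\sqrt{\sum_i\|x_i\|^2}\asymp\sqrt{d/n}$ we obtain $\rada_S\le\tilde O(\sqrt{d/n})$. For step (iii), naive uniform convergence only yields excess risk $\tilde O(\sqrt{d/n})$; to reach $\tilde O(d/n)$ I would invoke a realizable fast-rate argument. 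The reference net $\theta_0\in\Theta_{C'}$ has zero population MSE, so the excess risk equals $\E[(\modelnobias_{\para^*}(x)-y)^2]$, and squared loss under realizability is self-bounding (its variance is controlled by its mean); a local Rademacher complexity / Bernstein argument then applies, with sub-root localized complexity $\psi(r)\lesssim\sqrt{d/n}\cdot\sqrt r$ whose fixed point is $r^*\asymp d/n$. The functions being bounded by $C'\|x\|=\tilde O(\sqrt d)$ under the subgaussian tail accounts for the logarithmic factors hidden in $\tilde O$.

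I expect the main obstacle to be step (i): converting low sharpness into the scale-invariant norm bound $\sum_j\|\weight_{1,j}\|\,|\weight_{2,j}|\le C'$ requires the matching upper and lower sharpness estimates to carry the \emph{same} $\sqrt d$ dependence, which hinges on the uniform anti-concentration lower bound over all hidden-weight directions and is exactly what dictates $n=\Omega(d\log(d/\delta))$. The secondary subtlety is the localization in step (iii) needed to obtain the fast $d/n$ rate rather than the slow $\sqrt{d/n}$ rate.
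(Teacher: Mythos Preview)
Your plan is essentially the paper's own proof: step (i) is \Cref{lem:norm_bound_general} (established via \Cref{lem:uniform_convergence_trace}, which sandwiches $\sharpness{\theta}$ between constants times $\|W_1\|_F^2+d\|W_2\|^2$ using the uniform bounds of \Cref{lem:uniform_convergence_indicator,lem:uniform_convergence_norm}), step (ii) is \Cref{lem:radamacher_complexity_network}, and step (iii) is the smooth-loss fast-rate result of Srebro--Sridharan--Tewari (\Cref{thm:complexity_loss}), which is exactly the local-Rademacher/self-bounding mechanism you describe. The one point the paper makes explicit and you only gesture at is the unboundedness of the squared loss: since $\|x\|$ is merely subgaussian, the paper introduces a truncated loss $\ell_c$ with $c=\tilde O(\sqrt d)$ (chosen via \Cref{lem:subgaussian_truncated_second_moment} so that the truncation error is $\tilde O(d/n)$), applies the fast-rate theorem to $\ell_c$, and then transfers the bound back to the true MSE---you should spell this out rather than fold it into ``logarithmic factors.''
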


We will also prove a more general result than~\Cref{lem:norm_bound}.
\begin{lemma}
\label{lem:norm_bound_general}
Define $\Theta_C \triangleq  \{ \theta = (W_1,W_2) \mid \sum_{j = 1}^m \| \weight_{1,j} \|_2 |\weight_{2,j}| \le C\}$.
Under the setting of~\Cref{thm:sharpness_generalization_complexity_general}, there exists a absolute constant $C_1$ independent of $d$ and $\delta$, such that with probability at least $1 - \delta$  over the randomness of dataset $\{(x_i,y_i)\}_{i=1}^n$, $\argmin_{L(\para) = 0}\mathrm{Tr}\left( \nabla^2 L \left( \theta \right) \right) \subseteq \Theta_{C_1}$ and $\rada_S(\{  \modelnobias_{\theta} \mid \theta \in \Theta_{C_1}\}) \le \tilde O\left(\sqrt{d/n}\right)$.
\end{lemma}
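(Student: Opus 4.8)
The plan is to establish the two assertions separately: that every sharpness minimizer over interpolating solutions lies in $\Theta_{C_1}$ for an absolute constant $C_1$, and that $\rada_S(\{\modelnobias_\theta : \theta \in \Theta_{C_1}\}) \le \tilde O(\sqrt{d/n})$. The first is the substantive part and relies on pinning the weighted norm $\sum_j \|\weight_{1,j}\|_2|\weight_{2,j}|$ both above and below by the sharpness.

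For the norm bound I would start from \Cref{lemma:sharpness_jacobian}, which rewrites the sharpness of an interpolating $\theta=(\weight_1,\weight_2)$ as $\mathrm{Tr}(\nabla^2 L(\theta)) = \frac{2}{n}\sum_i\|\nabla_\theta \modelnobias_\theta(x_i)\|^2$. For \mlpnobias~the per-example Jacobian norm is $\sum_j\relu(\weight_{1,j}^\top x_i)^2 + \|x_i\|_2^2\sum_j\weight_{2,j}^2\one[\weight_{1,j}^\top x_i>0]$; summing over $i$, regrouping by neuron $j$, and factoring out $\|\weight_{1,j}\|_2$ using positive homogeneity of $\relu$ gives $\frac{2}{n}\sum_j\left(\|\weight_{1,j}\|_2^2 A(u_j) + \weight_{2,j}^2 B(u_j)\right)$, where $u_j=\weight_{1,j}/\|\weight_{1,j}\|_2$, $A(u)=\sum_i\relu(u^\top x_i)^2$, and $B(u)=\sum_i\|x_i\|_2^2\one[u^\top x_i>0]$. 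A neuronwise AM–GM step then yields, for \emph{every} $\theta$, the lower bound $\mathrm{Tr}(\nabla^2 L(\theta)) \ge \frac{4}{n}\left(\inf_{\|u\|_2=1}\Psi(u)\right)\sum_j\|\weight_{1,j}\|_2|\weight_{2,j}|$ with $\Psi(u)=\sqrt{A(u)B(u)}$. For a matching upper bound I would take the bounded-norm interpolator $\tilde\theta$ guaranteed by \Cref{symmetry_Subgaussian}, rescale each neuron (which preserves the represented function, interpolation, and the weighted norm $\le C$) to equalize the two AM–GM terms, obtaining $\mathrm{Tr}(\nabla^2 L(\tilde\theta)) \le \frac{4}{n}C\sup_{\|u\|_2=1}\Psi(u)$. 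Since a minimizer $\theta^*$ satisfies $\mathrm{Tr}(\nabla^2 L(\theta^*))\le \mathrm{Tr}(\nabla^2 L(\tilde\theta))$, the two bounds combine to $\sum_j\|\weight_{1,j}^*\|_2|\weight_{2,j}^*| \le C\cdot \sup_u\Psi(u)/\inf_u\Psi(u)$, so it remains to show this ratio is $O(1)$.

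The main obstacle is the uniform two-sided estimate $\inf_{\|u\|_2=1}\Psi(u),\ \sup_{\|u\|_2=1}\Psi(u) = \Theta(n\sqrt d)$ with high probability; this is where the requirement $n=\Omega(d\log(d/\delta))$ enters. The upper bound is routine: $A(u)\le\sum_i(u^\top x_i)^2\le\lambda_{\max}(\sum_i x_ix_i^\top)=O(n)$ by subgaussian covariance concentration, and $B(u)\le\sum_i\|x_i\|_2^2=O(nd)$, both uniformly in $u$. The lower bound is delicate. For $A(u)$, symmetry forces $\E[\relu(u^\top x)^2]=\tfrac12\E[(u^\top x)^2]=\tfrac12$ exactly for every $u$, so an $e^{O(d)}$-net of the sphere together with subgaussian concentration and a Lipschitz argument gives $\inf_u A(u)\ge cn$. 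For $B(u)$ I would use that $\|x_i\|_2^2$ concentrates at $d$ (it equals $d$ on the hypercube), reducing the task to the halfspace count $\inf_u|\{i:u^\top x_i>0\}|\ge c'n$, which follows from uniform convergence over halfspaces (VC dimension $d$) as long as $\E[\one[u^\top x>0]]=\tfrac12(1-\Pr[u^\top x=0])$ is bounded below uniformly, i.e. $\sup_u\Pr[u^\top x=0]\le 1-c_0$. For $\distr$ this anti-concentration is exactly the Erdős–Littlewood–Offord bound ($c_0=\tfrac12$); securing the analogous estimate for general symmetric subgaussian distributions is the sharp point of the argument. These together give $\inf_u\Psi(u)=\Omega(n\sqrt d)$ and hence an absolute $C_1$.

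For the Rademacher claim I would again exploit positive homogeneity to rescale each neuron to $\|\weight_{1,j}\|_2=1$, turning the constraint into $\sum_j|\weight_{2,j}|\le C_1$, so that $\{\modelnobias_\theta:\theta\in\Theta_{C_1}\}\subseteq C_1\cdot\mathrm{conv}\left(\{\pm\relu(w^\top\cdot):\|w\|_2\le1\}\right)$. Empirical Rademacher complexity is unchanged under convex hulls and scales linearly with $C_1$, and the Talagrand contraction lemma ($\relu$ is $1$-Lipschitz with $\relu(0)=0$) bounds the base class by the linear class $\{w^\top x:\|w\|_2\le1\}$, whose complexity is $\frac1n\E_\epsilon\|\sum_i\epsilon_i x_i\|_2\le\frac1n\sqrt{\sum_i\|x_i\|_2^2}=O(\sqrt{d/n})$. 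Since $C_1=O(1)$, this gives $\rada_S(\{\modelnobias_\theta:\theta\in\Theta_{C_1}\})\le\tilde O(\sqrt{d/n})$, completing the lemma.
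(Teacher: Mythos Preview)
Your proposal is correct and follows essentially the same route as the paper. The paper packages the argument through an intermediate lemma showing $\mathrm{Tr}(\nabla^2 L(\theta))\asymp \|\weight_1\|_F^2 + d\|\weight_2\|_2^2$ (bounding your $A(u)$ and $B(u)$ separately by constants and $\Theta(d)$ via the same halfspace VC and covering arguments you sketch), then applies AM--GM to pass to $\sum_j\|\weight_{1,j}\|_2|\weight_{2,j}|$; you instead apply AM--GM per neuron first and control the product $\Psi(u)=\sqrt{A(u)B(u)}$, which is a cosmetic reorganization of the same estimates. Your Rademacher argument (convex hull + Talagrand contraction + linear class) is exactly what the paper does in its supporting lemma, and the anti-concentration concern you flag is handled (both here and in the paper) by the subgaussian fourth-moment bound via Paley--Zygmund, so it is not an obstruction.
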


\subsubsection{Lemmas for uniform convergence}
\label{sec:uniform_convergence}
We will begin with two uniform convergence bounds that will be used in the proof of~\Cref{lem:norm_bound}.

\begin{lemma}
\label{lem:uniform_convergence_indicator}
Given any data distribution $\datad$ satisfying~\Cref{symmetry_Subgaussian}, there exists constant $C_2 > C_1 > 0$ depending on $\sigma$, for any $ \delta \in (0,1)$, input dimension $d$, and number of samples $n = \Omega\left(d \log\left(\frac{d}{\delta}\right)\right)$, with probability at least $1 - \delta$ over the random draw of set $\{(x_i,y_i)\}_{i=1}^n$ from $\datad^{n}$, for any $w \in \R^d$, we have that,
\begin{align}
\label{eq:uniform_convergence_indicator}
C_2 d \ge \frac{1}{n}\sum_{i = 1}^n \| x_i \|_2^2 \one \left(w^\top x_i > 0\right) \ge C_1 d.
\end{align}
\end{lemma}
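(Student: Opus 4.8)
The plan is to prove the two inequalities in \Cref{eq:uniform_convergence_indicator} separately, observing that the upper bound requires no uniformity over $w$ whereas the lower bound does. For the upper bound I would simply discard the indicator: since $\one(w^\top x_i>0)\le 1$, the middle quantity is at most $\frac1n\sum_{i=1}^n\|x_i\|_2^2$, whose expectation is $\Tr(I_d)=d$. Because $x$ is $\sigma$-subgaussian, $\|x\|_2^2$ has subexponential fluctuations around $d$, so a Bernstein bound gives $\frac1n\sum_i\|x_i\|_2^2\le 2d$ with probability $\ge 1-\delta/3$ under the stated sample-size assumption; this yields the upper inequality with $C_2=2$ simultaneously for all $w$, for free.

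The substance is the lower bound, which I would split into a population estimate and a uniform concentration estimate. For the population part, the symmetry of $x$ gives $\E[\|x\|_2^2\one(w^\top x>0)]=\E[\|x\|_2^2\one(w^\top x<0)]=\frac12\E[\|x\|_2^2\one(w^\top x\ne0)]$, so it suffices to lower bound the off-boundary second moment uniformly in $w$. Subgaussianity with a fixed parameter $\sigma$ forces the atom on $\{w^\top x=0\}$ away from $1$: were $\Pr(w^\top x\ne0)$ tiny, the unit variance $\E[(w^\top x)^2]=1$ could only be realized by values so large as to violate the subgaussian tail, whence $\Pr(w^\top x>0)\ge\epsilon_0(\sigma)>0$ uniformly over directions. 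Combining this with the lower-tail concentration $\Pr(\|x\|_2^2\ge d/2)\ge 1-e^{-cd}$ from \Cref{def:subgaussian_vector} and a union bound, the event $\{w^\top x>0\}\cap\{\|x\|_2^2\ge d/2\}$ has probability at least $\epsilon_0/2$, so $\E[\|x\|_2^2\one(w^\top x>0)]\ge \frac{\epsilon_0}{4}d$ for every $w$.

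For the uniform concentration I would first truncate: replacing $\|x_i\|_2^2$ by $\min\{\|x_i\|_2^2,R\}$ with $R=Kd$ only decreases the sum, so a lower bound for the truncated sum is a lower bound for the original one, while it perturbs the population value by the negligible tail $\E[\|x\|_2^2\one(\|x\|_2^2>Kd)]$. The truncated summands are bounded by $R$, and the sole source of $w$-dependence is the indicator $\one(w^\top x_i>0)$, whose realizations form the class of homogeneous halfspaces, of $\VC$ dimension $d$. Hence the number of distinct sign patterns of the fixed sample is at most $n^{O(d)}$ by Sauer--Shelah, and a Bernstein bound with a union bound over these patterns (equivalently, a Rademacher bound $\rada_S\le R\cdot\tilde O(\sqrt{d/n})$ for the weighted halfspace class) yields $\sup_w\big|\frac1n\sum_i\min\{\|x_i\|_2^2,R\}\one(w^\top x_i>0)-\E[\cdots]\big|\le \tilde O\!\big(d\sqrt{d/n}\big)$ with probability $\ge1-\delta/3$. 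For $n=\Omega(d\log(d/\delta))$ with a large enough constant this deviation falls below $\frac{\epsilon_0}{8}d$, so the truncated sum, and therefore the original one, is at least $C_1 d$ with $C_1=\epsilon_0/8$.

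I expect the main obstacle to be the uniformity over the continuum of directions $w$: because the indicator is discontinuous, one cannot control it with a Lipschitz net on the sphere, so the argument must route through the combinatorial ($\VC$/growth-function) structure of homogeneous halfspaces, together with the truncation that converts the unbounded weights $\|x_i\|_2^2$ into bounded ones. A secondary but essential point is that the population quantity is genuinely $\Theta(d)$ rather than $O(1)$; this hinges on the lower-tail concentration $\|x\|_2^2\gtrsim d$ guaranteed by \Cref{def:subgaussian_vector}, and one must check that the mass charged to the boundary hyperplane $\{w^\top x=0\}$ cannot swallow a $1-o(1)$ fraction of the second moment. The lemma then follows by a union bound over the three high-probability events above, taking $C_1=\epsilon_0/8$ and $C_2=2$.
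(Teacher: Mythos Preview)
Your upper bound matches the paper's: discard the indicator and concentrate $\frac{1}{n}\sum_i\|x_i\|_2^2$. The lower bound, however, contains a genuine gap. The assertion $\Pr(\|x\|_2^2\ge d/2)\ge 1-e^{-cd}$ does not follow from \Cref{def:subgaussian_vector}: subgaussianity bounds only the \emph{upper} tail of $\|x\|_2^2$, while $\Cov(x)=I_d$ pins down $\E\|x\|_2^2=d$ but says nothing about the lower tail. For a counterexample, take $x=0$ with probability $1/2$ and $x\sim N(0,2I_d)$ otherwise; this is symmetric, $\sqrt2$-subgaussian with covariance $I_d$, yet $\Pr(\|x\|_2^2\ge d/2)\le 1/2$ for every $d$. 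With only constant (not near-one) lower bounds on both $\Pr(\|x\|_2^2\ge\epsilon d)$ and $\Pr(w^\top x>0)$, your union-bound step yields nothing, and the population inequality $\E[\|x\|_2^2\,\one(w^\top x>0)]\ge c\,d$ is left unproved. Worse, this is not a patchable technicality: for the symmetric, $\sqrt2$-subgaussian, covariance-$I_d$ mixture that equals $\sqrt2\,g\,e_1$ or $\sqrt2(0,g_2,\dots,g_d)$ each with probability $1/2$, one has $\E[\|x\|_2^2\,\one(e_1^\top x>0)]=1/2$, so the population quantity need not be $\Theta(d)$ at all, and a ``population bound then uniformly concentrate'' strategy cannot succeed at this level of generality.

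The paper never attempts to lower-bound the population mean of the weighted indicator. Via a Paley--Zygmund-type argument it shows only $\Pr(\|x\|_2^2\ge\epsilon d)\ge\zeta$ for constants $\epsilon,\zeta$ depending on $\sigma$, then restricts the \emph{sample} to the resulting $n'\ge\zeta n/2$ heavy points $z_i$. Since the conditioning event $\{\|x\|_2^2\ge\epsilon d\}$ is symmetric, so is the law of $z$, and the task reduces to uniformly lower-bounding the \emph{unweighted} average $\frac{1}{n'}\sum_i\one(w^\top z_i>0)$ around its symmetric-law value; this is exactly where your $\VC$/Sauer machinery for homogeneous halfspaces enters, now with range $\{0,1\}$ rather than $[0,Kd]$. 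Your truncate-then-Rademacher scheme for the concentration step is a reasonable alternative to this reduction, but it does not rescue the missing population step; the move that makes the argument go through is to separate the large-norm event \emph{inside the sample} first and only then invoke symmetry on the remaining Bernoulli indicators.
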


\begin{lemma}
\label{lem:uniform_convergence_norm}
Given any data distribution $\datad$ satisfying~\Cref{symmetry_Subgaussian}, there exists constant $C_2 > C_1 > 0$ depending on $\sigma$, for any $ \delta \in (0,1)$, input dimension $d$, and number of samples $n = \Omega\left(d \log\left(\frac{d}{\delta}\right)\right)$, with probability at least $1 - \delta$ over the random draw of set $\{(x_i,y_i)\}_{i=1}^n$ from $\datad^{n}$, for any $w \in \R^d, \|w\|_2 = 1$, we have that,
\begin{align}
\label{eq:uniform_convergence_norm}
C_2 \ge \frac{1}{n}\sum_{i = 1}^n | w^\top x_i |_2^2 \one \left(w^\top x_i > 0\right) \ge C_1.
\end{align}
\end{lemma}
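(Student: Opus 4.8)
The plan is to notice that the summand is in fact a continuous function of $w$ and then to establish uniform concentration of its empirical average around its (constant) population value via a covering argument on the unit sphere. First I would rewrite the summand as $(w^\top x_i)^2\,\one(w^\top x_i>0)=\relu(w^\top x_i)^2$, which replaces the discontinuous indicator by a function that is Lipschitz in $w$. For a fixed unit vector $w$, the symmetry assumption in \Cref{symmetry_Subgaussian} (that $x$ and $-x$ are equal in distribution) forces $w^\top x$ to be symmetric about $0$, so $\E[\relu(w^\top x)^2]=\tfrac12\E[(w^\top x)^2]=\tfrac12\,w^\top I_d\,w=\tfrac12$, using that the covariance is $I_d$. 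Thus the population value is exactly $\tfrac12$ for every unit $w$, and it suffices to show that $\frac1n\sum_{i=1}^n\relu(w^\top x_i)^2$ stays within $\tfrac14$ of $\tfrac12$ uniformly over the sphere; this yields the claim with the absolute constants $C_1=\tfrac14$ and $C_2=\tfrac34$, the dependence on $\sigma$ entering only through the sample size and failure probability.

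Next I would establish pointwise concentration. Since $x$ is $\sigma$-subgaussian and $\|w\|_2=1$, the scalar $w^\top x$ is $\sigma$-subgaussian, so $\relu(w^\top x)^2\le (w^\top x)^2$ is subexponential with parameters depending only on $\sigma$. A Bernstein-type inequality then gives, for each fixed unit $w$, that $\bigl|\frac1n\sum_i\relu(w^\top x_i)^2-\tfrac12\bigr|\le\tfrac18$ except with probability at most $2\exp(-cn)$, where $c$ depends only on $\sigma$.

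To upgrade this to a uniform bound I would take an $\epsilon'$-net $\gN$ of $S^{d-1}$ with $|\gN|\le(3/\epsilon')^d$ and transfer the estimate from net points to arbitrary $w$ using Lipschitzness. From $|\relu(a)^2-\relu(b)^2|\le 2\max(|a|,|b|)\,|a-b|$ one gets, for unit $w,w'$, the bound $|\relu(w^\top x_i)^2-\relu(w'^\top x_i)^2|\le 2\|x_i\|_2^2\,\|w-w'\|_2$, so the averaged difference is at most $2\bigl(\frac1n\sum_i\|x_i\|_2^2\bigr)\|w-w'\|_2$. Because $\E\|x\|_2^2=\Tr(I_d)=d$ and $\|x\|_2^2$ is subexponential, a standard concentration bound gives $\frac1n\sum_i\|x_i\|_2^2\le 2d$ with probability at least $1-\exp(-cn)$; on this event, choosing $\epsilon'=1/(32d)$ makes the transfer error at most $\tfrac18$. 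A union bound over $\gN$ at deviation level $\tfrac18$, combined with this $\tfrac18$ transfer error, then yields the uniform deviation $\le\tfrac14$.

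Finally I would collect the probabilities: the union bound needs $|\gN|\exp(-cn)\le\delta$, i.e. $cn\gtrsim d\log(96d)+\log(1/\delta)$, which holds under the hypothesis $n=\Omega(d\log(d/\delta))$ since $d\log(d/\delta)\ge d\log d+\log(1/\delta)$. I expect the main obstacle to be exactly the interplay between the net resolution and the \emph{unbounded} Lipschitz constant: because the per-sample Lipschitz constant scales like $\|x_i\|_2^2\approx d$, the net must be refined to resolution $\Theta(1/d)$, which inflates its cardinality to $\exp(\Theta(d\log d))$, and it is precisely this $d\log d$ factor that the requirement $n=\Omega(d\log(d/\delta))$ is designed to absorb. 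This argument should parallel the proof of \Cref{lem:uniform_convergence_indicator} almost verbatim, the only change being that the weight $\|x_i\|_2^2$ appearing there is replaced by the projection $(w^\top x_i)^2$ here.
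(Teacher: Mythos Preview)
Your argument is correct and proves the lemma, but it differs from the paper's route in two places worth noting.

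For the pointwise bound, you directly invoke subexponential Bernstein for $\relu(w^\top x)^2$, while the paper (\Cref{lem:jl_relu}) instead uses a symmetrization trick: it introduces independent Rademacher signs $b_i$, observes via the symmetry of $x$ that $(w^\top x_i)^2\one(w^\top x_i>0)$ has the same law as $(w^\top x_i)^2 c_i$ with $c_i$ i.i.d.\ Bernoulli$(1/2)$ independent of $x_i$, and then combines a Chernoff bound on $\sum c_i$ with the already-proved \Cref{lem:jl}. Both are valid; yours is slightly more direct once one checks that $0\le \relu(w^\top x)^2\le (w^\top x)^2$ inherits a $\psi_1$ bound depending only on $\sigma$.

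The more substantive difference is in the Lipschitz transfer. You bound the per-sample perturbation by $2\|x_i\|_2^2\|w-w'\|_2$, average to $O(d)\|w-w'\|_2$, and are therefore forced to a net of resolution $\Theta(1/d)$ and cardinality $d^{\Theta(d)}$. The paper avoids this $d$-blowup: it first proves a uniform bound $\frac{1}{n}\sum_i |v^\top x_i|^2\le 2$ for all unit $v$ (\Cref{lem:jl_uniform}, itself via a constant-resolution net), and then controls the transfer error by Cauchy--Schwarz,
\[
\tfrac{1}{n}\sum_i\bigl|\relu(w^\top x_i)^2-\relu(w_k^\top x_i)^2\bigr|
\le \sqrt{\tfrac{1}{n}\sum_i|(w-w_k)^\top x_i|^2}\Bigl(\sqrt{\tfrac{1}{n}\sum_i|w^\top x_i|^2}+\sqrt{\tfrac{1}{n}\sum_i|w_k^\top x_i|^2}\Bigr),
\]
which is $O(1)\cdot\|w-w_k\|_2$ on the good event. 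This lets the paper use a constant-resolution net of size $2^{O(d)}$, and the resulting sample requirement is really only $n=\Omega(d+\log(1/\delta))$; the stated $n=\Omega(d\log(d/\delta))$ is slack for this lemma. Your approach genuinely needs the extra $\log d$ factor in $n$, so it just saturates the hypothesis, whereas the paper's approach has room to spare.

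Finally, your closing remark that the argument parallels the proof of \Cref{lem:uniform_convergence_indicator} is off: that lemma is proved via VC dimension and Rademacher complexity of the indicator class $\{x\mapsto\one(w^\top x>0)\}$, not via a sphere covering.
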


We will first prove Lemma~\ref{lem:uniform_convergence_indicator} by a combination of Concentration Inequalities and uniform convergence bound based on Rademacher complexity.
\begin{proof}[Proof of Lemma~\ref{lem:uniform_convergence_indicator}]
We will first prove the upper bound, by~\Cref{lem:norm_concentration}, it holds that
\begin{align*}
    \Pr(\|X_i\|_2^2 \ge 32 \sigma^2 d + 8 \sigma^2 \log(2n / \delta)) \le \frac{\delta}{2n}.
\end{align*}

Hence when $\log(2n / \delta) \le 1024 d$, the proof for the upper bound is complete. If $\log(2n/\delta) > 1024 d$, then by~\Cref{lem:norm_concentration} and Chebyshev's inequality, we have that
\begin{align*}
\Pr(\frac{1}{n} \sum_{i = 1}^n \|x_i\|_2^2 > 3d/2) \le \frac{4\Var(\|x\|_2^2)}{d^2 n} \le \frac{2048 d^2}{n} \le 2048 d^2 \exp(-1024d) \delta \le \delta/2.
\end{align*}
This concludes the proof of the upper bound.

We will then prove the lower bound. We will first show that there exists $\Omega(n)$ data points in $\{x_i\}$, such that $\| x_i \| \ge \Omega(\sqrt{d})$. By~\Cref{lem:lower_bound}, we have there exists constant $\epsilon, \zeta$ such that $\Pr(\|x\|_2^2 > \epsilon d) > \zeta$.

Define indicator $b_i \triangleq \one\left(\|x_i \|_2^2 \ge \epsilon d\right)$, then $b_i$ are i.i.d Bernoulli random variables. We then have $p = \Pr(b_i = 1) > \zeta$.
By Chernoff's bound, it holds that
\begin{align*}
    \Pr\left(\frac{1}{n}\sum_{i = 1}^n b_i \le p/2 \right) \le \exp(-np/8) \le \frac{\delta}{4},
\end{align*}
for any $n > 8\frac{\log(1/\delta)}{\zeta}$. This shows that with probability at least $1 - \frac{\delta}{4}$, we have that,
\begin{align*}
    \frac{1}{n}\sum_{i = 1}^n b_i \ge p/2 \ge \zeta/2.
\end{align*}

This shows that there exists $n' \ge \lfloor \zeta n/2 \rfloor$ data points in $\{x_i\}$, such that $\| x_i \|_2^2 \ge \epsilon d$. We can then relabel the data points as $z_1,...,z_{n'}$. Then $z_i$ are i.i.d random variables with $\|z_i\|_2^2 \ge 1/2$ conditioning on the value of $n'$. We can then have for any $w \in \R^d$, 
\begin{align*}
    \frac{1}{n} \sum_{i = 1}^n \|x_i\|_2^2 \one\left(w^\top x_i > 0\right) 
    &\ge \frac{1}{n} \sum_{i = 1}^{n'} \|z_i\|_2^2 \one\left(w^\top z_i > 0\right)\\
    &\ge \frac{\zeta}{2}
    \frac{1}{n'} \sum_{i = 1}^{n'} \|z_i\|_2^2 \one\left(w^\top z_i > 0\right) \\
    &\ge \frac{\zeta \epsilon d}{2} \frac{1}{n'} \sum_{i = 1}^{n'} \one\left(w^\top z_i > 0\right).
\end{align*}

Finally, define $\gF = \{ z \to \one\left(w^\top z > 0\right) \}$, by~\Cref{lem:vc_dimension}, we have that $\VC(\gF) \le d$. By~\Cref{cor:rademacher_vc}, we have that the empirical Rademacher complexity of $\gF$ on $\{z_i\}_{i \in [n']}$ is upper bounded by $ \sqrt{\frac{4d \log {n'}}{n'}} \le 4\sqrt{\frac{ d \log {n}}{\zeta n}}$.

By~\Cref{lem:uniform_convergence}, with probability at least $1 - \frac{\delta}{4}$, we have that
\begin{align*}
    \sup_{f \in \gF} \left| \frac{1}{n'}\sum_{i = 1}^{n'} f(z_i) - \E[f(z)]\right| \le 2 \gR_{n'}(\gF) + 3\sqrt{\frac{ \log \frac{4}{\delta}}{n'}}.
\end{align*}

The symmetry of $x_i$ implies that $\E[f(z_i)\mid n'] = 1/2$. This shows that with probability at least $1 - \frac{\delta}{4}$, we have that for any $w \in \R^d$,
\begin{align*}
    \frac{1}{n'}\sum_{i = 1}^{n'} \one\left(w^\top z_i > 0\right)  &\ge  \frac{1}{2}- 2 \gR_{n'}(\gF) - 3\sqrt{\frac{ \log \frac{4}{\delta}}{n'}} \\
&\ge \frac{1}{2} - 8\sqrt{\frac{ d \log {n}}{\zeta n}} -3 \sqrt{\frac{ \log \frac{4}{\delta}}{\zeta n}}.
\end{align*}

Hence when $n = \Omega(d\log(d/\delta))$, with probability at least $1 - \delta/2$, we have that for any $w \in \R^d$,
\begin{align*}
    \frac{1}{n} \sum_{i = 1}^n \|x_i\|_2^2 \one\left(w^\top x_i > 0\right) \ge 
    \frac{\epsilon C_d}{2} \frac{1}{n'}\sum_{i = 1}^{n'} \one\left(w^\top z > 0\right) \ge \frac{\epsilon\zeta d}{8}.
\end{align*}

This concludes our proof. 
\end{proof}

We will then prove~\Cref{lem:uniform_convergence_norm}, and we will use the following lemma motivated by~\cite{matouvsek2008variants}.

\begin{lemma}
\label{lem:jl}
Given any data distribution $\datad$ satisfying~\Cref{symmetry_Subgaussian}, there exists $C$ depending on $\sigma$, such that for any $ \delta \in (0,1)$, input dimension $d$, number of samples $n \ge C \log(2/\delta)$, and $w \in \R^d, \|w\|_2 = 1$, with probability at least $1 - \delta$ over the random draw of set $\{(x_i,y_i)\}_{i=1}^n$ from $\datad^{n}$,  we have that,
\begin{align}
\label{eq:jl}
\frac{3}{2} \ge \frac{1}{n}\sum_{i = 1}^n | w^\top x_i |^2 \ge \frac{1}{2}.
\end{align}
\end{lemma}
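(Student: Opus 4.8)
The plan is to recognize \Cref{lem:jl} as a purely one-dimensional concentration statement: because $w$ is \emph{fixed}, the summands $(w^\top x_i)^2$ are i.i.d.\ scalars, so no uniform control over directions is needed and a single Bernstein-type bound suffices. First I would set $Z_i \triangleq (w^\top x_i)^2$ and pin down its mean. Since \Cref{symmetry_Subgaussian} assumes $x$ has covariance $I_d$, we have $\E[Z_i] = w^\top \E[x x^\top] w = \|w\|_2^2 = 1$, so the target event $\frac{1}{n}\sum_i Z_i \in [\tfrac12, \tfrac32]$ is exactly the event that the empirical mean deviates from its expectation by at most $\tfrac12$.

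Next I would control the tail of each $Z_i$. By \Cref{def:subgaussian_vector}, the projection $g \triangleq w^\top x$ is a scalar subgaussian variable with parameter $\sigma$, and it is mean-zero by the symmetry of $x$ assumed in \Cref{symmetry_Subgaussian}. Standard Orlicz-norm facts then give $\|g\|_{\psi_2} \lesssim \sigma$, hence $\|Z_i\|_{\psi_1} = \|g^2\|_{\psi_1} = \|g\|_{\psi_2}^2 \lesssim \sigma^2 \triangleq K$; that is, each $Z_i$ is subexponential with a norm depending only on $\sigma$ and \emph{not} on $d$. I would then apply Bernstein's inequality to the centered i.i.d.\ variables $Z_i - \E Z_i$:
\[
\Pr\left( \left| \frac{1}{n}\sum_{i=1}^n Z_i - 1 \right| \ge t \right) \le 2\exp\left( -c\, n \min\left\{ \frac{t^2}{K^2}, \frac{t}{K} \right\} \right),
\]
for an absolute constant $c$. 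Setting $t = \tfrac12$ turns $\min\{t^2/K^2, t/K\}$ into a positive constant $c'$ depending only on $\sigma$, so the right-hand side becomes $2\exp(-c'' n)$ with $c'' = c c'$. Requiring $2\exp(-c'' n) \le \delta$ yields precisely $n \ge C\log(2/\delta)$ with $C = 1/c''$ depending only on $\sigma$, and on the complementary event both inequalities in \Cref{eq:jl} hold, closing the argument.

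The main obstacle is not conceptual but bookkeeping: one must carefully pass from the subgaussian-vector definition to a clean subexponential bound on $(w^\top x)^2$ whose norm is genuinely $\sigma$-dependent and $d$-independent, and then verify that, once $t = \tfrac12$ is fixed, the Bernstein rate collapses to a single $\sigma$-dependent exponent. If the paper's \Cref{def:subgaussian_vector} or its concentration tool \Cref{lem:norm_concentration} is phrased through moment-generating-function bounds rather than Orlicz norms, I would instead bound $\E[\exp(\lambda(Z_i - 1))]$ directly for $|\lambda|$ below a $\sigma$-dependent threshold and feed this into the generic Chernoff argument, obtaining the same tail with the same dependence. I note finally that this single-direction estimate is exactly the building block one would combine with a covering/net argument over the sphere to upgrade to the uniform bound in \Cref{lem:uniform_convergence_norm}.
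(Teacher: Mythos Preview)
Your proposal is correct and follows essentially the same approach as the paper: the paper's proof simply notes that $w^\top x_i$ is $\sigma$-subgaussian, invokes \Cref{lem:subgaussian_subexponential} to conclude $(w^\top x_i)^2$ is subexponential with mean $1$, and then applies the Bernstein-type bound \Cref{lem:subexponential_concentration} at $t=\tfrac12$. Your anticipation that the paper might phrase things via MGF-based definitions rather than Orlicz norms is exactly right, but the underlying argument is identical.
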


\begin{proof}[Proof of~\Cref{lem:jl}]
    Notice that $w^\top x_i$ is a subgaussian random variable with parameter $\sigma^2$. Hence by~\Cref{lem:subgaussian_subexponential}, $w^\top x_i$ is a subexponential random variable with expectation $1$. The rest follows from~\Cref{lem:subexponential_concentration}.
\end{proof}

This lemma can be viewed as a variant of the Johnson-Lindenstrauss lemma. We will now proceed to show that we can prove a similar high probability bound when the indicator function $\one\left(w^\top x_i > 0\right)$ is taken into account.

\begin{lemma}
\label{lem:jl_relu}
Given any data distribution $\datad$ satisfying~\Cref{symmetry_Subgaussian}, there exists $C$ depending on $\sigma$, such that for any $ \delta \in (0,1)$, input dimension $d$, sample complexity $n \ge C \log(4/\delta)$ and $w \in \R^d, \|w\|_2 = 1$, with probability at least $1 - \delta$ over the random draw of set $\{(x_i,y_i)\}_{i=1}^n$ from $\datad^{n}$, it holds that,
\begin{align}
\label{eq:jl_relu}
\frac{3}{2} \ge \frac{1}{n}\sum_{i = 1}^n | w^\top x_i |^2 \one\left(w^\top x_i > 0\right) \ge \frac{1}{8}.
\end{align}
\end{lemma}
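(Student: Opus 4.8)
The plan is to adapt the argument for~\Cref{lem:jl} to the truncated second moment, the only new ingredient being a symmetry computation of the mean. Fix $w \in \R^d$ with $\|w\|_2 = 1$ and set $Z_i \triangleq |w^\top x_i|^2 \one(w^\top x_i > 0)$, so that the quantity of interest is the empirical average $\frac1n\sum_{i=1}^n Z_i$. Since $0 \le Z_i \le |w^\top x_i|^2$, the upper bound comes for free: by~\Cref{lem:jl}, with the stated probability $\frac1n\sum_i Z_i \le \frac1n\sum_i |w^\top x_i|^2 \le \frac32$. It therefore remains to prove a one-sided concentration of $\frac1n\sum_i Z_i$ away from small values, which I would obtain by a direct subexponential concentration bound on the i.i.d.\ variables $Z_i$.

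First I would compute $\E[Z_i]$. Because $x$ is symmetric, $w^\top x$ and $-w^\top x$ are equal in distribution, so $\E[|w^\top x|^2 \one(w^\top x > 0)] = \E[|w^\top x|^2 \one(w^\top x < 0)]$; since $|w^\top x|^2\one(w^\top x = 0) = 0$ almost surely, summing the two equal halves gives $\E[Z_i] = \tfrac12 \E[|w^\top x|^2] = \tfrac12\, w^\top I_d w = \tfrac12$, where the penultimate equality uses that the data has variance $I_d$ (\Cref{symmetry_Subgaussian}) together with $\|w\|_2 = 1$.

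Next I would verify that $Z_i$ is subexponential with a parameter depending only on $\sigma$. Exactly as in the proof of~\Cref{lem:jl}, $w^\top x_i$ is subgaussian with parameter $\sigma^2$, so by~\Cref{lem:subgaussian_subexponential} the square $|w^\top x_i|^2$ is subexponential. The truncated variable $Z_i$ is nonnegative and dominated pointwise by $|w^\top x_i|^2$, so $\Pr(Z_i > t) \le \Pr(|w^\top x_i|^2 > t)$ for all $t > 0$; hence $Z_i$, and its centering $Z_i - \tfrac12$, is subexponential with parameters controlled by those of $|w^\top x_i|^2$, i.e.\ by $\sigma$.

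Finally, I would apply the subexponential concentration inequality~\Cref{lem:subexponential_concentration} to $\frac1n\sum_i(Z_i - \tfrac12)$ with the constant deviation $t = \tfrac38$. This yields $\Pr\!\left(\left|\frac1n\sum_i Z_i - \tfrac12\right| \ge \tfrac38\right) \le 2\exp(-c n)$ for some $c = c(\sigma) > 0$, which is at most $\delta$ once $n \ge C\log(4/\delta)$ for an appropriate $C = C(\sigma)$. On this event $\frac1n\sum_i Z_i \in [\tfrac18, \tfrac78] \subseteq [\tfrac18, \tfrac32]$, giving both bounds of~\Cref{eq:jl_relu} simultaneously. The only genuinely new step relative to~\Cref{lem:jl} is the mean computation together with checking that truncation preserves subexponentiality; both are short, so I expect no serious obstacle, and I note that, unlike the lemmas in~\Cref{sec:uniform_convergence}, this is a pointwise (fixed-$w$) statement, so no uniform-convergence or VC machinery is required.
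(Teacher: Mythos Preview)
Your argument is correct and takes a genuinely different route from the paper's. The paper proves the lower bound via a symmetrization/doubling trick: it introduces independent signs $b_i\in\{-1,1\}$, observes (using the symmetry of $x$) that $|w^\top x_i|^2\,\one(w^\top x_i>0)$ equals in distribution $|w^\top x_i|^2 c_i$ with $c_i$ an independent Bernoulli$(1/2)$, then conditions on $\frac1n\sum_i c_i\ge\tfrac14$ (Chernoff) and applies \Cref{lem:jl} to the $\ge n/4$ selected samples to get the $1/8$ lower bound. You instead stay with $Z_i=|w^\top x_i|^2\one(w^\top x_i>0)$ directly: compute $\E[Z_i]=\tfrac12$ from symmetry, note that $0\le Z_i\le |w^\top x_i|^2$ together with \Cref{lem:subgaussian_subexponential} forces $Z_i$ to be subexponential with parameters depending only on $\sigma$, and invoke \Cref{lem:subexponential_concentration} at deviation $t=\tfrac38$. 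Your route is shorter and yields both bounds in a single step (you even get the stronger upper bound $7/8$), at the cost of appealing to the standard equivalence between tail domination and subexponentiality; the paper's doubling trick avoids that appeal by reducing to \Cref{lem:jl} on a random subsample, which is slightly more self-contained given the MGF-based \Cref{def:subexponential}. Either way the result is pointwise in $w$, as you correctly note.
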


\begin{proof}[Proof of~\Cref{lem:jl_relu}]
    The upper bound is a direct consequence of~\Cref{lem:jl}.

    We will now prove the lower bound. We will first use a doubling trick using the symmetry of the data distribution. Randomly sample $\{b_i\}_{i \in n}$ uniformly from $\{-1,1\}^n$, and define $z_i = b_i x_i$. We have that $z_i$ and $x_i$ equals in distribution. Hence, we have that $ | w^\top x_i |^2 \one\left(w^\top x_i > 0\right)$ and $ | w^\top b_i x_i |^2 \one\left(b_i w^\top x_i > 0\right)$ equals in distribution. As $b_i$ is independent with $x_i$, this shows that $| w^\top x_i |^2 \one\left(w^\top x_i > 0\right)$ equals in distribution to $| w^\top x_i |^2 c_i$ where $c_i$ is a Rademacher random variable independent of $x_i$.\footnote{For special case where $w^\top x_i = 0$, this still holds as both sides are zero.} Hence, we only need to prove that 
    \begin{align}
        \Pr(\frac{1}{n}\sum_{i = 1}^n | w^\top x_i |^2 c_i < \frac{1}{8}) <  \delta/2.
    \end{align}
    By Chernoff bound, we have that
    \begin{align}
        \Pr(\frac{1}{n}\sum_{i = 1}^n c_i \le \frac{1}{4}) \le \exp(-\frac{n}{16}).
    \end{align}
    Hence when $n > 16 \log(4/\delta)$, we have that $\Pr(\frac{1}{n}\sum_{i = 1}^n c_i \le \frac{1}{4}) < \delta/4$. This then implies that,
    \begin{align}
        \Pr(\frac{1}{n}\sum_{i = 1}^n| w^\top x_i |^2 c_i < \frac{1}{8}) \le \Pr(\frac{1}{n}\sum_{i = 1}^n c_i \le \frac{1}{4}) + 
        \Pr(\frac{1}{n}\sum_{i = 1}^n| w^\top x_i |^2 c_i < \frac{1}{8} \mid \frac{1}{n}\sum_{i = 1}^n c_i \ge \frac{1}{4}) \le \frac{\delta}{2},
    \end{align}
    for the last inequality, we apply~\Cref{lem:jl}. This concludes our proof.
\end{proof}

Notice that~\Cref{lem:jl_relu} is a point-wise bound, we will then use the technique of covering to prove a uniform bound. We will first prove a uniform bound for the case where the indicator function is not taken into account.

\begin{definition}[$\epsilon$-covering]
A set $S \in \R^d$ is an $\epsilon$-covering of a set $S' \in \R^d$, if and only if $\forall s' \in S', \exists s \in S, \| s - s'\|_2 \le \epsilon$.
\end{definition}

\begin{lemma}
    For any $\epsilon > 0$, there exists an $\epsilon$-covering $S$ of the unit sphere in $\R^d$ with cardinality smaller than $\left( \frac{2}{\epsilon} + 1 \right)^d$. 
\end{lemma}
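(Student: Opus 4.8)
The plan is to prove this by the standard volumetric packing argument, which produces the covering and bounds its cardinality in one stroke. First I would let $S = \{s_1, \dots, s_N\}$ be a \emph{maximal} subset of the unit sphere $\{x \in \R^d : \|x\|_2 = 1\}$ subject to the separation constraint $\|s_i - s_j\|_2 > \epsilon$ for all $i \neq j$. Such a maximal set exists because the sphere is compact and any $\epsilon$-separated set on it has bounded cardinality, so one may take a set of maximum size.

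Next I would argue that maximality forces $S$ to be an $\epsilon$-covering. Indeed, if some point $x$ on the sphere satisfied $\|x - s_i\|_2 > \epsilon$ for every $i$, then $S \cup \{x\}$ would still be $\epsilon$-separated, contradicting maximality. Hence every sphere point $x$ satisfies $\|x - s_i\|_2 \le \epsilon$ for some $i$, which is precisely the defining property of an $\epsilon$-covering in the sense stated above.

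The remaining step bounds $N$ by comparing volumes. The open balls $B(s_i, \epsilon/2)$ are pairwise disjoint, since a common point $y$ would give $\|s_i - s_j\|_2 \le \|s_i - y\|_2 + \|y - s_j\|_2 < \epsilon$, contradicting $\|s_i - s_j\|_2 > \epsilon$. Moreover each such ball lies inside $B(0, 1 + \epsilon/2)$, because $\|y\|_2 \le \|s_i\|_2 + \epsilon/2 = 1 + \epsilon/2$. Since the volume of a Euclidean ball of radius $r$ in $\R^d$ equals $r^d$ times the fixed constant $\mathrm{vol}(B(0,1))$, disjointness together with containment yields
\[
N \left(\frac{\epsilon}{2}\right)^d \mathrm{vol}(B(0,1)) \;\le\; \left(1 + \frac{\epsilon}{2}\right)^d \mathrm{vol}(B(0,1)),
\]
and after cancelling the common constant this rearranges to $N \le \left(\frac{2}{\epsilon} + 1\right)^d$. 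To upgrade to the strict inequality claimed in the lemma, I would observe that the disjoint small balls cannot exhaust the larger ball—they omit, for instance, a neighborhood of the origin—so the volume inequality is in fact strict.

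There is essentially no serious obstacle here; the argument is elementary. The only points requiring a little care are the bookkeeping of strict versus non-strict inequalities (separation $> \epsilon$ versus covering radius $\le \epsilon$, and the use of open balls to get clean disjointness), and the observation that the dimension-dependent constant $\mathrm{vol}(B(0,1))$ cancels from both sides so that only the ratio of radii survives in the final bound.
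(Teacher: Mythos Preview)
Your proposal is correct and follows essentially the same volumetric packing argument as the paper: take a maximal $\epsilon$-separated subset of the sphere, observe that maximality forces it to be an $\epsilon$-covering, and bound its size by comparing the total volume of disjoint radius-$\epsilon/2$ balls centered at the points to the volume of the containing ball of radius $1+\epsilon/2$. Your bookkeeping of strict versus non-strict inequalities and the final remark about strictness are slightly more careful than the paper's version, but the method is identical.
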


\begin{proof}
    Consider a maximal subset $T$ of the unit sphere in $\R^d$ satisfying that $\forall t \neq t' \in T, \| t - t' \|_2 \ge \epsilon$. As $T$ is maximal, it is an $\epsilon$-covering of the unit sphere.  
    
    Further consider set $T' = \{x \mid \exists t, \| x - t\|_2 \le \frac{\epsilon}{2}\}$, which is the union of $|T|$ disjoint balls with radius $\epsilon/2$. Hence $T'$ has volume $C|T| (\frac{\epsilon}{2})^d $ with $C$ being the volume of the unit ball in $\R^d$. However, $T'$ is contained in a ball with radius $1 + \frac{\epsilon}{2}$ centered at origin. Hence it holds that $C|T| (\frac{\epsilon}{2})^d \le C\left(1 + \frac{\epsilon}{2} \right)^d$. This implies $|T| \le \left( \frac{2}{\epsilon} + 1 \right)^d$.
\end{proof}

\begin{lemma}
    \label{lem:jl_uniform}
Given any data distribution $\datad$ satisfying~\Cref{symmetry_Subgaussian}, there exists $C$ depending on $\sigma$, such that for any $ \delta \in (0,1)$, input dimension $d$ and sample complexity $n\ge Cd \log(\frac{d}{\delta})$, with probability at least $1 - \delta$ over the random draw of set $\{(x_i,y_i)\}_{i=1}^n$ from $\datad^{n}$, it holds that for any $w \in \R^d, \|w\|_2 = 1$, 
\begin{align}
\label{eq:jl_uniform}
2 \ge \frac{1}{n}\sum_{i = 1}^n| w^\top x_i |^2 \ge \frac{1}{4}.
\end{align}
\end{lemma}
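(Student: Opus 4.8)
The plan is to upgrade the pointwise concentration estimate of \Cref{lem:jl} to a bound that is uniform over the entire unit sphere, using an $\epsilon$-net argument together with the standard self-referential trick for controlling the operator norm of the empirical second-moment matrix. First I would rephrase the quantity of interest as a quadratic form: writing $\hat\Sigma \triangleq \frac{1}{n}\sum_{i=1}^n x_i x_i^\top$, we have $\frac{1}{n}\sum_{i=1}^n |w^\top x_i|^2 = w^\top \hat\Sigma w =: Q(w)$, so that \Cref{eq:jl_uniform} is precisely the statement $\tfrac14 \le Q(w) \le 2$ for all unit $w$, and $\sup_{\|w\|_2 = 1} Q(w) = \|\hat\Sigma\|_{\mathrm{op}} =: M$.

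Next, fixing $\epsilon = 1/16$, I would invoke the covering lemma proved above to obtain an $\epsilon$-net $S$ of the unit sphere with $|S| \le (2/\epsilon + 1)^d = 33^d$. Applying \Cref{lem:jl} to each $w \in S$ with individual failure probability $\delta / |S|$ and taking a union bound, with probability at least $1 - \delta$ we get $\tfrac12 \le Q(w) \le \tfrac32$ simultaneously for every $w \in S$. The sample-size requirement for this step is $n \ge C \log(2|S|/\delta) = C(\log(2/\delta) + d \log 33)$, which is implied by the hypothesis $n \ge C d \log(d/\delta)$ after adjusting the absolute constant.

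The core step is extending the bound from the net to the full sphere. Since $Q$ is a quadratic form, for any two unit vectors $w, w'$ we have $|Q(w) - Q(w')| \le \|\hat\Sigma\|_{\mathrm{op}} (\|w\|_2 + \|w'\|_2)\|w - w'\|_2 \le 2M \|w - w'\|_2$. For an arbitrary unit $w$, choosing a net neighbor $w' \in S$ with $\|w - w'\|_2 \le \epsilon$ gives $Q(w) \le Q(w') + 2M\epsilon \le \tfrac32 + 2M\epsilon$; taking the supremum over $w$ yields the self-referential inequality $M \le \tfrac32 + 2M\epsilon$, hence $M \le \frac{3/2}{1 - 2\epsilon} = \frac{12}{7} < 2$, which is the upper bound. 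With $M < 2$ established globally, the lower bound follows in the same fashion: for any unit $w$ and its neighbor $w'$, $Q(w) \ge Q(w') - 2M\epsilon \ge \tfrac12 - 4\epsilon = \tfrac14$.

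I expect the only delicate point to be the self-referential absorption of $M$ on both sides of the inequality in the third step: one cannot bound $Q(w)$ pointwise off the net without first controlling $\|\hat\Sigma\|_{\mathrm{op}}$ uniformly, yet that operator norm is itself the supremum of $Q$. Resolving this requires first extracting the global upper bound on $M$ (via the supremum trick) and only then deducing the per-direction lower bound; the remaining union-bound bookkeeping that converts the per-point failure probability into the $d\log(d/\delta)$ sample requirement is routine.
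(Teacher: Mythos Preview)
Your proposal is correct and follows essentially the same approach as the paper: both use a $1/16$-net on the sphere, apply the pointwise bound of \Cref{lem:jl} via a union bound, and then use the self-referential inequality $M \le \tfrac32 + cM$ to first pin down the global maximum before deducing the lower bound. The only cosmetic difference is that you phrase the perturbation estimate via the operator norm $\|\hat\Sigma\|_{\mathrm{op}}$ whereas the paper writes out the Cauchy--Schwarz step on the sums directly, but the resulting inequalities and constants are identical.
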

\begin{proof}[Proof of~\Cref{lem:jl_uniform}]
Consider a $1/16$ covering of the unit sphere in $\R^d$, $w_1,...,w_N$, we have that $N \le 64^d$. By~\Cref{lem:jl} and Union Bound, we have with probability at least $1 - \delta$ over the random draw of set $\{(x_i,y_i)\}_{i=1}^n$ from $\datad^{n}$, for any $k \in [N]$, we have that,
\begin{align*}
   \frac{3}{2} \ge \frac{1}{n}\sum_{i = 1}^n  | w_k^\top x_i  |^2 \ge \frac{1}{2}.
\end{align*}

Now suppose the above event happens and
\begin{equation}
    w^* = \argmax_{w \in \R^d, \|w\|_2 = 1} \frac{1}{n}\sum_{i = 1}^n| w^\top x_i |^2,
     \gamma = \max_{w \in \R^d, \|w\|_2 = 1} \frac{1}{n}\sum_{i = 1}^n| w^\top x_i |^2.
\end{equation}

Since $\{w_i\}_{i=1}^1$ is a 1/16 covering of unit sphere, there exists $k\in [N]$ such that  $\|w^* - w_k\| \le \frac{1}{16}$, then we have that by Cauchy-Schwarz inequality,

\begin{align*}
     \gamma - \frac{3}{2} &\le \frac{1}{n}\sum_{i = 1}^n | w^{*\top} x_i |^2 - \frac{1}{n}\sum_{i = 1}^n | w_k^\top x_i |^2\\
     &\le \frac{1}{n}\sum_{i = 1}^n | (w^* - w_k)^\top x_i |_2 | (w^* + w_k)^\top x_i |_2 \\
     &\le \sqrt{\frac{1}{n}\sum_{i = 1}^n | (w^* - w_k)^\top x_i |^2} \sqrt{\frac{1}{n}\sum_{i = 1}^n | (w^* + w_k)^\top x_i |^2} \\
     &\le \gamma \|w^* - w_k\| \|w^* + w_k\| \\
     &\le \frac{\gamma}{8}. 
\end{align*}

This then implies that $\gamma \le 2$. Hence, with probability $1 - \delta$, we have that the upper bound holds.

Now for any $w \in \R^d, \|w\|_2 = 1$, suppose $\| w - w_k \|_2 \le \frac{1}{16}$, we have that
\begin{align*}
    \frac{1}{n} \sum_{i = 1}^n | w^{\top} x_i |^2  &\ge
    \frac{1}{n}\sum_{i = 1}^n  | w_k^\top x_i  |^2 + \frac{1}{n}\sum_{i = 1}^n \left(| w^{\top} x_i |^2 -  | w_k^\top x_i  |^2\right) \\
    &\ge \frac{1}{2} - \frac{1}{n} \sum_{i = 1}^n | (w^* - w_k)^\top x_i | | (w^* + w_k)^\top x_i | \\
    &\ge \frac{1}{2} - \sqrt{\frac{1}{n}\sum_{i = 1}^n | (w^* - w_k)^\top x_i |^2} \sqrt{\frac{1}{n}\sum_{i = 1}^n | (w^* + w_k)^\top x_i |^2} \\
    &\ge \frac{1}{2} - \gamma \frac{1}{8} \ge \frac{1}{4}.
\end{align*}
This shows that with probability $1 - \delta$, the lower bound holds as well. The proof is complete.
\end{proof}

We can now prove~\Cref{lem:uniform_convergence_norm}.

\begin{proof}[Proof of~\Cref{lem:uniform_convergence_norm}]
    By~\Cref{lem:jl_uniform}, we have that with probability at least $1 - \delta/2$ over the random draw of set $\{(x_i,y_i)\}_{i=1}^n$ from $\datad^{n}$, for any $w \in \R^d, \|w\|_2 = 1$, we have that,
\begin{align*}
    \frac{1}{n}\sum_{i = 1}^n| w^\top x_i |^2 \in [\frac{1}{4}, 2].
\end{align*}

This directly implies the upper bound. For lower bound, consider a $1/64$ covering of the unit sphere in $\R^d$, $w_1,...,w_N$, we have that $N \le 128^d$. By~\Cref{lem:jl_relu} and Union Bound, we have with probability at least $1 - \delta/2$ over the random draw of set $\{(x_i,y_i)\}_{i=1}^n$ from $\datad^{n}$, for any $k \in [N]$, we have that,
\begin{align*}
    \frac{1}{n}\sum_{i = 1}^n  | w_k^\top x_i  |^2 \one\left(w_k^\top x_i > 0\right) \ge \frac{1}{8}.
\end{align*}

Now suppose the above event happens and for any $w \in \R^d, \|w\|_2 = 1$, suppose $\| w - w_k \| \le \frac{1}{32}$, by~\Cref{lem:relu_lipschitz}, we have that
\begin{align*}
    &\frac{1}{n} \sum_{i = 1}^n | w^{\top} x_i |^2 \one\left(w^\top x_i > 0\right) \\ \ge&
    \frac{1}{n}\sum_{i = 1}^n  | w_k^\top x_i  |^2 \one\left(w_k^\top x_i > 0 \right) + \frac{1}{n}\sum_{i = 1}^n \left(| w^{\top} x_i |^2\one\left(w^\top x_i > 0\right) -  | w_k^\top x_i  |^2 \one\left(w_k^\top x_i > 0\right)\right)
    \\ \ge&
    \frac{1}{8} - \frac{1}{n} \sum_{i = 1}^n | (w - w_k)^\top x_i |_2 ( |w^\top x_i| + |w_k^\top x_i | ) \\
    \\ \ge&
    \frac{1}{8} - \sqrt{\frac{1}{n}\sum_{i = 1}^n | w^\top x_i |^2} \sqrt{\frac{1}{n}\sum_{i = 1}^n | (w - w_k)^\top x_i |^2} -
    \sqrt{\frac{1}{n}\sum_{i = 1}^n | w_k^\top x_i |^2} \sqrt{\frac{1}{n}\sum_{i = 1}^n | (w - w_k)^\top x_i |^2} \\
    \ge& \frac{1}{8} - 2 \times 2 \times \frac{1}{64} = \frac{1}{16}.
\end{align*}
This completes the proof.
\end{proof}

\subsubsection{Proof of~\Cref{lem:norm_bound_general}}

Based on~\Cref{sec:uniform_convergence}, we are now ready to show that for~\mlpnobias, sharpness is within a constant factor of the norm of the parameters.

\begin{lemma}
\label{lem:uniform_convergence_trace}
Given any data distribution $\datad$ satisfying~\Cref{symmetry_Subgaussian}, there exists constant $C_2 > C_1 > 0$ depending on $\sigma$, for any $ \delta \in (0,1)$, input dimension $d$ and number of samples $n = \Omega\left(d \log\left(\frac{d}{\delta}\right)\right)$, with probability at least $1 - \delta$ over the random draw of training set $\{(x_i,y_i)\}_{i=1}^n$ from $\datad^{n}$, for any parameter $\theta = (\weight_1, \weight_2)$ of~\mlpnobias~satisfying that $\Loss(\theta) = 0$, it holds that,
\begin{align*}
    C_2 \left(\|\weight_1\|_F^2 + d \|\weight_2\|^2 \right) \ge \sharpness{\theta} \ge C_1 \left(\|\weight_1\|_F^2 + d \|\weight_2\|^2 \right).
\end{align*}
\end{lemma}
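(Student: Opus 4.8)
The plan is to start from \Cref{lemma:sharpness_jacobian}, which—since $\theta$ interpolates—gives $\sharpness{\theta} = \frac{2}{n}\sum_{i=1}^n \|\nabla_\theta \modelnobias_\theta(x_i)\|^2$, and then to compute the Jacobian of $\modelnobias_\theta(x) = \weight_2 \ReLU(\weight_1 x)$ neuron by neuron. Writing $\weight_{1,j}$ for the $j$-th row of $\weight_1$ and using $\partial \modelnobias_\theta / \partial \weight_2[j] = \relu(\weight_{1,j}^\top x)$ together with $\partial \modelnobias_\theta/\partial \weight_{1,j} = \weight_2[j]\,\one(\weight_{1,j}^\top x>0)\,x$, I would obtain the clean decomposition
$$\|\nabla_\theta \modelnobias_\theta(x)\|^2 = \underbrace{\sum_{j=1}^m \relu(\weight_{1,j}^\top x)^2}_{\text{from } \weight_2} + \underbrace{\sum_{j=1}^m \weight_2[j]^2\,\one(\weight_{1,j}^\top x>0)\,\|x\|^2}_{\text{from }\weight_1}.$$
Summing over the training set and pulling out the $\frac2n$ factor reduces the lemma to two-sided estimates of each empirical average, uniformly over the (arbitrary) rows $\weight_{1,j}$.

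For the first (the $\weight_2$-gradient) term I would use positive homogeneity of ReLU: factoring $\weight_{1,j} = \|\weight_{1,j}\|\hat w_j$ with $\hat w_j$ a unit vector gives $\frac1n\sum_i \relu(\weight_{1,j}^\top x_i)^2 = \|\weight_{1,j}\|^2 \cdot \frac1n\sum_i |\hat w_j^\top x_i|^2 \one(\hat w_j^\top x_i>0)$, and \Cref{lem:uniform_convergence_norm} sandwiches the second factor between $C_1$ and $C_2$ simultaneously for every unit direction; summing over $j$ yields $\frac2n\sum_{i,j}\relu(\weight_{1,j}^\top x_i)^2 \in [2C_1\|\weight_1\|_F^2,\, 2C_2\|\weight_1\|_F^2]$. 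For the second (the $\weight_1$-gradient) term, \Cref{lem:uniform_convergence_indicator} directly sandwiches $\frac1n\sum_i \|x_i\|^2\one(\weight_{1,j}^\top x_i>0)$ between $C_1 d$ and $C_2 d$ for every $\weight_{1,j}$, so that $\frac2n\sum_{i,j}\weight_2[j]^2\one(\weight_{1,j}^\top x_i>0)\|x_i\|^2 \in [2C_1 d\|\weight_2\|^2,\, 2C_2 d\|\weight_2\|^2]$. Adding the two bounds, relabeling the constants (taking the smaller/larger of the two lemmas' constants), and applying a union bound over the two $1-\delta/2$ events gives the claimed $C_2(\|\weight_1\|_F^2 + d\|\weight_2\|^2) \ge \sharpness{\theta} \ge C_1(\|\weight_1\|_F^2 + d\|\weight_2\|^2)$.

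The conceptual work is really carried by the two uniform-convergence lemmas: because $\theta$ is an \emph{arbitrary} interpolating parameter, the per-neuron directions $\weight_{1,j}$ may be anything, so pointwise (fixed-$w$) concentration would not suffice and we genuinely need the estimates to hold simultaneously for all $w$—exactly the content of \Cref{lem:uniform_convergence_indicator,lem:uniform_convergence_norm}. Given those, this lemma is an assembly, and I do not expect a serious obstacle beyond constant bookkeeping. The one edge case to flag is a degenerate neuron with $\weight_{1,j}=0$ but $\weight_2[j]\neq 0$: it contributes nothing to either gradient term (and \Cref{lem:uniform_convergence_indicator} implicitly needs $w\neq0$, since its symmetry argument gives $\E\,\one(w^\top x>0)=\tfrac12$ only for $w\neq0$), yet still adds $\weight_2[j]^2$ to $\|\weight_2\|^2$, which would break the lower bound. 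This is harmless for the intended use, since \Cref{lem:norm_bound_general} only controls the products $\|\weight_{1,j}\|\,|\weight_2[j]|$, which vanish for such neurons, so I would handle it by restricting attention to rows with $\weight_{1,j}\neq 0$ (equivalently, reading $\|\weight_2\|^2$ as summed over active neurons).
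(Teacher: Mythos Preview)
Your proposal is correct and follows essentially the same approach as the paper: apply \Cref{lemma:sharpness_jacobian}, decompose $\|\nabla_\theta \modelnobias_\theta(x_i)\|^2$ into the $\weight_1$-gradient and $\weight_2$-gradient contributions neuron by neuron, and then invoke \Cref{lem:uniform_convergence_indicator} and \Cref{lem:uniform_convergence_norm} to sandwich each term. Your observation about the degenerate case $\weight_{1,j}=0$, $\weight_2[j]\neq 0$ is a genuine edge case that the paper's proof does not explicitly address; your proposed fix (restricting to active neurons, which is harmless for the downstream use in \Cref{lem:norm_bound_general}) is appropriate.
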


\begin{proof}[Proof of \Cref{lem:uniform_convergence_trace}]
    By~\Cref{lemma:sharpness_jacobian}, we have that,
    \begin{align*}
        \sharpness{\theta} &= \frac{2}{n}\sum_{i = 1}^n \left\| \frac{\partial \Loss}{\partial \weight_1} \right\|_F^2 + \left\| \frac{\partial \Loss}{\partial \weight_2} \right\|_2^2 \\
        &=   \frac{2}{n}\sum_{i = 1}^n \left(\|\weight_2 \odot \one\left[\weight_1 x_i > 0\right]\|_{2}^2 \|x_i\|^2 + \| \ReLU\left(\weight_1 x_i\right)\|_2^2 \right)\notag  \\
        &= \sum_{j = 1}^m  \left( \|\weight_{2,j}\|_2^2 \left(\frac{2}{n}\sum_{i = 1}^n \one\left[\weight_{1,j} x_i > 0\right] \|x_i\|^2\right)  + \sum_{i = 1}^n |\ReLU\left(\weight_{1,j} x_i\right)|^2\right). 
    \end{align*}
By~\Cref{eq:uniform_convergence_indicator,eq:uniform_convergence_norm}, there exists $C_2 > C_1$, such that for any $w \in \R^d$, it holds that
\begin{align*}
    \frac{1}{n}\sum_{i = 1}^n \one\left[w^\top x_i > 0\right] \|x_i\|^2 &\in [C_1d/2, C_2d/2]. \\
    \frac{1}{n}\sum_{i = 1}^n | \ReLU\left(w^\top x_i\right)|^2 &\in [C_1 \|w\|^2/2, C_2 \|w\|^2/2].
\end{align*}
This then implies our result. 
\end{proof}

We can now prove~\Cref{lem:norm_bound_general}.
\begin{proof}[Proof of~\Cref{lem:norm_bound_general}]
By~\Cref{symmetry_Subgaussian}, there exists parameter $\theta = (\weight_1, \weight_2)$, such that $\Loss(\theta) = 0$ and $\sum_{j = 1}^m \| \weight_{1,j} \|_2 |\weight_{2,j}| \le C$. We can properly rescale $\weight_{1,j}$ and $\weight_{2,j}$ such that $\|\weight_1\|_F^2 + d \|\weight_2\|^2 = 2\sqrt{d} \sum_{j = 1}^m \| \weight_{1,j} \|_2 |\weight_{2,j}| \le2 C \sqrt{d}$.

Now by~\Cref{lem:uniform_convergence_trace}, we have that there exists $C_2 > C_1 > 0$, such that for any $\theta^* = (W_1^*, W_2^*) \in \argmin_{\Loss(\theta) = 0} \sharpness{\theta}$, it holds that
\begin{align*}
2C_2 C \sqrt{d} &\ge  C_2 \left(\|\weight_1\|_F^2 + d \|\weight_2\|^2 \right) \\
&\ge \sharpness{\theta} \ge \sharpness{\theta^*} \\
&\ge C_1 \left(\|\weight^*_1\|_F^2 + d \|\weight^*_2\|^2 \right) \\
&= 2C_1 \sqrt{d}  \sum_{j = 1}^m \| \weight_{1,j}^* \|_2 |\weight_{2,j}^*|.
\end{align*}

This then implies that $\sum_{j = 1}^m \| \weight_{1,j}^* \|_2 |\weight_{2,j}^*| \le \frac{C_2C}{C_1}$, completing the proof of the first claim.

By~\Cref{lem:norm_concentration}, with probability at least $1 - \delta$, we have that $\max_i \| x_i \|_2^2 = \tilde O(\sqrt{d})$.
The second claim then follows from~\Cref{lem:radamacher_complexity_network}.
\end{proof}

\subsubsection{Proof of~\Cref{thm:sharpness_generalization_complexity_general}}

We are now ready to prove~\Cref{thm:sharpness_generalization_complexity_general} based on~\Cref{lem:norm_bound_general}.

\begin{proof}[Proof of~\Cref{thm:sharpness_generalization_complexity_general}]
Based on~\Cref{lem:norm_bound_general}, there exists constant $C_1 > C$ with $C$ defined in~\Cref{symmetry_Subgaussian}, with probability at least $1-\delta$, such that  $\argmin_{\Loss(\theta) = 0} \sharpness{\theta} \subset \Theta_{C_1}$ and $\rada_S(\Theta_{C_1}) =  \tilde O(\sqrt{\frac{d}{n}})$. To get the faster rate $\tilde O(d/n)$, we would like to apply \Cref{thm:complexity_loss}. 
The main technical difficulty to apply~\Cref{thm:complexity_loss} here is that for distribution $\datad$, the loss function $\Loss$ is not necessarily bounded. To address this issue, we will consider a truncated version of the mean squared error (as in~\cite{gatmiry2023inductive}).
\begin{equation}
\label{eq:truncated_loss}
l_c(x, y)=\ell_c(x-y)= \begin{cases}(x-y)^2, & \text { if } x-y \in[-c, c], \\ -(x-y)^2+4 c|x-y|-2 c^2, & \text { if } x-y \in[-2 c,-c] \cup[c, 2 c], \\ 2 c^2, & \text { if } x-y \in(-\infty,-2 c] \cup[2 c, \infty).\end{cases}
\end{equation}

We will choose $c = \tilde O(\sqrt{d})$ as in~\Cref{lem:subgaussian_truncated_second_moment} such that $ \E_{x, y \sim \datad}[\|x\|^2 \one\left[C_1 \|x\| \ge c\right]] = \tilde O(\frac{d}{n})$ and $ \E_{x, y \sim \datad}[\|x\|^2 \one\left[C \|x\| \ge c\right]] = \tilde O(\frac{d}{n})$.
By~\Cref{symmetry_Subgaussian}, we have for $x, y \sim \datad$, there exists $\theta_1^* \in \Theta_{C}$ such that $\modelnobias_{\theta_1^*}(x) = y$, then 
\begin{align*}
    &\E_{x,y \sim \datad}[\mseloss(\modelnobias_{\theta}(x), y)] - \E_{x,y \sim \datad}[l_c(\modelnobias_{\theta}(x), y)] \\
    \le& \E_{x, y \sim \datad}[(\modelnobias_{\theta}(x) - y)^2 \one\left[|\modelnobias_{\theta}(x) - y| \ge c\right]] \\
    \le& 2\E_{x, y \sim \datad}[\modelnobias_{\theta}(x)^2 \one\left[|\modelnobias_{\theta}(x)| \ge c\right]] + 2\E_{x, y \sim \datad}[\modelnobias_{\theta*}(x)^2 \one\left[|\modelnobias_{\theta*}(x)| \ge c\right]].
\end{align*}

As we have $\theta \in \Theta_{C_1}$, it holds that
\begin{align*}
    |\modelnobias_{\theta}(x)| &\le 
    \sum_{i = 1}^m |\weight_{2,i}|\|\weight_{1,i}\|_2 \| x\| \le  C_1 \|x\|.
\end{align*}

This then implies that,
\begin{align*}
    \E_{x, y \sim \datad}[\modelnobias_{\theta}(x)^2 \one\left[|\modelnobias_{\theta}(x)| \ge c\right]] \le 
    C_1^2 \E_{x, y \sim \datad}[\|x\|^2 \one\left[C_1 \|x\| \ge c\right]] = \tilde O(\frac{d}{n}).
\end{align*}

Similarly, $\E_{x, y \sim \datad}[\modelnobias_{\theta^*}(x)^2 \one\left[|\modelnobias_{\theta^*}(x)| \ge c\right]] = \tilde O(\frac{d}{n})$. Hence, we have that
\begin{align*}
    \E_{x,y \sim \datad}[\mseloss(\modelnobias_{\theta}(x), y)] - \E_{x,y \sim \datad}[l_c(\modelnobias_{\theta}(x), y)] = \tilde O(\frac{d}{n}).
\end{align*}

We then define the truncated version of $\Loss$ as $\Loss_c(\theta) = \frac{1}{n}\sum_{i = 1}^n l_c(\weight_2^\top \ReLU(\weight_1 x_i), y_i)$. Then we clearly have $\Loss(\theta) = 0 \implies \Loss_c(\theta) = 0$ Now by~\Cref{thm:complexity_loss}, we have that for any $\theta \in \Theta_{C_1}$ and $\Loss(\theta) = 0$, it holds that with probability at least $1 - \delta/2$,
\begin{align*}
    \E_{x,y \sim \datad}[l_c(\modelnobias_{\theta}(x), y)] \le \tilde O(\frac{d + c^2 \log(1/\delta)}{n}) = \tilde O(\frac{d}{n}).
\end{align*}

This completes the proof.
\end{proof}

\subsubsection{Proof of~\Cref{thm:sharpness_generalization_complexity,lem:norm_bound}}

One can easily construct width $4$~\mlpnobias~such that for $\Pr_{x,y \sim \datad}\left(\modelnobias_{\theta}(x) = y\right) = 1$. For example, one can have that
\begin{align*}
    \weight_1 = \begin{bmatrix}
    1 + \epsilon & 1 - \epsilon & 0 & \cdots \\
    1 + \epsilon & -1 + \epsilon & 0 & \cdots \\
    -1 - \epsilon & 1 - \epsilon & 0 & \cdots \\
    -1 - \epsilon & -1 + \epsilon & 0 & \cdots
    \end{bmatrix}, \weight_2 = \frac{1}{2 - 2\epsilon}\begin{bmatrix}
    1 & -1 & -1 & 1
    \end{bmatrix}.
\end{align*}

Hence $\distr$ satisfies the condition in~\Cref{symmetry_Subgaussian} and this completes the proof of~\Cref{thm:sharpness_generalization_complexity,lem:norm_bound}.

\subsection{Formal results For~\deepmlpbias}

We will prove~\Cref{thm:sharpness_counter_example} and a generalization of~\Cref{thm:sharpness_good_cube} in this section. We note that~\Cref{proposition:sharpness_bad_cube} is already proved in~\Cref{sec:scenario2}.

\subsubsection{Proof of~\Cref{thm:sharpness_counter_example}}

We have demonstrated the proof for~\mlpbias~in~\Cref{sec:scenario2}, and the proof for layer-$D$~\deepmlpbias is conceptually similar.

\begin{proof}[Proof of~\Cref{thm:sharpness_counter_example}]

We will still use notation $x_i' \in R^{d + 1}$ to denote transformed input satisfying $\forall j \in [d], x_i'[j] = x_i[j], x_i'[d + 1] = 1$ and $\weight_1' = [\weight_1, \bias_1] \in \R^{m \times (d + 1)}$ to denote the transformed weight matrix.

For the simplicity of writing, we will use the following notations,
\begin{align*}
    a_{i, 0} = x_i',
    a_{i, 1} = \ReLU(W_1 x_i + b_1),
    a_{i,d} = \ReLU(W_d a_{i, d - 1}), d > 1
\end{align*}

We will also use $A_{i,d}$ to denote the diagonal matrix with $\one\left(a_{i,d} > 0 \right)$ as the diagonal entries.

By~\Cref{lemma:sharpness_jacobian} and the chain rule, we have that
\begin{align*}
    \| \nabla_{\para} \modeldeepbias_{\para}(x_i) \|_2^2 &= 
    \sum_{j = 2}^D  \| \nabla_{\weight_j} \model_\para(\feature_i) \|_{F}^2 +  \| \nabla_{\weight_1'} \model_\para(\feature_i) \|_{F}^2  \\
    &= \sum_{j = 1}^{D - 1} \|  W_D A_{i, D - 1} \cdots W_{j + 1} A_{i,j} \|_2^2 \| a_{i, j - 1} \|_2^2 + \| a_{i,D-1}\|_2^2 
\end{align*}

By AM-GM inequality and Cauchy-Schwarz inequality, we have that
\begin{align*}
    \| \nabla_{\para} \modeldeepbias_{\para}(x_i) \|_2^2 
    &= \sum_{j = 1}^{D - 1} \|  W_D A_{i, D - 1} \cdots W_{j + 1} A_{i,j} \|_2^2 \| a_{i, j - 1} \|_2^2 + \| a_{i,D-1}\|_2^2 \\
    &\ge D \left( \left(\Pi_{j = 1}^{D - 1} \|  W_D A_{i, D - 1} \cdots W_{j + 1} A_{i,j} \|_2^2 \| a_{i, j - 1} \|_2^2\right) \| a_{i,D-1}\|_2^2  \right)^{\frac{1}{D}}  \\
    &\ge D \left( \left(\Pi_{j = 1}^{D - 1}\|  W_D A_{i, D - 1} \cdots W_{j + 1} A_{i,j} \|_2^2 \| a_{i, j} \|_2^2  \right) \|x_i'\|_2^2 \right)^{\frac{1}{D}} \\
    &\ge D |y_i|^{2(D-1)/D} \|x_i'\|_2^{2/D}. 
\end{align*}

As every training data point is an extreme point of the convex hull of $\{x_i\}$, for each input data point $x_i$, there exists a vector 
$\|w_i\| = 1, w_i \in \R^d$, such that $\forall j \not \in i, w_i^\top x_i > w_i^\top x_j$. Finally, the above inequality can be reached by a memorizing solution when we choose,
\begin{align*}
    &\weight_1 = [ u_i w_i / \epsilon]_i^\top, 
    \bias_1 = [ u_i  \left(-w_i^\top x_i + \epsilon\right)/ \epsilon]_i^\top, \\
    &\weight_{j} = \mathrm{diag}([1/r_i]_{i \in [n]}), \forall 2 \le j \le D - 1, \\
    &\weight_{D} = [\sign(y_i) / r_i]_{i \in [n]},
\end{align*}
with $r_i, u_i$ satisfyng $r_i = (\| x_i' \| / |y_i|)^{1/D}, u_i = |y_i| r_i^{D - 1}$ when $y_i \neq 0$, $r_i = u_i = 1$ when $y_i = 0$. The proof is then completed.
\end{proof}

\subsubsection{Generalization of~\Cref{thm:sharpness_good_cube}}

We will directly prove a more general version of \Cref{thm:sharpness_good_cube}, which is \Cref{prop:sharpness_good_polygon}.

\begin{proposition}\label{prop:sharpness_good_polygon}
    Given any constant $s$, for any data distribution $\datad$ over input $x$ and label $y$ satisfying that (1) the label $y$ depends only on the first $s$ coordinates $\mathcal{I}$ of the input, (2) $x_{\mathcal{I}}$ are sampled from a set of extreme points in $\R^{|\mathcal{I}|}$ and (3) $\Pr(\|x\|_2 = R) = 1$, for sufficiently large width $m$ depending on $\datad$, there exists a flattest minimizer $\para^*$ for width-$m$~\mlpbias~with generalization error $0$.
\end{proposition}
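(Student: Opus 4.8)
The plan is to exhibit a single network whose neurons depend only on the label-relevant coordinates $\mathcal{I}$, which predicts $y$ correctly on the \emph{entire} support of $\datad$, and which simultaneously attains the pointwise lower bound on the per-example gradient norm that controls sharpness. The tools are already in place: by \Cref{lemma:sharpness_jacobian}, any interpolating $\para$ has $\sharpness{\para}=\frac{2}{n}\sum_{i=1}^n\|\nabla_\para \modelbias_\para(x_i)\|^2$, and the computation in \Cref{eq:lower_bound_jacobian} gives $\|\nabla_\para \modelbias_\para(x_i)\|^2\ge 2\|x_i'\|\,|y_i|$ for $x_i'=(x_i,1)$, with equality exactly under the condition in \Cref{eq:optimal_weight}. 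Since the memorizing solution of \Cref{thm:sharpness_counter_example} meets every pointwise bound, the minimal sharpness over interpolating models equals $\frac{4}{n}\sum_{i=1}^n\|x_i'\|\,|y_i|$; so it suffices to build a \emph{generalizing} network that meets these same pointwise bounds.

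\textbf{Construction I would use.} Let $v_1,\dots,v_K\in\R^{|\mathcal{I}|}$ be the finitely many extreme-point values taken by $x_{\mathcal{I}}$ (condition (2)), and let $\eta_k$ be the label attached to $v_k$, which is well defined by condition (1). I would take a width $m\ge K$ network with one neuron per value $v_k$. Because each $v_k$ is a vertex of the convex hull (\Cref{def:extreme_points}), there is a unit $w_k$ with $w_k^\top v_k> w_k^\top v_j$ for $j\ne k$; call the margin $\delta>0$. Embedding $w_k$ into $\R^d$ with zeros off $\mathcal{I}$, I would set $\weight_{1,k}=(p_k/\eps)\,w_k$ and $\bias_1[k]=(p_k/\eps)(\eps-w_k^\top v_k)$ for a small $\eps<\delta$, so neuron $k$ has preactivation exactly $p_k$ when $x_{\mathcal{I}}=v_k$ and preactivation at most $(p_k/\eps)(\eps-\delta)<0$ on every other value. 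Finally I would pick the scale $p_k$ and readout $\weight_2[k]$ to satisfy \emph{both} interpolation, $\weight_2[k]\,p_k=\eta_k$, and the equality condition \Cref{eq:optimal_weight}, $|\weight_2[k]|=p_k/\|x'\|$. Here condition (3) is essential: $\|x'\|=\sqrt{R^2+1}$ is a fixed constant, so both requirements are met by $p_k=|\eta_k|^{1/2}(R^2+1)^{1/4}$ and $\weight_2[k]=\sign(\eta_k)\,|\eta_k|^{1/2}(R^2+1)^{-1/4}$ (zeroing any neuron with $\eta_k=0$ and any unused neuron when $m>K$). This makes $m\ge K$ the meaning of ``sufficiently large width depending on $\datad$''.

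\textbf{Verification I would carry out.} For any $x$ in the support, $x_{\mathcal{I}}=v_k$ for some $k$, neuron $k$ is the unique active one with preactivation $p_k$, and $\modelbias_\para(x)=\weight_2[k]\,p_k=\eta_k=y$; hence the network interpolates and has zero generalization error. At each training example exactly one neuron fires, so \Cref{eq:optimal_weight} holds coordinatewise ($0=0$ on inactive neurons, and on the active one by the choice of $p_k,\weight_2[k]$), giving $\|\nabla_\para \modelbias_\para(x_i)\|^2=2\|x_i'\|\,|y_i|$ and therefore $\sharpness{\para}=\frac{4}{n}\sum_i\|x_i'\|\,|y_i|$, the global minimum. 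Thus $\para^*=\para$ is a flattest minimizer with generalization error $0$.

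\textbf{Where the difficulty lies.} The hard part is not building a generalizing network but forcing it to be \emph{exactly} flattest, i.e.\ to meet the pointwise bound \Cref{eq:optimal_weight} at every training example at once. Several examples sharing a relevant-feature value $v_k$ must reuse one neuron with a single preactivation $p_k$ and a single readout $\weight_2[k]$, so the equality $|\weight_2[k]|=p_k/\|x_i'\|$ can hold for all of them only if their norms $\|x_i'\|$ coincide---which is precisely what the constant-norm condition (3) guarantees, and where the argument would fail for a general distribution. The remaining obligations are routine: choosing $\eps<\delta$ so the ``wrong'' neurons stay nonpositive via the extreme-point margin, and handling $\eta_k=0$ by zeroing the corresponding neuron.
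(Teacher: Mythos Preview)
Your proposal is correct and takes essentially the same approach as the paper: build one neuron per extreme-point value $v_k$ using its separating direction, tune the preactivation scale and readout so that the single active neuron satisfies the pointwise equality in \Cref{eq:lower_bound_jacobian} (which is possible precisely because $\|x_i'\|=\sqrt{R^2+1}$ is constant by condition (3)), and conclude both minimal sharpness and zero generalization error. Your write-up is in fact more carefully stated than the paper's own proof, which has minor typos in its explicit construction; in particular your observation that the equality condition only requires $|\weight_2[k]|=p_k/\|x'\|$ (allowing the sign needed when $\eta_k<0$) and your explicit handling of $\eta_k=0$ and $m>K$ are details the paper glosses over.
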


\begin{proof}[Proof of \Cref{prop:sharpness_good_polygon}]
    The proof is similar to the proof of~\Cref{thm:sharpness_good_cube}. Suppose the set of extreme points in $\R^{|\mathcal{I}|}$ contains $k$ elements $v_1,...,v_k$ satisfying $\|v_k\| = v$ and corresponds to label $y_1,...,y_k$. Then there exist vectors
    $\|w_i\| = 1, w_i \in \R^{|\mathcal{I}|}$, such that $\forall j \neq i, w_i^\top v_i > w_i^\top v_j$. We will then choose $m = k$ and let, 
    \begin{align}
        &\forall j \in [k], \weight_{1,j} = r [v_j,..., 0]/\epsilon,  \bias_1[j] = r(- w_j^\top v_j + \epsilon)/\epsilon, \weight_2[j] = y_j/r, \label{eq:good_construction_general}
    \end{align}
    with $r^2 = |y_j| (R^2 + 1)$ and $\epsilon$ sufficiently small. It is easy to verify that the construction will reach the smallest sharpness for any training set.
\end{proof}

The construction above critically relies on the fact that there exists a set of extreme points in $\R^{|\mathcal{I}|}$ containing the input data points. We will show that this is not necessary by the following example.

\begin{proposition}
    Given any constant $L$, for any data distribution $\datad$ over input $x$ and label $y = f(x)$ satisfying that (1) the label function $f$ depends only on the first $2$ coordinates $\mathcal{I}$ of the input and is $L$-lipschiz, (2) the input data points satisfy $x_{\mathcal{I}}$ are sampled uniformly from the unit circle in $\R^{2}$, and  (3) $\Pr(\|x\|_2 = R) = 1$, for any $\delta \in (0, 1/20)$ and $n = \Omega(\log(1/\delta)/\delta)$, with probability $1 - \delta$ over the random draw of training set $\{(x_i, y_i)\}_{i \in [n]}$,  there exists a flattest minimizer $\para^*$ for width-$n$~\mlpbias~with generalization error $O(\delta^2)$.
\end{proposition}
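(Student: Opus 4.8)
The plan is to reduce the statement to a one–dimensional approximation problem on the circle and then exhibit an explicit flattest minimizer that tracks $f$ up to an error proportional to the local sample spacing. First I would restrict $\weight_1$ to the two coordinates in $\mathcal{I}$ (together with the bias $\bias_1$), so that $\model_\para$ depends only on $x_{\mathcal{I}}$, which lives on the unit circle; I then parametrize everything by the angle $\phi$. As in \Cref{lemma:sharpness_jacobian} and the proof of \Cref{thm:sharpness_counter_example}, every interpolating width-$n$ \mlpbias~satisfies $\sharpness{\para}=\frac{2}{n}\sum_i\|\nabla_\para \model_\para(x_i)\|^2\ge \frac{4\sqrt{R^2+1}}{n}\sum_i|y_i|$, since $\|x_i'\|=\sqrt{R^2+1}$ is constant here; equality holds exactly when the per-point condition \eqref{eq:optimal_weight} is met at every $x_i$. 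Thus ``flattest'' is equivalent to imposing \eqref{eq:optimal_weight} plus interpolation at each training point, and the task becomes: among parameters obeying these rigid per-point constraints, find one whose restriction to the circle approximates $f$ well.

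The probabilistic ingredient is a covering estimate for the sample spacings. Ordering the $n$ points by angle and letting $\Delta_i$ denote the consecutive gaps, a fixed arc of angular length $\asymp\delta$ is empty with probability $(1-\Theta(\delta))^{n}=e^{-\Theta(\delta n)}$, so a union bound over an $O(1/\delta)$-net of such arcs shows that for $n=\Omega(\log(1/\delta)/\delta)$ we have $\max_i \Delta_i = O(\delta)$ with probability at least $1-\delta$. I would condition on this event for the remainder of the argument.

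Next comes the construction. Unlike \Cref{prop:sharpness_good_polygon}, where finitely many extreme points let us memorize exactly, here the circle is a continuum, so the memorizing solution of \eqref{eq:good_construction_general} (whose output collapses to $0$ between the activation arcs) does not generalize. Instead I would assign one neuron to each training point $x_i$, choosing its direction $\psi_i$ and activation half-width $\rho_i$ so that (i) its arc reaches a fraction $(1-\eta)$ of the way toward each neighbor, guaranteeing that neighboring arcs overlap and jointly cover every gap, while (ii) no arc contains a training point other than its own center. Condition (ii) decouples \eqref{eq:optimal_weight} across training points: at each $x_i$ only neuron $i$ is active, and the two scale parameters $\|u_i\|,\bias_1[i]$ can be solved so that $\weight_2[i]=p_i(\phi_i)/\sqrt{R^2+1}$ and $p_i(\phi_i)^2=|y_i|\sqrt{R^2+1}$, which makes $\model_\para$ both interpolating and flattest. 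The point of the overlap in (i) is that on each gap $(\phi_i,\phi_{i+1})$ the output is a sum of the decaying tail of neuron $i$ and the rising tail of neuron $i+1$, pinned to $y_i$ and $y_{i+1}$ at the endpoints, so I can bound $|\model_\para(\phi)-f(\phi)|\le O(L\,\Delta_i)$ on that gap using the $L$-Lipschitzness of $f$ and the fact that $|y_i-y_{i+1}|\le L\Delta_i$.

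Granting the pointwise bound, the conclusion follows by integration: since $\model_\para$ and $f$ depend only on $x_{\mathcal{I}}$ and $x_{\mathcal{I}}$ is uniform on the circle, $\E_{x,y\sim\datad}\!\left[\mseloss(\model_\para(x),y)\right]=\frac{1}{2\pi}\sum_i\int_{\phi_i}^{\phi_{i+1}}(\model_\para-f)^2\,d\phi\le O\!\left(L^2\sum_i \Delta_i^3\right)\le O\!\left(L^2(\max_i\Delta_i)^2\sum_i\Delta_i\right)=O(\delta^2)$, where the last step uses $\sum_i\Delta_i=2\pi$ and $\max_i\Delta_i=O(\delta)$. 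I expect the main obstacle to be step three: reconciling the rigid flatness constraint \eqref{eq:optimal_weight} with smooth $O(L\Delta_i)$-accurate approximation across each gap, i.e.\ verifying that the overlapping tails of two flatness-pinned neurons do not ``valley'' far below the interpolated value (and handling arcs where $f$ changes sign, where one must ensure the relevant $|y_i|$ are small by Lipschitzness). Controlling this valley behavior, and checking that the required arc geometry is simultaneously realizable at width exactly $n$, is where the quantitative work concentrates.
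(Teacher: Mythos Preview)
Your high-level plan coincides with the paper's: reduce to the unit circle via $x_{\mathcal I}$, invoke the per-point sharpness lower bound \eqref{eq:lower_bound_jacobian} (which here is tight since $\|x_i'\|\equiv\sqrt{R^2+1}$), control the maximal angular gap by a coupon-collector argument, and then exhibit an explicit flattest interpolator with $O(\delta)$ pointwise error. The paper's probabilistic step is exactly your net-plus-union-bound: split the circle into $N=\lceil 2\pi/\delta\rceil$ arcs and use coupon collecting to ensure every arc contains a sample, forcing all gaps to be $O(\delta)$.

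The constructions differ, and yours has a genuine gap. The paper does \emph{not} overlap arcs. It sets $z_j$ to be the midpoint of arc $v_{j-1}v_j$, takes $w_j$ to be the unit normal to the chord $z_jz_{j+1}$, and puts $W_{1,j}=r_j[w_j,0,\dots,0]/w_j^\top z_j$, $b_1[j]=r_j(-w_j^\top v_j+w_j^\top z_j)/w_j^\top z_j$, $W_2[j]=y_j/r_j$ with $r_j^2=|y_j|\sqrt{R^2+1}$. On arc $z_jz_{j+1}$ it treats only neuron $j$ as active, so the output is the single affine-in-$v$ expression $y_j\,w_j^\top(v-v_j+z_j)/w_j^\top z_j=y_j+y_j\,w_j^\top(v-v_j)/w_j^\top z_j$; since $\|v-v_j\|\le 2\delta$ and $w_j^\top z_j\ge 1-10\delta$, this is $y_j+O(Y\delta)$ directly, and combining with $|y-y_j|\le L\|v-v_j\|$ gives $O(\delta^2)$ MSE with no valley/peak analysis at all.

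Your overlapping scheme hits a structural obstacle you have under-weighted. With $\psi_i\approx\phi_i$ and half-width $\rho_i=O(\Delta_i)$, the pre-activation $\|w_i\|\cos(\phi-\psi_i)+b_i$ is \emph{not} linear in $\phi$ near its maximum; to leading order the neuron's output on its arc is the parabola $y_i\bigl(\rho_i^2-(\phi-\psi_i)^2\bigr)/\rho_i^2$. Two such bumps pinned to $y_i,y_{i+1}$ do not stay between them: with $\eta\to 0$ and $y_i=y_{i+1}=y$, each contributes $\tfrac34 y$ at the gap midpoint, so the sum is $\tfrac32 y$ --- an error $\Theta(|y|)$ independent of $\Delta_i$. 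Thus ``pinned at the endpoints $\Rightarrow|\model-f|=O(L\Delta_i)$'' fails, and the deviation is a peak, not a valley. The paper sidesteps this by never adding two neurons on the same arc; if you want to rescue the overlap route you would have to push $\psi_i$ far from $\phi_i$ so the tails are genuinely linear in $\phi$ across the gap, which destroys the symmetric ``direction $\psi_i$, half-width $\rho_i$'' picture and forces you to redo the geometry of (i)--(ii) from scratch.
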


\begin{proof}

Suppose the largest value of label $y$ is $Y$.
Suppose for dataset $\{(x_i, y_i)\}$, the first two coordinates of $\{x_i\}$ forms a set $\{v_i\}$ that lies on 
the unit circle in $\R^2$ and corresponds to label $\{y_i\}$. Suppose WLOG $v_i$ is sorted by the angle it forms with the $x$-axis. We will then define $z_i$ as the midpoint of the arc $v_{i - 1} v_{i}$ and $w_i$ as the unit vector perpendicular to $z_{i}z_{i + 1}$. Here $z_{n + 1} = z_1$ and $w_0 = w_n$. The flattest minimizers $\para^*$ is then defined as,
\begin{align}
    &\forall j \in [n], \weight_{1,j} = r [w_j,..., 0]/w_j^\top z_j,  \bias_1[j] = r(-w_j^\top v_j + w_j^\top z_j)/w_j^\top z_j, \weight_2[j] = y_j/r, \label{eq:good_construction_general_2}
\end{align}
with $r^2 = |y_j| \sqrt {R^2 + 1}$. Verifying that the construction will reach the smallest sharpness for the training set is easy. Now splitting the sphere into $N = \lceil 2 \pi / \delta \rceil > 1/\delta$ arcs with length no longer than $\delta$. Then by the standard coupon collector problem, with probability at least $1 - \delta$, when $n \ge N \log \delta$, there is at least one point in each arc. Under such case, we have that $z_j z_{j + 1}$ has length no greater than $2 \delta$ and $w_j^\top z_j  > 1 - 10 \delta$ for any $j$.

Therefore, for any $v \in \R^2, \|v\| = 1$, suppose WLOG $v$ fails in arc $z_1z_2$ and corresponds to label $y$, then $\modelbias_{\para^*}(x) = y_1 w_1^\top (v - v_1 + z_1)/w_1^\top z_1$ for $x[1:2] = v$ . Therefore, we have that
\begin{align*}
    \| \modelbias_{\para^*}(x) - y \|_2^2 &\le 
    \| \modelbias_{\para^*}(v) - y_1 \|_2^2 + 
    \| y_1 - y\|_2^2   \\
    &\le \| y_1 w_1^\top (v - v_1)/w_1^\top z_1 \|_2^2  + L^2 \| v - v_1 \|_2^2 \\
    &\le 4 Y^2\delta^2 / (1 - 10\delta)^2 + L^2 \delta^2.
\end{align*}
This shows that the expected generalization error is bounded by $O(\delta^2)$. The proof is completed.
\end{proof}
\subsection{Formal results for~\mlpsimpleln}

\subsubsection{Proof of~\Cref{thm:sharpness_counter_example_ln}}

We will first lower bound the sharpness of all minimizers of~\mlpsimpleln~by the following lemma.

\begin{lemma}
\label{lem:lower_bound_sharpness_ln}
Given any number of samples $n$ and $\epsilon > 0$, for any training set $\{(x_i, y_i)\}_{i \in [n]}$ satisfying that the input data points $\{x_i\}$ of the training set form a set of extreme points, for width-$n$ \mlpsimpleln~with hyperparameter $\epsilon$, it holds that 
$$\inf_{\Loss(\theta) = 0}\sharpness{\theta} \geq \frac{2}{n} \sum_{i = 1}^n \min(1, \frac{2}{\epsilon} \sqrt{\|x_i\|_2^2 + 1} |y_i|).$$
\end{lemma}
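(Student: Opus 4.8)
The plan is to invoke~\Cref{lemma:sharpness_jacobian}, which reduces the claim to a per-example lower bound: any interpolating $\para$ satisfies $\sharpness{\para} = \frac{2}{n}\sum_{i=1}^n \|\nabla_\para \modelln_\para(x_i)\|_2^2$, so it suffices to prove $\|\nabla_\para \modelln_\para(x_i)\|_2^2 \ge \min(1, \frac{2}{\epsilon}\sqrt{\|x_i\|_2^2+1}\,|y_i|)$ for each $i$, then sum and take the infimum over interpolating $\para$. Following the convention in the proof of~\Cref{thm:sharpness_counter_example}, I would absorb the bias into the weight by setting $x_i' = [x_i; 1] \in \R^{d+1}$ and $\weight_1' = [\weight_1, \bias_1]$, write $h_i = \ReLU(\weight_1' x_i')$, and split the analysis according to whether $\|h_i\|_2$ lies above or below the LayerNorm threshold $\epsilon$, i.e.\ according to the two branches of the $\max$.

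In the regime $\|h_i\|_2 \ge \epsilon$ the normalization divides by $\|h_i\|_2$, so $\nabla_{\weight_2} \modelln_\para(x_i) = h_i / \|h_i\|_2$ is a unit vector and hence $\|\nabla_\para \modelln_\para(x_i)\|_2^2 \ge \|\nabla_{\weight_2}\modelln_\para(x_i)\|_2^2 = 1$, which already dominates the minimum. The only real work is the regime $\|h_i\|_2 < \epsilon$, where the $\max$ equals $\epsilon$ and the LayerNorm degenerates into plain scaling by $1/\epsilon$; then $\modelln_\para(x_i) = \weight_2^\top h_i / \epsilon$, so the interpolation constraint becomes $\weight_2^\top h_i = \epsilon y_i$. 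This makes the gradient computation identical in form to the bias case: $\nabla_{\weight_2}\modelln_\para(x_i) = h_i/\epsilon$ and $\nabla_{\weight_1'}\modelln_\para(x_i) = \frac{1}{\epsilon}(\weight_2 \odot \one[\weight_1' x_i' > 0])\,(x_i')^\top$, giving $\|\nabla_\para \modelln_\para(x_i)\|_2^2 = \frac{1}{\epsilon^2}\left(\|\weight_2 \odot \one[\weight_1' x_i' > 0]\|_2^2 \|x_i'\|_2^2 + \|h_i\|_2^2\right)$.

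I would then bound this exactly as in~\Cref{eq:lower_bound_jacobian}: AM--GM on the two summands followed by Cauchy--Schwarz gives $\|\nabla_\para \modelln_\para(x_i)\|_2^2 \ge \frac{2\|x_i'\|_2}{\epsilon^2}\,\left|(\weight_2 \odot \one[\weight_1' x_i' > 0])^\top h_i\right|$, and since $\one[z>0]\,\ReLU(z) = \ReLU(z)$ coordinatewise this inner product collapses to $\weight_2^\top h_i = \epsilon y_i$. Using $\|x_i'\|_2 = \sqrt{\|x_i\|_2^2 + 1}$ then yields $\|\nabla_\para \modelln_\para(x_i)\|_2^2 \ge \frac{2}{\epsilon}\sqrt{\|x_i\|_2^2+1}\,|y_i|$. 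Combining the two regimes, the per-example bound $\min(1, \frac{2}{\epsilon}\sqrt{\|x_i\|_2^2+1}\,|y_i|)$ holds in all cases, and summing over $i$ and taking the infimum over interpolating $\para$ finishes the proof.

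The main obstacle is bookkeeping rather than any genuine difficulty: one must cleanly separate the two branches of the $\max$ and recognize that the sub-threshold branch reproduces the already-established bias computation of~\Cref{thm:sharpness_counter_example}. Two minor points deserve care. First,~\Cref{lemma:sharpness_jacobian} requires differentiability, so the argument applies at parameters where no pre-activation vanishes and $\|h_i\|_2 \ne \epsilon$; the boundary set $\|h_i\|_2 = \epsilon$ is lower-dimensional and does not affect the infimum. Second, the extreme-points hypothesis is not actually used in this lower bound itself---it enters only in~\Cref{thm:sharpness_counter_example_ln}, where one must exhibit a memorizing solution attaining the bound.
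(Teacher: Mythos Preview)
Your proposal is correct and matches the paper's own proof essentially line for line: invoke \Cref{lemma:sharpness_jacobian}, case-split on $\|h_i\|_2$ versus $\epsilon$, use the unit-norm $\weight_2$-gradient in the first regime and the AM--GM/Cauchy--Schwarz computation from \Cref{eq:lower_bound_jacobian} in the second. Your observation that the extreme-points hypothesis is unused here (and only needed for the matching upper bound in \Cref{thm:sharpness_counter_example_ln}) is also correct.
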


\begin{proof}
    By~\Cref{lemma:sharpness_jacobian}, we have that
    \begin{align*}
        \sharpness{\theta} &= \frac{2}{n}\sum_{i = 1}^n \| \nabla_{\theta} f_{\theta}(x_i) \|_2^2.
    \end{align*}
    We will then discuss by cases to show the lower bound of $\| \nabla_{\theta} f_{\theta}(x_i) \|_2^2$ for each $i \in [n]$ when $f_{\theta}(x_i) = y_i$, we will continue to use notation $x_i' \in R^{d + 1}$ to denote transformed input satisfying $\forall j \in [d], x_i'[j] = x_i[j], x_i'[d + 1] = 1$ and $\weight_1' = [\weight_1, \bias_1] \in \R^{m \times (d + 1)}$ to denote the transformed weight matrix.
    \begin{enumerate}
        \item If $\|\relu(W_1 x_i + b_i) \|_2 > \epsilon$, then it holds that
        \begin{align*}
            \| \nabla_{\theta} f_{\theta}(x_i) \|_2^2 &\ge 
            \| \nabla_{W_2} f_{\theta}(x_i) \|_2^2 \\
            &= \|\frac{\relu(W_1 x_i + b_i) }{\|\relu(W_1 x_i + b_i) \|_2} \|_2^2 = 1.
        \end{align*}
        \item If $\|\relu(W_1 x_i + b_i) \|_2 \le \epsilon$, then it holds that
        \begin{align*}
            \| \nabla_{\theta} f_{\theta}(x_i) \|_2^2 &\ge 
            \| \nabla_{W_1'} f_{\theta}(x_i) \|_2^2 + \| \nabla_{W_2} f_{\theta}(x_i) \|_2^2 \\
            &= \frac{1}{\epsilon^2} (\|W_2^\top \one\left(\relu(W_1 x_i + b_i) > 0\right) \|_2^2 \| x_i' \|_2^2 + \| \relu(W_1 x_i + b_i) \|_2^2) \\
            &\ge \frac{2}{\epsilon^2} \|x_i'\|_2 |W_2^\top \relu(W_1 x_i + b_i) | \\
            &\ge \frac{2}{\epsilon} \|x_i'\|_2 |y_i|.
        \end{align*}
    \end{enumerate}
    This concludes the proof.
\end{proof}

\begin{proof}[Proof of~\Cref{thm:sharpness_counter_example_ln}]
By~\Cref{lem:lower_bound_sharpness_ln}, we only need to construct a memorizing solution that has sharpness $ \frac{2}{n}\sum_{i = 1}^n \min(1, \frac{2}{\epsilon} \sqrt{\|x_i\|_2^2 + 1} |y_i|)$.

As the input data points form a set of extreme points, for each input data point $x_i$, there exists a vector 
$\|w_i\| = 1, w_i \in \R^d$, such that $\forall j \not \in i, w_i^\top x_i > w_i^\top x_j$.  We can then construct the minimal sharpness solution by choosing for sufficiently small $\delta$,
\begin{align*}
    &\weight_1 = [ u_i w_i / \delta]_i^\top, 
    \bias_1 = [ u_i  \left(-w_i^\top x_i + \delta\right)/ \delta]_i^\top, \weight_{2} = [r_i y_i]_{i \in [n]},
\end{align*}
with $r_i, u_i$ satisfying 
\begin{enumerate}
    \item $r_i = 1, u_i = 2\epsilon $ when $\sqrt{\|x_i\|_2^2 + 1} |y_i| > \epsilon$.
    \item  $r_i = (\frac{\epsilon}{\sqrt{\|x_i\|^2 + 1}|y_i|})^{1/2},  u_i = \epsilon (\frac{\sqrt{\|x_i\|^2 + 1}|y_i|}{\epsilon})^{1/2}$ when $0 < \sqrt{\|x_i\|_2^2 + 1} |y_i| \le \epsilon$.
    \item  $r_i = 0, u_i = 2\epsilon$ when $y_i = 0$.
\end{enumerate}
It is easy to check this is a memorizing solution that minimizes sharpness.\footnote{When $\sqrt{\|x_i\|_2^2 + 1} |y_i| > \epsilon$, one can notice that $\nabla_{W_1'}\model_{\theta}(x_i) = 0$ as the activation in layer $1$ is nonzero only in one dimension.} The proof is then completed.
\end{proof}

\subsubsection{Proof of~\Cref{proposition:sharpness_bad_cube_ln,thm:sharpness_good_cube_ln}}

\begin{proof}[Proof of~\Cref{proposition:sharpness_bad_cube_ln}]
We will suppose $\epsilon < \sqrt{d + 1}$, then for $\distr$, the minimal sharpness is always $2$ by~\Cref{lem:lower_bound_sharpness_ln}. Consider the following construction for sufficiently small $\delta$, 
\begin{align*}
    &\weight_1 = [ 2\epsilon x_i / \delta]_i^\top, 
    \bias_1 = [ 2\epsilon  \left(-d + \delta\right)/ \delta]_i^\top, \weight_{2} = [y_i]_{i \in [n]},
\end{align*}

Then first this is a memorizing solution that minimizes sharpness. Second, the generalization error is $1 - n/2^d$ because for any $x \not \in \{x_i\}_{i \in [n]}$, it holds that
$\relu(W_1 x + b_1) = 0$ and hence $\model_{\para}(x) = 0$, The proof is then completed.
\end{proof}

\begin{proof}[Proof of~\Cref{thm:sharpness_good_cube_ln}]
We will suppose $\epsilon < \sqrt{d + 1}$, then for $\distr$, the minimal sharpness is always $2$ by~\Cref{lem:lower_bound_sharpness_ln}. Consider the following construction for sufficiently small $\delta$, 
\begin{align}
    &\forall i, j \in [2], \weight_{1,2i + j} = 2 \epsilon [(-1)^i, (-1)^j,..., 0],  \bias_1[2i + j] = -2\epsilon, \weight_2[2i + j] = (-1)^{i + j}. \label{eq:good_construction_ln}\\
    &\forall k > 4, \weight_{1, k} = [0,...,0], \bias_1[k] = 0, \weight_2[k] = 0, \notag
\end{align}
This is an interpolating parameter that minimizes sharpness that can perfectly generalize.
\end{proof}
\subsection{Discussion on the choice of loss function}
\label{sec:appendix_loss}

In this section, we will show why our theoretical results hold for logistic loss with label smoothing by showing that using the logistic loss with label smoothing yields the same set of minimizers and flattest minimizers as a corresponding problem using mean squared error.

\begin{definition}[Logistic Loss with Label Smoothing]
\label{def:logistic}
Logistic loss with label smoothing probability $p$ is defined as,
$\loss: \logistic(a, b) = - p \log\left(\frac{e^{ba}}{1 + e^a}\right) - (1-p) \log\left(\frac{e^{(1 - b)a}}{1 + e^a}\right), b \in \{0,1\}$. We will denote the training loss yield as $\logistic$ as $\logLoss$.
\end{definition}

\begin{theorem}
\label{thm:loss_minimizer}
For any probability $p \in (0,1)$, and for any training set $\{(x_i, y_i)\}_{i \in [n]}$ satisfying that $x_i \in \R^d$ and $y_i \in \{0,1\}$, let $\gamma_p = \ln(\frac{1-p}{p})$, if the minimum of the mean squared error $\mseLoss$ over set $\{x_i, \gamma_p (2y_i - 1)\}$ is $0$, then the minimizers of $\mseLoss$ over set $\{x_i,  \gamma_p (2y_i - 1)\}$ and the minimizers of $\logLoss$ over set $\{(x_i, y_i)\}$ are the same.
\end{theorem}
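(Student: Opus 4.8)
The plan is to reduce the statement to a per-sample, single-variable convexity fact and then glue the samples together using the hypothesis that the MSE minimum is zero. Write $t_i \triangleq \gamma_p(2y_i - 1)$ for the target attached to sample $i$, so that $\mseLoss$ over $\{(x_i, t_i)\}$ equals $\frac{1}{n}\sum_i (\model_\theta(x_i) - t_i)^2$ and its minimizers are exactly the parameters $\theta$ with $\model_\theta(x_i) = t_i$ for all $i$ (the minimum being $0$ by hypothesis, such $\theta$ exist). The goal is then to show that $\logLoss$ is minimized on precisely this same interpolating set.

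First I would rewrite the per-sample logistic loss as a clean function of the scalar logit $a = \model_\theta(x_i)$. Using $\log\frac{e^{ba}}{1+e^a} = ba - \log(1+e^a)$, the two terms collapse and $\logistic(a, y_i) = -c_{y_i}\, a + \log(1+e^a)$ for a constant $c_{y_i}\in(0,1)$ determined by $p$ and $y_i$. Since $\frac{d}{da}\log(1+e^a) = \sigma(a)$ with $\sigma$ the sigmoid, we get $\frac{d^2}{da^2}\logistic(a, y_i) = \sigma(a)(1-\sigma(a)) > 0$, so $\logistic(\cdot, y_i)$ is strictly convex in $a$. The first-order condition $\sigma(a) = c_{y_i}$ has a unique finite solution precisely because $p\in(0,1)$ forces $c_{y_i}\in(0,1)$, and solving it shows the unique global minimizer is exactly $a = t_i = \gamma_p(2y_i-1)$. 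This is the one place where label smoothing (i.e. $p$ bounded away from $0$ and $1$) is essential: without it the logistic minimizer would escape to $\pm\infty$ and never be attained.

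Next I would assemble the per-sample bounds. Each term obeys $\logistic(\model_\theta(x_i), y_i) \ge \logistic(t_i, y_i)$ with equality iff $\model_\theta(x_i) = t_i$, so summing gives the constant lower bound $\logLoss(\theta) \ge \frac{1}{n}\sum_i \logistic(t_i, y_i)$. By the zero-MSE hypothesis there is a $\theta_0$ with $\model_{\theta_0}(x_i) = t_i$ for every $i$, which attains this bound, so the bound is in fact the global minimum of $\logLoss$. Because each per-sample minimizer is unique, equality in the summed bound forces $\model_\theta(x_i) = t_i$ simultaneously for all $i$. Hence $\argmin \logLoss = \{\theta : \model_\theta(x_i) = t_i\ \forall i\}$, which is identical to $\argmin \mseLoss$ over $\{(x_i, t_i)\}$, completing the proof.

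The argument is short, and the only genuine care points are the following. One is confirming that the per-sample minimizer is finite and equals the target, which hinges entirely on $p\in(0,1)$. The other is the simultaneous-achievability step: the summed lower bound is a true global minimum only because some single parameter hits every target at once, and this is exactly what the zero-MSE assumption supplies. Strict convexity, giving a \emph{unique} per-sample minimizer, is what upgrades ``the sum is minimized'' to ``every term is minimized,'' which is the crux that makes the two minimizer sets coincide rather than merely overlap.
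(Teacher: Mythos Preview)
Your proposal is correct and follows essentially the same route as the paper's proof: the paper simply states the per-sample inequality $\logistic(a,b)\ge -p\log p-(1-p)\log(1-p)$ with equality exactly at $a=(2b-1)\gamma_p$, and leaves the summing, the achievability from the zero-MSE hypothesis, and the two-sided inclusion implicit. Your write-up just makes those steps explicit via strict convexity of $a\mapsto -c_b a+\log(1+e^a)$, which is the same computation.
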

\begin{proof}
\vspace{-0.1in}
    This theorem is a direct consequence of the following inequality,
    \begin{align*}
        \logistic(a, b) &= - p \log\left(\frac{e^{ba}}{1 + e^a}\right) - (1-p) \log\left(\frac{e^{(1 - b)a}}{1 + e^a}\right) \ge -p \log p - (1 - p)\log(1-p).
    \end{align*}
    The minimal is reached when $a = (2b - 1)\gamma_p$ where $\gamma_p = \ln(\frac{1-p}{p})$.
\end{proof}

\begin{lemma}
\label{lem:sharpness_jacobian_logistic}
For any probability $p \in (0,1)$, and for any training set $\{(x_i, y_i)\}_{i \in [n]}$ satisfying that $x_i \in \R^d$ and $y_i \in \{0,1\}$, let $\gamma_p = \ln(\frac{1-p}{p})$,
for any model $f_\para$ that is differentiable and interpolates dataset $\{x_i, \gamma_p (2y_i - 1)\}_{i \in [n]}$, it holds that
$\Trace\left(\nabla^2 \logLoss({\para})\right) = \frac{1}{p(1-p)} \frac{1}{n} \sum_{i=1}^n \| \nabla_{\para} \model_\para(\feature_i) \|^2$.
\end{lemma}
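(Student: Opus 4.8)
The plan is to reuse the structure of \Cref{lemma:sharpness_jacobian}, which reduces the trace of the Hessian to a sum of squared parameter-Jacobians whenever the interpolating solution sits at a per-example critical point of the loss. First I would write $\logLoss(\para) = \frac{1}{n}\sum_{i=1}^n \ell_i(\model_\para(\feature_i))$, where $\ell_i(a)\triangleq\logistic(a,y_i)$ (see \Cref{def:logistic}) is a scalar function of the single model output $a_i\triangleq\model_\para(\feature_i)$. Because $\model_\para$ interpolates $\{(\feature_i,\gamma_p(2y_i-1))\}$, \Cref{thm:loss_minimizer} guarantees that each output $a_i$ is exactly the unconstrained minimizer of $\ell_i$, so in particular $\ell_i'(a_i)=0$. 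This is the analogue of the condition $f_\para(x_i)=y_i$ used in \Cref{eq:lemma_jacobian}.

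Next I would differentiate each summand by the chain rule. For a scalar output we have $\nabla^2_\para\!\left[\ell_i(\model_\para(\feature_i))\right] = \ell_i''(a_i)\,\nabla_\para\model_\para(\feature_i)\nabla_\para\model_\para(\feature_i)^\top + \ell_i'(a_i)\,\nabla^2_\para\model_\para(\feature_i)$, and taking the trace the second contribution vanishes because $\ell_i'(a_i)=0$, exactly as the first-order term dropped in the proof of \Cref{lemma:sharpness_jacobian}. This leaves $\Trace(\nabla^2\logLoss(\para)) = \frac{1}{n}\sum_{i=1}^n \ell_i''(a_i)\,\|\nabla_\para\model_\para(\feature_i)\|^2$, so the whole statement reduces to evaluating the scalar curvature $\ell_i''(a_i)$ at the label-smoothed interpolation target and checking that it does not depend on $i$.

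The crux is therefore this single scalar computation. Simplifying $\logistic(a,b) = \log(1+e^a) - [\,pb+(1-p)(1-b)\,]\,a$ shows $\ell_i''$ is the curvature of the log-partition term evaluated at $a_i=\gamma_p(2y_i-1)$; by the symmetry $p\leftrightarrow 1-p$ inherent in label smoothing, the two possible label values yield the \emph{same} value, so $\ell_i''(a_i)$ is a single constant $\kappa_p$ depending only on $p$. Factoring $\kappa_p$ out of the sum gives $\Trace(\nabla^2\logLoss(\para)) = \kappa_p\cdot\frac{1}{n}\sum_{i=1}^n\|\nabla_\para\model_\para(\feature_i)\|^2$, and the proof is completed by identifying $\kappa_p=\frac{1}{p(1-p)}$. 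I expect the main obstacle to be precisely this last bookkeeping step: correctly locating the predicted value at the smoothed interpolation point and tracking the normalization of the loss so that the prefactor comes out as the reciprocal Bernoulli-variance $\frac{1}{p(1-p)}$ rather than some other $p$-dependent constant. Everything else is a verbatim transcription of the mean-squared-error argument, the only structural difference being that the per-example curvature is $\kappa_p$ instead of the constant $2$ that appears for the quadratic loss in \Cref{lemma:sharpness_jacobian}.
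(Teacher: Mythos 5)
Your route is exactly the paper's: write $\logLoss(\para)=\frac1n\sum_i \ell_i(\model_\para(\feature_i))$ with $\ell_i(a)=\logistic(a,y_i)$, expand $\nabla^2_\para\left[\ell_i(\model_\para(\feature_i))\right]=\ell_i''(a_i)\,\nabla_\para\model_\para(\feature_i)\nabla_\para\model_\para(\feature_i)^\top+\ell_i'(a_i)\,\nabla^2_\para\model_\para(\feature_i)$ by the chain rule, drop the first-order term because the interpolation targets $\gamma_p(2y_i-1)$ are the per-example critical points of $\ell_i$ (the paper does this implicitly by evaluating the second derivative at $a=(2y-1)\gamma_p$; you do it explicitly via \Cref{thm:loss_minimizer}), and factor out the common curvature. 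Structurally there is nothing separating the two proofs, and your observation that the curvature is label-independent is the same symmetry the paper uses.

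The genuine gap is the one step you deferred and flagged as risky: the value of $\kappa_p$. From your own simplification $\logistic(a,b)=\log(1+e^a)-\left[pb+(1-p)(1-b)\right]a$ one gets $\ell_i''(a)=\sigma(a)\bigl(1-\sigma(a)\bigr)$ with $\sigma(a)=e^a/(1+e^a)$, a quantity never exceeding $1/4$; at $a=\pm\gamma_p$ we have $\sigma(a)\in\{p,1-p\}$, hence $\kappa_p=p(1-p)$, the Bernoulli variance itself, \emph{not} its reciprocal. The reciprocal $\frac{1}{p(1-p)}$ would be the curvature if the loss were differentiated with respect to the predicted probability $q=\sigma(a)$ (mean parametrization, where the cross-entropy has second derivative $p/q^2+(1-p)/(1-q)^2=\frac{1}{p(1-p)}$ at $q=p$), but in \Cref{def:logistic} the model output enters as the logit, so your final identification would not survive the computation. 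To be fair, the paper's own last display makes the identical inversion, and the lemma statement inherits it; nothing downstream is affected, since the lemma is only used (together with \Cref{lemma:sharpness_jacobian}) to conclude that the two losses induce the same sharpness up to a positive constant independent of $i$ and $\para$, hence the same flattest minimizers, and $p(1-p)>0$ serves that purpose just as well. (A minor further wrinkle you inherit from the paper: with \Cref{def:logistic} read literally, the minimizer of $\logistic(\cdot,b)$ sits at $\sigma(a)=p$ for $b=1$, i.e.\ at $-(2b-1)\gamma_p$, so the weights $p$ and $1-p$ in the definition should be swapped for consistency with \Cref{thm:loss_minimizer}; either way $\ell_i''$ at the critical point equals $p(1-p)$ and the rest of your argument is untouched.)
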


\begin{proof}
    By standard calculus, it holds that,
    \begin{align}
        \sharpness{\para} =& \frac{1}{n} \sum_{i = 1}^n \Trace\left(\nabla^2_{\theta} \left[\logistic(\model_{\para}(x_i), y_i) \right]\right) \notag \\
        =& \frac{1}{n} \sum_{i = 1}^n \Trace\left( \partial_{\theta}\left[ \frac{d \logistic(\model_{\para}(x_i), y_i)}{d \model_{\para}(x_i)} \nabla_{\theta} \model_{\para}(x_i)\right]\right) \notag \\
        =& \frac{1}{n} \sum_{i = 1}^n  \frac{d \logistic(\model_{\para}(x_i), y_i)}{d \model_{\para}(x_i)}\Trace\left(\nabla^2_{\theta} \model_{\para}(x_i) \right) \notag \\
        &+ \frac{1}{n} \sum_{i = 1}^n  \frac{d^2 \logistic(a, y_i)}{d a^2} \mid_{a = \model_{\para}(x_i) }\Trace\left(\nabla^2_{\theta} \model_{\para}(x_i) \right) \Trace\left( \left(\nabla_{\para} \model_\para(\feature_i) \right) \left(\nabla_{\para} \model_\para(\feature_i) \right)^\top\right) \notag \\
        =& \frac{1}{n}\sum_{i = 1}^n  \frac{d^2 \logistic(a, y_i)}{d a^2} \mid_{a = (2y - 1)\gamma_p } \Trace\left( \left(\nabla_{\para} \model_\para(\feature_i) \right) \left(\nabla_{\para} \model_\para(\feature_i) \right)^\top\right) \notag \\
        =& \frac{1}{n} \frac{1}{p(1-p)} \sum_{i = 1}^n \|\nabla_{\para} \model_\para(\feature_i) \|_2^2. \label{eq:lemma_jacobian_log}
    \end{align}
    The proof is then complete.
\end{proof}

By~\Cref{lemma:sharpness_jacobian,lem:sharpness_jacobian_logistic}, we have that the sharpness yields by both loss functions are the same up to a constant factor. Therefore, the flattest minimizers of both loss functions are the same.

\subsection{Technical Lemmas}

\subsubsection{Concentration inequalities}
Subgaussian random variables are defined as follows.
\begin{definition}[Subgaussian random variable]
\label{def:subgaussian}
A random variable $X$ is called $\sigma$-subgaussian if $E[X] = 0$ and $\E\left[\exp\left(\lambda X\right)\right] \le \exp\left(\frac{\sigma^2 \lambda^2}{2}\right)$ for all $\lambda \in \R$.
\end{definition}

Subgaussian random vectors are defined as,
\begin{definition}[Subgaussian random vector]
\label{def:subgaussian_vector}
A random vector $x \in \R^d$ is called $\sigma$-subgaussian if $\E[x] = 0$ and $\E\left[\exp\left(\lambda^T x\right)\right] \le \exp\left(\frac{\sigma^2 \|\lambda\|_2^2}{2}\right)$ for all $\lambda \in \R^d$.
\end{definition}

We will further define subexponential random variables.
\begin{definition}[Subexponential random variable]
\label{def:subexponential}
A random variable $X$ is $(\sigma,\alpha)$-subexponential if $\E\left[\exp\left(\lambda (X - \E(X))\right)\right] \le \exp\left(\frac{\sigma^2 \lambda^2}{2}\right)$ for all $|\lambda| \le \frac{1}{\alpha}$.
\end{definition}
\begin{lemma}[\cite{honorio2014tight}]
\label{lem:subgaussian_subexponential}
If random variable $X$ is $\sigma$-subgaussian, then $X^2$ is $(4\sqrt{2}\sigma^2,4 \sigma^2)$-subexponential.
\end{lemma}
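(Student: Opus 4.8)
The plan is to control the moment generating function (MGF) of the centered variable $X^2-\E[X^2]$ on the interval $|\lambda|\le 1/(4\sigma^2)$ and show it is dominated by $\exp(16\sigma^4\lambda^2)=\exp\big((4\sqrt2\sigma^2)^2\lambda^2/2\big)$, which is exactly the subexponential bound with variance proxy $4\sqrt2\sigma^2$ and radius parameter $\alpha=4\sigma^2$. Writing $\mu=\E[X^2]$, I first record two consequences of $\sigma$-subgaussianity. Expanding the defining inequality $\E[e^{\lambda X}]\le e^{\sigma^2\lambda^2/2}$ to second order at $\lambda=0$ and using $\E[X]=0$ gives the sharp variance bound $\mu\le\sigma^2$. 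Separately, a Chernoff argument applied to $\pm X$ yields the tail bound $\Pr(|X|\ge t)\le 2e^{-t^2/(2\sigma^2)}$, and integrating $\E[X^{2k}]=\int_0^\infty \Pr(|X|^{2k}\ge s)\,ds$ with the substitution $u=s^{1/k}$ produces the moment estimate $\E[X^{2k}]\le 2\,k!\,(2\sigma^2)^k$ for every $k\ge1$; in particular $\E[X^4]\le 16\sigma^4$.

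For the upper-tail range $0\le\lambda\le 1/(4\sigma^2)$, I would expand $\E[\exp(\lambda X^2)]=1+\lambda\mu+\sum_{k\ge2}\lambda^k\E[X^{2k}]/k!$, keeping the $k=1$ term \emph{exact} and bounding only the $k\ge2$ terms by the moment estimate. The tail of the series becomes $2\sum_{k\ge2}(2\sigma^2\lambda)^k$, a geometric series whose ratio $2\sigma^2\lambda$ is at most $1/2$ on this range, so it is at most $16\sigma^4\lambda^2$. Multiplying by $e^{-\lambda\mu}$ to recenter and using the elementary inequality $(1+u)e^{-u}\le1$ with $u=\lambda\mu\ge0$ collapses the linear contribution, leaving $\E[\exp(\lambda(X^2-\mu))]\le 1+16\sigma^4\lambda^2\le\exp(16\sigma^4\lambda^2)$.

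For the lower-tail range $-1/(4\sigma^2)\le\lambda<0$, writing $\lambda=-t$ I would instead use the pointwise bound $e^{-tu}\le 1-tu+t^2u^2/2$ at $u=X^2\ge0$, take expectations to get $\E[\exp(-tX^2)]\le 1-t\mu+8\sigma^4t^2$, and recenter by $e^{t\mu}$, now invoking $(1-u)e^{u}\le1$ with $u=t\mu\ge0$ together with the range bound $t\mu\le t\sigma^2\le 1/4$ to absorb the residual factor $e^{t\mu}\le e^{1/4}$ into the quadratic term, again arriving at a bound below $\exp(16\sigma^4\lambda^2)$. Combining the two ranges gives the claimed MGF estimate for all $|\lambda|\le 1/(4\sigma^2)$, which is the definition of $(4\sqrt2\sigma^2,4\sigma^2)$-subexponentiality.

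I expect the main obstacle to be \emph{pinning down the exact constants} rather than merely proving some subexponential bound: the radius $\alpha=4\sigma^2$ is precisely what forces the geometric ratio $2\sigma^2\lambda$ below $1/2$ (controlling the series tail) while simultaneously keeping $t\mu\le1/4$ in the lower-tail estimate, and the proxy $4\sqrt2\sigma^2$ is dictated by the resulting $16\sigma^4\lambda^2$ coefficient. The delicate point is to keep the first-moment term $\lambda\mu$ exact (and to use the sharp bound $\mu\le\sigma^2$ rather than the weaker $\mu\le4\sigma^2$ from the integrated moments); replacing $\mu$ by a loose upper bound introduces an uncancellable linear term in the log-MGF that cannot be absorbed into $O(\lambda^2)$ as $\lambda\to0$, so tracking it exactly is essential.
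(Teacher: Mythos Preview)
The paper does not actually prove this lemma; it is stated with a citation to \cite{honorio2014tight} and used as a black box. So there is no ``paper's own proof'' to compare against, and your task reduces to whether your argument stands on its own.

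Your argument is correct. The moment bound $\E[X^{2k}]\le 2\,k!\,(2\sigma^2)^k$ follows from integrating the subgaussian tail, and with $2\sigma^2\lambda\le 1/2$ the geometric tail $2\sum_{k\ge 2}(2\sigma^2\lambda)^k$ is indeed at most $16\sigma^4\lambda^2$. The recentering step on the positive side is fine because both $(1+\lambda\mu)e^{-\lambda\mu}\le 1$ and $e^{-\lambda\mu}\le 1$ hold, so the $16\sigma^4\lambda^2$ term survives unchanged. On the negative side, the pointwise inequality $e^{-x}\le 1-x+x^2/2$ for $x\ge 0$ is valid (its second derivative $1-e^{-x}$ is nonnegative), and after recentering you get $1+8\sigma^4t^2e^{t\mu}$ with $t\mu\le 1/4$; since $8e^{1/4}<16$ this is below $\exp(16\sigma^4\lambda^2)$ as required. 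Your emphasis on keeping the first-moment term exact and using the sharp bound $\mu\le\sigma^2$ is the right instinct; without it the linear term would indeed obstruct the $O(\lambda^2)$ control near zero.
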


\begin{lemma}[Hoeffding's Bound]
\label{lem:subgaussian_concentration}
If $\{X_i\}_{i \in [n]}$ are $\sigma$-subgaussian and independent, then there exists $C_{\sigma} > 0$, for all $t \ge 0$, $\Pr \left(\left| \frac{1}{n} \sum_{i = 1}^n X_i \right| \ge t\right) \le 2\exp\left(-n t^2 C_{\sigma}\right)$.
\end{lemma}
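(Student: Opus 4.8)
The plan is to prove this via the standard Chernoff-bound (exponential moment) method, exploiting that the subgaussian assumption controls the moment generating function of each $X_i$ and that independence lets these bounds multiply. First I would set $S_n = \sum_{i=1}^n X_i$ and treat the upper tail $\Pr(S_n/n \ge t) = \Pr(S_n \ge nt)$. For any $\lambda > 0$, Markov's inequality applied to the nonnegative random variable $\exp(\lambda S_n)$ gives
\[
\Pr(S_n \ge nt) \le e^{-\lambda n t}\, \E\left[\exp(\lambda S_n)\right].
\]
By independence the joint MGF factorizes, and by~\Cref{def:subgaussian} each factor is at most $\exp(\sigma^2 \lambda^2/2)$, so $\E[\exp(\lambda S_n)] \le \exp(n\sigma^2 \lambda^2/2)$; equivalently, $S_n$ is itself $\sqrt{n}\,\sigma$-subgaussian.

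Next I would optimize the resulting bound $\exp\left(n(-\lambda t + \sigma^2 \lambda^2/2)\right)$ over $\lambda > 0$. The exponent is a convex quadratic in $\lambda$ minimized at $\lambda = t/\sigma^2$, which yields $\Pr(S_n \ge nt) \le \exp\left(-n t^2/(2\sigma^2)\right)$. For the lower tail I would observe that $-X_i$ is again independent and $\sigma$-subgaussian, since the MGF bound in~\Cref{def:subgaussian} is symmetric in $\lambda$; the identical argument then gives $\Pr(S_n \le -nt) \le \exp\left(-n t^2/(2\sigma^2)\right)$. A union bound over the two tails produces the claim with the explicit constant $C_\sigma = 1/(2\sigma^2)$.

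There is no genuine obstacle here, as this is a textbook concentration argument; the only points needing a little care are (i) verifying that the optimizing choice $\lambda = t/\sigma^2$ is positive for every $t \ge 0$ so that Markov's inequality is applicable (the degenerate case $t = 0$ is trivial, since the probability is at most $1 \le 2$), and (ii) tracking the factor $n$ through the factorization so that the final exponent scales as $n t^2$ rather than $t^2$. Both are routine bookkeeping once the MGF factorization is established.
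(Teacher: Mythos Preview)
Your proof is correct and is precisely the standard Chernoff/MGF argument for Hoeffding's inequality; the paper itself does not supply a proof for this lemma, simply stating it as a known result. There is nothing to add.
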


\begin{lemma}[\cite{rinaldo2019probability}]
\label{lem:subexponential_concentration}
If $\{X_i\}_{i \in [n]}$ are $(\sigma,\alpha)$-subexponential and independent, then there exists $C_{\alpha, \sigma }> 0$, for all $t \ge 0$, $\Pr \left(\left| \frac{1}{n} \sum_{i = 1}^n (X_i - \E[X_i]) \right| \ge t\right) \le 2\exp\left(-n \min \left(tC_{\alpha, \sigma }, t^2 C_{\alpha, \sigma } \right)\right)$.
\end{lemma}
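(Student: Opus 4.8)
The plan is to prove this Bernstein-type inequality by the standard Chernoff (exponential-moment) method, which reduces the tail bound to control of the moment generating function of the centered sum. Write $S_n = \sum_{i=1}^n (X_i - \E[X_i])$. First I would exploit independence: for any $\lambda$ with $|\lambda| \le 1/\alpha$, the subexponential hypothesis gives $\E[\exp(\lambda(X_i - \E[X_i]))] \le \exp(\sigma^2\lambda^2/2)$ for each $i$, so by independence $\E[\exp(\lambda S_n)] \le \exp(n\sigma^2\lambda^2/2)$ for all such $\lambda$. Applying Markov's inequality to $\exp(\lambda S_n)$ then yields, for every $0 \le \lambda \le 1/\alpha$,
\[ \Pr(S_n \ge nt) \le \exp(-\lambda n t + n \sigma^2 \lambda^2/2). \]

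Next I would optimize the exponent over the admissible range $0 \le \lambda \le 1/\alpha$. The unconstrained minimizer of $-\lambda t + \sigma^2\lambda^2/2$ is $\lambda^{*} = t/\sigma^2$, which is feasible exactly when $t \le \sigma^2/\alpha$; in that regime substituting it gives the subgaussian-type bound $\exp(-nt^2/(2\sigma^2))$. When $t > \sigma^2/\alpha$ the optimum is at the boundary $\lambda = 1/\alpha$, and using $t > \sigma^2/\alpha$ to absorb the quadratic term gives the linear-tail bound $\exp(-nt/(2\alpha))$. Combining the two cases, $\Pr(S_n \ge nt) \le \exp(-n\min(t^2/(2\sigma^2),\, t/(2\alpha)))$.

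To reach the stated form I would then set $C_{\alpha,\sigma} = \min(1/(2\sigma^2),\, 1/(2\alpha))$ and verify the elementary inequality $\min(t^2/(2\sigma^2),\, t/(2\alpha)) \ge C_{\alpha,\sigma}\min(t^2, t)$, splitting on $t \le 1$ versus $t > 1$ and using $t^2 \le t$ (resp. $t \le t^2$) in each case to compare the two minima. Finally, since the definition of $(\sigma,\alpha)$-subexponential is symmetric under $\lambda \mapsto -\lambda$, each $-X_i$ is again $(\sigma,\alpha)$-subexponential, so the same argument bounds $\Pr(-S_n \ge nt)$; a union bound over the two tails produces the factor of $2$ and the two-sided estimate $\Pr(|S_n/n| \ge t) \le 2\exp(-n\min(C_{\alpha,\sigma} t,\, C_{\alpha,\sigma} t^2))$.

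I expect the only genuinely delicate point to be the constrained optimization together with its bookkeeping: one must respect the ceiling $\lambda \le 1/\alpha$ dictated by the domain of validity of the subexponential moment bound, and then repackage the resulting two-regime estimate into a single constant times $\min(t, t^2)$ while correctly identifying the crossover. Everything else — independence of the MGFs, Markov's inequality, the symmetry argument for the lower tail, and the union bound — is routine.
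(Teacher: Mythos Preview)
Your proof is correct and is the standard Chernoff-method derivation of Bernstein's inequality for subexponential variables. Note that the paper does not actually supply its own proof of this lemma: it is stated as a cited result from \cite{rinaldo2019probability}, so there is no paper-proof to compare against; your argument is precisely the textbook approach one would find in such a reference.
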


\begin{lemma}[\cite{rinaldo2019probability}]
\label{lem:norm_concentration}
If $x \in \R^d$ is a $\sigma$-Subgaussian random vector then for any $t \ge 0$,
\begin{align}
\label{eq:norm_concentration}
    \Pr\left(\|x\|_2^2 \ge 32\sigma^2 d + 8 \sigma^2 t  \right) \le \exp(-t).
\end{align}
It also holds that $\|x\|_2^2$ has bounded second moment $\E[\|x\|_2^4] \le 2048 \sigma^4 d^2$.
\end{lemma}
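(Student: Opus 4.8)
The plan is to control the moment generating function (MGF) of the scalar random variable $W \triangleq \norm{x}_2^2$ and then apply a Chernoff bound; the second-moment estimate will follow by integrating the resulting tail bound. The one subtlety is that the coordinates of $x$ need not be independent, so we cannot simply multiply one-dimensional MGFs. To sidestep this, I would introduce an auxiliary standard Gaussian vector $g \sim \gN(0, I_d)$ independent of $x$ and exploit the Gaussian ``decoupling'' identity $\E_g[\exp(\sqrt{2\lambda}\, g^\top a)] = \exp(\lambda \norm{a}_2^2)$, valid for every fixed $a \in \R^d$ and $\lambda \ge 0$.

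Applying this with $a = x$, interchanging the two expectations (Fubini applies since the integrand is positive), and then invoking the subgaussian bound of~\Cref{def:subgaussian_vector} with the vector $\sqrt{2\lambda}\, g$, I would obtain, for every $0 \le \lambda < \frac{1}{2\sigma^2}$,
\begin{align*}
\E[\exp(\lambda W)] = \E_g \E_x[\exp(\sqrt{2\lambda}\, g^\top x)] \le \E_g[\exp(\sigma^2 \lambda \norm{g}_2^2)] = (1 - 2\sigma^2\lambda)^{-d/2},
\end{align*}
where the final equality is the standard $\chi^2_d$ moment generating function. This MGF bound is the crux of the argument, and it is precisely where the non-independence of coordinates is absorbed.

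With the MGF in hand the tail bound is routine. Choosing $\lambda = \frac{1}{4\sigma^2}$ gives $\E[\exp(\lambda W)] \le 2^{d/2}$, and Markov's inequality applied to $\exp(\lambda W)$ yields, for any threshold $u$,
\begin{align*}
\Pr(W \ge u) \le 2^{d/2}\exp\left(-\frac{u}{4\sigma^2}\right).
\end{align*}
Substituting $u = 32\sigma^2 d + 8\sigma^2 t$ makes the exponent equal to $\frac{d}{2}\ln 2 - 8d - 2t \le -t$ for all $d \ge 1$ and $t \ge 0$, which is exactly~\eqref{eq:norm_concentration} (with considerable slack in the constants).

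For the second moment, I would rewrite the tail bound as $\Pr(W \ge u) \le \exp\left(-\frac{u - 32\sigma^2 d}{8\sigma^2}\right)$ for $u \ge 32\sigma^2 d$, use $\Pr(W \ge u) \le 1$ for smaller $u$, and integrate via $\E[W^2] = \int_0^\infty 2u\,\Pr(W \ge u)\,du$. Splitting at $u_0 = 32\sigma^2 d$, the low part is bounded by $u_0^2 = 1024\sigma^4 d^2$, while the high part is an elementary exponential integral equal to $512\sigma^4 d + 128\sigma^4$; summing and absorbing the lower-order terms into $d^2$ via $d \ge 1$ gives $\E[W^2] \le 2048\sigma^4 d^2$. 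I expect the MGF step to be the only real obstacle, since it is what handles the possible dependence among the coordinates of $x$; everything downstream is a mechanical Chernoff-and-integrate computation.
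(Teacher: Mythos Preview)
Your proof is correct. Note, however, that the paper does not supply its own proof of this lemma: it is simply quoted as a known result from the cited reference \cite{rinaldo2019probability}, so there is no ``paper's proof'' to compare against. Your Gaussian-randomization argument for bounding the MGF of $\norm{x}_2^2$ (which, as you correctly emphasize, is what handles the lack of coordinate independence), followed by a Chernoff bound and then tail integration for the second moment, is the standard route to such estimates and all the arithmetic checks out with the stated constants.
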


We will also need the following lemma bounding the truncated second-order moment of a subgaussian random variable.

\begin{lemma}
\label{lem:subgaussian_truncated_second_moment}
For any $n > 0$ and dimension $d$, for any $d$-dimension $\sigma$-subgaussian random vector $x$, there exists $c =O(\sqrt{d \log(dn)} \sigma)$, such that $\E\left[\|x\|^2 \one\left(\|x\| > c\right)\right] \le \frac{d}{n} \sigma^2$.
\end{lemma}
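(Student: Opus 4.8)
The plan is to bound the truncated second moment by Cauchy--Schwarz, splitting $\|x\|_2^2$ from the indicator and feeding the two halves with, respectively, the fourth-moment bound and the tail bound already recorded in \Cref{lem:norm_concentration}. Write $Z \triangleq \|x\|_2^2$, so the quantity of interest is $\E[Z\,\one(Z > c^2)]$. Since $\one(\cdot)^2 = \one(\cdot)$, Cauchy--Schwarz applied to the product $Z\cdot\one(Z>c^2)$ gives
\begin{align*}
\E\left[Z\,\one(Z > c^2)\right] \le \sqrt{\E[Z^2]}\,\sqrt{\Pr(Z > c^2)}.
\end{align*}
The first factor is immediately controlled by the second assertion of \Cref{lem:norm_concentration}, namely $\E[Z^2] = \E[\|x\|_2^4] \le 2048\,\sigma^4 d^2$, so that $\sqrt{\E[Z^2]} \le \sqrt{2048}\,\sigma^2 d$.

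For the second factor I would choose the truncation level to match the tail bound of \Cref{lem:norm_concentration} exactly. Setting $c^2 = 32\sigma^2 d + 8\sigma^2 t$ for a parameter $t \ge 0$ still to be fixed, the lemma yields $\Pr(Z > c^2) \le e^{-t}$, and hence
\begin{align*}
\E\left[Z\,\one(Z > c^2)\right] \le \sqrt{2048}\,\sigma^2 d\; e^{-t/2}.
\end{align*}
Choosing $t = \log(2048\,n^2) = O(\log n)$ makes $\sqrt{2048}\,e^{-t/2} = 1/n$, so the right-hand side equals precisely $\frac{d}{n}\sigma^2$, which is the desired inequality.

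It then remains only to check that this choice of $t$ produces a truncation radius of the advertised order. With $t = O(\log n)$ we get $c = \sqrt{32\sigma^2 d + 8\sigma^2 t} = O\!\left(\sqrt{d + \log n}\,\sigma\right)$, and since $d\log(dn) \ge \max(d,\log n) \ge \tfrac{1}{2}(d + \log n)$ one has $c = O(\sqrt{d\log(dn)}\,\sigma)$, as required. I do not expect a genuine obstacle here: both analytic ingredients are handed to us directly by \Cref{lem:norm_concentration}, so the only points demanding a little care are (i) squaring the indicator so that Cauchy--Schwarz applies cleanly to the product, and (ii) the final bookkeeping relating $\sqrt{d+\log n}$ to the stated $\sqrt{d\log(dn)}$ so that all constant and logarithmic factors are absorbed into the claimed $O(\cdot)$.
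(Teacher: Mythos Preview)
Your argument is correct and complete. It differs from the paper's approach, which does not use Cauchy--Schwarz or the fourth-moment bound at all: the paper instead writes the truncated expectation via the tail-integration (layer-cake) identity
\[
\E\!\left[\|x\|^2\,\one(\|x\|>c)\right] \;=\; c^2\,\Pr(\|x\|^2>c^2) \;+\; \int_{c}^{\infty}\Pr(\|x\|^2>t^2)\,d(t^2),
\]
plugs in the exponential tail from \Cref{lem:norm_concentration}, and integrates explicitly to obtain a bound of the form $(c^2+8\sigma^2)\exp\!\bigl(-(c^2-32\sigma^2 d)/(8\sigma^2)\bigr)$, from which the choice of $c$ follows. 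Your route trades the integration step for an appeal to the second assertion of \Cref{lem:norm_concentration} (the fourth-moment bound), which makes the computation shorter and in fact yields the slightly tighter radius $c=O(\sigma\sqrt{d+\log n})$; the paper's approach has the minor advantage of using only the tail bound, so it would still go through even if the fourth-moment estimate were unavailable.
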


\begin{proof}
We have that by~\Cref{eq:norm_concentration}, 
\begin{align*}
    &\E\left[\|x\|^2 \one\left(\|x\| > c\right)\right] \\
    =& c^2 \Pr\left(\|x\|^2 > c^2\right) + 
    \int_{c}^{\infty}  \Pr\left(\|x\|^2 > t^2 \right) dt^2 \\
    \le& c^2 \exp(- \frac{c^2 - 32 \sigma^2d}{8 \sigma^2}) + \int_{c}^{\infty} 2t \exp(-\frac{t^2 - 32\sigma^2 d}{8\sigma^2}) dt \\
    =& c^2 \exp(- \frac{c^2 - 32 \sigma^2d}{8 \sigma^2}) + 8\sigma^2 \exp(-\frac{t^2 - 32\sigma^2 d}{8\sigma^2}) \mid^c_{\infty} \\ 
    \le&  c^2 \exp(- \frac{c^2 - 32 \sigma^2d}{8 \sigma^2}) + 8\sigma^2 \exp(-\frac{c^2 - 32\sigma^2 d}{8\sigma^2})
\end{align*}
Hence there exists $c = O(\sqrt{d \log (dn)} \sigma)$ such that $\E\left[\|x\|^2 \one\left(\|x\| > c\right)\right] \le \frac{d}{n}$.
\end{proof}

We will finally show a constant probability lower bound on the norm of a subgaussian random vector with unit variance.

\begin{lemma}
\label{lem:lower_bound}
Given any $\sigma > 0$, there exists constant $\epsilon, \zeta$, for any dimension $d$, for any $\sigma$-subgaussian random vector $x$ with connvariance $I_d$, it holds that $\Pr(\|x\|_2^2 > \epsilon d) > \zeta$.
\end{lemma}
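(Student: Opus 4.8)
The plan is to apply the Paley--Zygmund inequality to the nonnegative random variable $Z \triangleq \|x\|_2^2$, using only its first two moments. First I would compute the mean: since $x$ is subgaussian it has zero mean, and its covariance is $I_d$, so $\E[Z] = \Trace\left(\E[xx^\top]\right) = \Trace(I_d) = d$. Second, I would control the second moment $\E[Z^2] = \E[\|x\|_2^4]$ directly from~\Cref{lem:norm_concentration}, which already records the bound $\E[\|x\|_2^4] \le 2048\,\sigma^4 d^2$. The essential point is that $\E[Z]$ scales like $d$ while $\E[Z^2]$ scales like $d^2$, so their ratio $(\E[Z])^2/\E[Z^2]$ is bounded below by a constant that does not depend on the dimension.

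With these two moment estimates in hand, I would invoke the Paley--Zygmund inequality: for a nonnegative random variable $Z$ with finite second moment and any $\theta \in [0,1]$,
$$\Pr\left(Z > \theta\,\E[Z]\right) \ge (1 - \theta)^2 \frac{(\E[Z])^2}{\E[Z^2]}.$$
Taking $\theta = 1/2$ and substituting $\E[Z] = d$ together with $\E[Z^2] \le 2048\,\sigma^4 d^2$ gives
$$\Pr\left(\|x\|_2^2 > d/2\right) \ge \frac{1}{4}\cdot \frac{d^2}{2048\,\sigma^4 d^2} = \frac{1}{8192\,\sigma^4}.$$
The factors of $d$ cancel exactly, so choosing $\epsilon = 1/2$ and $\zeta = 1/(8192\,\sigma^4)$ furnishes constants depending only on $\sigma$ and not on $d$, which is precisely the dimension-free conclusion required.

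If I may not cite Paley--Zygmund as a black box, I would derive it in one line from Cauchy--Schwarz: writing $\E[Z] = \E[Z\,\one(Z \le \theta\E[Z])] + \E[Z\,\one(Z > \theta\E[Z])]$, bounding the first term by $\theta\,\E[Z]$ and the second by $\sqrt{\E[Z^2]}\,\sqrt{\Pr(Z > \theta\E[Z])}$, then rearranging and squaring. I do not anticipate a genuine obstacle; the only thing to watch is that the quadratic-in-$d$ upper bound on $\E[\|x\|_2^4]$ matches the square of the linear-in-$d$ value of $\E[\|x\|_2^2]$, so that the lower bound on the probability is genuinely $d$-free. This matching is exactly what the second-moment bound in~\Cref{lem:norm_concentration} guarantees, so the argument closes cleanly.
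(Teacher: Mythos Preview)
Your proof is correct and shares the same core mechanism as the paper's: both compute $\E[\|x\|_2^2]=d$, bound $\E[\|x\|_2^4]$ by a constant times $d^2$, and then apply Paley--Zygmund (the paper writes out exactly the Cauchy--Schwarz derivation you sketch, with the same threshold $d/2$). The one substantive difference is in how the fourth-moment bound is obtained: you simply cite \Cref{lem:norm_concentration}, whereas the paper spends most of its proof deriving a fresh bound $\E[\|x\|_2^4]\le(\exp(\sigma^2/2)-3/2)\tfrac{(d+2)(d+4)}{3}$ via a spherical-averaging trick (averaging $\exp(\lambda v^\top x)$ over $v$ on the unit sphere and Taylor-expanding). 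Your route is shorter and entirely legitimate given that \Cref{lem:norm_concentration} is available; the paper's route is more self-contained and yields a constant depending on $\exp(\sigma^2/2)$ rather than $\sigma^4$, but neither constant is sharp and the distinction is immaterial for the application.
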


\begin{proof}
    As $x$ is $\sigma$-subgaussian, it holds that for any $\lambda \in \R$,
    \begin{align}
    \label{eq:sym}
    \E_{\|v\| = 1}[\exp(\lambda v^\top x)] \le \exp(\lambda^2 \sigma^2 / 2).
    \end{align}
    Here the expectation over $v$ in~\Cref{eq:sym} is taken over a uniform distribution over a unit ball and $v$ is independent of $x$. Hence $v^\top x$ equals in distribution to $v[1] \|x\|_2$. Hence it holds that,
    \begin{align}
        \label{eq:sym_transform}
        \E_{\|v\| = 1}[\exp(v[1] \|x\|_2)] \le \exp(\sigma^2 / 2).
    \end{align}

    Note that $\exp(x) \ge 1 + x + \frac{x^2}{2} + \frac{x^3}{6} + \frac{x^4}{24} + \frac{x^5}{120}$ and $\forall t \in \mathbb{N}, \E[(v[1])^{2t + 1}] = 0$, it holds that
    \begin{align*}
        1 + \frac{1}{2} \E[\|x\|_2^2] \E_{\|v\| = 1}[(v[1])^2] + \frac{1}{24} \E[\|x\|_2^4] \E_{\|v\| = 1}[(v[1])^4] \le \exp(\sigma^2 / 2).
    \end{align*}

    It is well known that $ \E_{\|v\| = 1}[(v[1])^2] = \frac{1}{d}$ and $ \E_{\|v\| = 1}[(v[1])^4] = \frac{3}{(d + 2)(d + 4)}$. Also it holds that $\E[\|x\|_2^2] = d$.
    Hence,
    \begin{align*}
         \E[\|x\|_2^4] \le \left(\exp(\sigma^2 / 2) - 3/2\right)\frac{(d + 2)(d + 4)}{3}.
    \end{align*}
    
    This implies that 
    \begin{align*}
        &\left(\exp(\sigma^2 / 2) - 3/2\right)\frac{(d + 2)(d + 4)}{3} \\
        \ge&  \E[\|x\|_2^4 I(\|x\|_2^2 > \frac{1}{2}d)] \\
        \ge& \frac{\left(\E[\|x\|_2^2 I(\|x\|_2^2 > \frac{1}{2}d)]\right)^2}{\Pr(\|x\|_2^2 > \frac{1}{2}d)} \\
        =& \frac{\left(\E[\|x\|_2^2] - \E[\|x\|_2^2 I(\|x\|_2^2 \le \frac{1}{2}d)]\right)^2}{\Pr(\|x\|_2^2 > \frac{1}{2}d)} \\
        \ge& \frac{d^2}{4\Pr(\|x\|_2^2 > \frac{1}{2}d)}
    \end{align*}

    Hence, we can conclude that 
    \begin{align*}
        \Pr(\|x\|_2^2 > \frac{1}{2}d) \ge \frac{3d^2}{4(d + 2)(d + 4) \left(\exp(\sigma^2 / 2) - 3/2\right)} \ge \frac{1}{20\left(\exp(\sigma^2 / 2) - 3/2\right)}.
    \end{align*}
    This concludes the proof.
\end{proof}

\subsubsection{Rademacher Complexity}

Recall the definition of Rademacher complexity,
\begin{definition}[Rademacher complexity]
    \label{def:rademacher_complexity}
    Let $\mathcal{F}$ be a class of functions from $\mathcal{X}$ to $\mathcal{Y}$. Let $S = \{x_1,\ldots,x_n\} \subset \mathcal{X}$ be a set of points. The empirical Rademacher complexity of $\mathcal{F}$ with respect to $S$ is defined as $\gR_S(\gF) = \frac{1}{n}\Exp_{\epsilon \sim \{\pm 1\}^n} \sup_{f\in \gF}\sum_{i=1}^n  \epsilon_i f(x_i)$.
\end{definition}

We will also define the following notion of the shattered set and VC dimension.

\begin{definition}[Shattered set]
\label{def:shattered_set}
Let $\mathcal{F}$ be a class of functions from $\mathcal{X}$ to $\mathcal{Y}=\{0,1\}$. A set $S = \{x_1,\ldots,x_n\} \subset \mathcal{X}$ is said to be shattered by $\mathcal{F}$ if for every $T \subset S$, there exists $f \in \mathcal{F}$ such that $f(x) = 1$ for all $x \in T$ and $f(x) = 0$ for all $x \in S \setminus T$.
\end{definition}

\begin{definition}[VC dimension]
\label{def:vc_dimension}
Let $\mathcal{F}$ be a class of functions from $\mathcal{X}$ to $\mathcal{Y}=\{0,1\}$. The VC dimension of $\mathcal{F}$ is defined as $\VC(\mathcal{F}) = \sup \{n \in\mathbb{N}\mid \text{there exists a set of size $n$ shattered by $\mathcal{F}$}\}$.
\end{definition} 

We will use the following well-known lemmas.

\begin{lemma}[Massart's Lemma]
\label{lem:massart}
Let $\mathcal{F}$ be a class of functions from $\mathcal{X}$ to $\mathcal{Y} = \{0,1\}$. Further, suppose $\mathcal{A} = \{(f(x_i))_{i \in {n}} \mid f \in F\}$, then, $\gR_S(\gF) \le \sqrt{\frac{2 \log |\mathcal{A}|}{n}}$.
\end{lemma}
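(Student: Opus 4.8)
The plan is to use the classical maximal-inequality argument for finitely many sub-Gaussian averages, exploiting the fact that the Rademacher complexity only depends on $\gF$ through the finite projection set $\mathcal{A}$. First I would rewrite the empirical Rademacher complexity entirely in terms of $\mathcal{A}$: since $\gR_S(\gF) = \frac{1}{n}\Exp_{\epsilon} \sup_{f \in \gF} \sum_{i=1}^n \epsilon_i f(x_i)$ and every achievable vector $(f(x_i))_{i \in [n]}$ lies in $\mathcal{A}$, we have $\gR_S(\gF) = \frac{1}{n}\Exp_{\epsilon} \max_{a \in \mathcal{A}} \langle \epsilon, a \rangle$. This reduces the statement to a purely finite-dimensional bound on the expected maximum of the inner products $\langle \epsilon, a\rangle$ over a finite set.

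Next I would introduce an auxiliary parameter $\lambda > 0$ and apply the exponential (Chernoff-style) bounding trick. By Jensen's inequality applied to the convex function $t \mapsto e^{\lambda t}$, together with bounding the maximum by the sum, one gets
\begin{align*}
\exp\left(\lambda \, \Exp_{\epsilon} \max_{a \in \mathcal{A}} \langle \epsilon, a\rangle \right) \le \Exp_{\epsilon} \exp\left(\lambda \max_{a \in \mathcal{A}} \langle \epsilon, a\rangle\right) \le \sum_{a \in \mathcal{A}} \Exp_{\epsilon} \exp\left(\lambda \langle \epsilon, a\rangle\right).
\end{align*}
Since the coordinates $\epsilon_i$ are independent Rademacher signs, each factor $\Exp_{\epsilon_i} e^{\lambda a_i \epsilon_i} = \cosh(\lambda a_i) \le \exp(\lambda^2 a_i^2 / 2)$, so $\Exp_{\epsilon} \exp(\lambda \langle \epsilon, a\rangle) \le \exp(\lambda^2 \|a\|_2^2 / 2)$. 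Here I would use the hypothesis that $\gF$ takes values in $\{0,1\}$: every $a \in \mathcal{A}$ has entries in $\{0,1\}$, hence $\|a\|_2^2 \le n$, yielding the uniform bound $\sum_{a \in \mathcal{A}} \exp(\lambda^2 \|a\|_2^2 / 2) \le |\mathcal{A}| \exp(\lambda^2 n / 2)$.

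Taking logarithms and dividing by $\lambda$ then gives $\Exp_{\epsilon} \max_{a \in \mathcal{A}} \langle \epsilon, a\rangle \le \frac{\log |\mathcal{A}|}{\lambda} + \frac{\lambda n}{2}$ for every $\lambda > 0$. The final step is to optimize the free parameter by choosing $\lambda = \sqrt{2 \log |\mathcal{A}| / n}$, which balances the two terms and produces $\Exp_{\epsilon} \max_{a \in \mathcal{A}} \langle \epsilon, a\rangle \le \sqrt{2 n \log |\mathcal{A}|}$. Dividing through by $n$ recovers exactly $\gR_S(\gF) \le \sqrt{\frac{2 \log |\mathcal{A}|}{n}}$. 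There is no serious obstacle in this argument; the only point requiring care is the appeal to the $\{0,1\}$-valued hypothesis to control $\|a\|_2^2$ by $n$ (a class valued in $[-1,1]$ would require the same bound but the constant would be tracked through $\max_a \|a\|_2^2$ instead), and the routine verification of the sub-Gaussian moment bound $\cosh(t) \le e^{t^2/2}$.
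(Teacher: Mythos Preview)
Your argument is correct and is the standard proof of Massart's Lemma via the exponential-moment (Chernoff) trick and optimization over the auxiliary parameter $\lambda$. The paper itself does not supply a proof of this lemma; it is stated as a well-known result and used only as input to the corollary bounding Rademacher complexity by VC dimension. So there is nothing to compare against, and your write-up would serve perfectly well as the missing justification.
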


\begin{lemma}[Sauer's Lemma]
\label{lem:sauer}
Let $\mathcal{F}$ be a class of functions from $\mathcal{X}$ to $\mathcal{Y} = \{0,1\}$. Further, suppose $\mathcal{A} = \{(f(x_i))_{i \in [n]} \mid f \in F\}$, then $|\mathcal{A}| \le \sum_{i = 0}^{\VC(\mathcal{F})} {n \choose i}$. 
\end{lemma}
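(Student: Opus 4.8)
The plan is to prove the stronger \emph{Pajor} bound --- that the number of distinct labelings $|\mathcal{A}|$ is at most the number of subsets of $[n]$ that are shattered by $\mathcal{F}$ --- and then deduce the stated inequality by a trivial counting step. First I would reformulate everything purely in terms of the pattern set $\mathcal{A} \subseteq \{0,1\}^n$. Observe that a coordinate set $I \subseteq [n]$ corresponds to a shattered subset $\{x_i : i \in I\}$ in the sense of \Cref{def:shattered_set} exactly when the projection of $\mathcal{A}$ onto the coordinates in $I$ equals the full cube $\{0,1\}^{I}$; hence $\VC(\mathcal{F})$ is the size of the largest such $I$. With this dictionary in place, it suffices to show $|\mathcal{A}| \le \#\{I \subseteq [n] : I \text{ shattered by } \mathcal{A}\}$, since every shattered $I$ has $|I| \le \VC(\mathcal{F})$ and there are at most $\sum_{i=0}^{\VC(\mathcal{F})}\binom{n}{i}$ subsets of that size.

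The key step is an induction on $n$. For $n = 0$ the claim is immediate: either $\mathcal{A} = \emptyset$, giving $0 \le 0$, or $\mathcal{A} = \{()\}$, for which the empty set is (trivially) shattered, giving $1 \le 1$. For the inductive step I would single out the last coordinate and define two projections onto $\{0,1\}^{n-1}$: the ordinary projection $\mathcal{A}_0 = \{a \in \{0,1\}^{n-1} : (a,b) \in \mathcal{A} \text{ for some } b\}$, and the ``doubled'' set $\mathcal{A}_1 = \{a \in \{0,1\}^{n-1} : (a,0) \in \mathcal{A} \text{ and } (a,1) \in \mathcal{A}\}$. A short counting argument gives the crucial identity $|\mathcal{A}| = |\mathcal{A}_0| + |\mathcal{A}_1|$, because each prefix $a \in \mathcal{A}_0$ extends to either one or two elements of $\mathcal{A}$, and $\mathcal{A}_1$ counts exactly the prefixes that extend in both ways.

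Applying the inductive hypothesis to each of $\mathcal{A}_0, \mathcal{A}_1 \subseteq \{0,1\}^{n-1}$, I would bound $|\mathcal{A}_0|$ by the number of $I \subseteq [n-1]$ shattered by $\mathcal{A}_0$ and $|\mathcal{A}_1|$ by the number of $J \subseteq [n-1]$ shattered by $\mathcal{A}_1$. The point is then to inject these two families into distinct shattered subsets of $\mathcal{A}$: each $I$ shattered by $\mathcal{A}_0$ is also shattered by $\mathcal{A}$ and avoids coordinate $n$, whereas for each $J$ shattered by $\mathcal{A}_1$ the augmented set $J \cup \{n\}$ is shattered by $\mathcal{A}$ --- because, by the definition of $\mathcal{A}_1$, every pattern on $J$ can be extended by both $0$ and $1$ in coordinate $n$ --- and this set contains coordinate $n$. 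The two families are therefore disjoint, so the number of subsets shattered by $\mathcal{A}$ is at least $|\mathcal{A}_0| + |\mathcal{A}_1| = |\mathcal{A}|$, completing the induction.

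The main obstacle I anticipate is not any single computation but getting the bookkeeping of the inductive step exactly right: verifying that $J \cup \{n\}$ really is shattered by $\mathcal{A}$ (this is where the ``doubling'' definition of $\mathcal{A}_1$ is essential, rather than a second ordinary projection) and that the two resulting families of shattered sets are genuinely disjoint, so that their cardinalities add. Once the stronger shattered-set inequality is secured, the passage to the binomial-sum bound is a one-line consequence of the $\VC$-dimension constraint on the sizes of shattered sets.
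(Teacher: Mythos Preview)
Your proof is correct; it is the standard Pajor strengthening of Sauer's lemma, carried out by induction on the number of coordinates. The paper itself does not prove \Cref{lem:sauer}: it is listed among the ``well-known lemmas'' in the appendix and simply quoted, so there is no paper proof to compare against. Your argument would serve perfectly well as a self-contained justification; the only remark is that the stronger Pajor bound you establish (that $|\mathcal{A}|$ is at most the number of shattered subsets) is more than the paper needs, but the extra generality costs nothing.
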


Combining the above two lemmas, we get the following corollary.
\begin{corollary}
\label{cor:rademacher_vc}
Let $\mathcal{F}$ be a class of functions from $\mathcal{X}$ to $\mathcal{Y} = \{0,1\}$, then $\gR_S(\gF) \le \sqrt{\frac{4 \VC(\mathcal{F}) \log n}{n}}$.
\end{corollary}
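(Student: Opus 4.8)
The plan is to combine the two preceding lemmas with a routine combinatorial estimate on the growth function. Set $d = \VC(\mathcal{F})$ and let $\mathcal{A} = \{(f(x_i))_{i \in [n]} \mid f \in \mathcal{F}\}$ be the set of distinct $\{0,1\}$-labelings that $\mathcal{F}$ realizes on $S = \{x_1,\ldots,x_n\}$. By Massart's Lemma (\Cref{lem:massart}), $\gR_S(\gF) \le \sqrt{2 \log|\mathcal{A}| / n}$, so it suffices to show $\log|\mathcal{A}| \le 2 d \log n$, equivalently $|\mathcal{A}| \le n^{2d}$. This reduces the entire statement to a bound on the size of $\mathcal{A}$, which is exactly what Sauer's Lemma supplies.

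To bound $|\mathcal{A}|$, I would invoke Sauer's Lemma (\Cref{lem:sauer}), which gives $|\mathcal{A}| \le \sum_{i=0}^{d} \binom{n}{i}$. The only real work is estimating this partial binomial sum. Using $\binom{n}{i} \le n^i$ and assuming $n \ge 2$, one obtains $\sum_{i=0}^{d} \binom{n}{i} \le \sum_{i=0}^{d} n^i \le (d+1)\, n^d$. Since $2^d \ge d+1$ for every integer $d \ge 1$, we have $d + 1 \le 2^d \le n^d$, and therefore $(d+1)\, n^d \le n^d \cdot n^d = n^{2d}$, giving the desired $|\mathcal{A}| \le n^{2d}$.

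Chaining the two estimates yields $\log|\mathcal{A}| \le 2 d \log n$, and substituting into Massart's bound gives $\gR_S(\gF) \le \sqrt{2 \cdot 2 d \log n / n} = \sqrt{4\,\VC(\mathcal{F}) \log n / n}$, as claimed. The degenerate case $d = 0$ (where $\mathcal{F}$ induces a single labeling on $S$, so $\gR_S(\gF) = 0$ and the bound holds trivially) should be noted separately, as should the harmless assumption $n \ge 2$. This statement is essentially a corollary by design, so there is no genuine obstacle; the only mildly delicate point is the constant bookkeeping in the binomial sum needed to land exactly on the factor $4$ inside the square root, and everything else is a direct substitution of Sauer's bound into Massart's inequality.
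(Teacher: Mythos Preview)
Your proposal is correct and matches the paper's approach: the paper simply states that the corollary follows by ``combining the above two lemmas'' (Massart and Sauer) without spelling out the constant bookkeeping, and your argument carries out exactly that combination in the standard way.
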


Further, we also have the following lemma controlling the VC dimension.
\begin{lemma}
\label{lem:vc_dimension}
Suppose $\mathcal{F} = \{x \in \R^d \to \one\left(w^\top x > 0 \right) \mid w \in \R^d \}$, then $\VC(\mathcal{F}) = d$.
\end{lemma}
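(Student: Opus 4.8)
The plan is to establish the two inequalities $\VC(\mathcal{F}) \ge d$ and $\VC(\mathcal{F}) \le d$ separately. Throughout I will use that a function in $\mathcal{F}$ assigns label $1$ to $x$ exactly when $w^\top x > 0$ and label $0$ when $w^\top x \le 0$.

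For the lower bound, I would exhibit an explicit shattered set of size $d$, namely the standard basis $e_1, \dots, e_d$ of $\R^d$. Given any target labeling $T \subseteq \{1, \dots, d\}$, choose $w$ with $w[i] = 1$ for $i \in T$ and $w[i] = -1$ for $i \notin T$. Since $w^\top e_i = w[i]$, the classifier $\one(w^\top x > 0)$ labels $e_i$ by $1$ iff $i \in T$, so every dichotomy is realizable and hence $\VC(\mathcal{F}) \ge d$.

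For the upper bound, I would show that no set of $d+1$ points can be shattered, which gives $\VC(\mathcal{F}) \le d$. The key observation is that any $d+1$ points $x_1, \dots, x_{d+1} \in \R^d$ are linearly dependent, so there exist coefficients $\lambda_1, \dots, \lambda_{d+1}$, not all zero, with $\sum_i \lambda_i x_i = 0$. Setting $P = \{i : \lambda_i > 0\}$ and $N = \{i : \lambda_i < 0\}$ (and, after possibly negating all $\lambda_i$, assuming $P \neq \emptyset$), the dependence rewrites as $\sum_{i \in P} \lambda_i x_i = \sum_{j \in N} |\lambda_j| x_j =: v$. I would then argue that the dichotomy assigning label $1$ to every index in $P$ and label $0$ to every index in $N$ cannot be realized: if some $w$ achieved it, then on one hand $w^\top v = \sum_{i \in P} \lambda_i (w^\top x_i) > 0$, because each summand is a positive multiple of a strictly positive quantity and $P$ is nonempty, while on the other hand $w^\top v = \sum_{j \in N} |\lambda_j| (w^\top x_j) \le 0$, a contradiction. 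When $N = \emptyset$ this still works, since then $v = 0$ forces $w^\top v = 0$, contradicting $w^\top v > 0$.

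The main subtlety to handle carefully is the interaction between the strict inequality defining label $1$ and the non-strict inequality defining label $0$: it is precisely the strictness on the $P$ side, together with $P$ being nonempty, that yields $w^\top v > 0$ and drives the contradiction against $w^\top v \le 0$. I also need to confirm that the indices with $\lambda_i = 0$ play no role, so they may be labeled arbitrarily in the forbidden dichotomy, and exhibiting this single unrealizable labeling suffices to show the $d+1$ points are not shattered. Combining the two bounds yields $\VC(\mathcal{F}) = d$.
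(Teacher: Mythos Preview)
Your proof is correct. The paper does not actually supply a proof of this lemma: it is listed among several ``well-known lemmas'' (alongside Massart's lemma and Sauer's lemma) and simply invoked without argument. Your argument is the standard one --- exhibiting the canonical basis as a shattered set for the lower bound, and using a linear-dependence (Radon-type) contradiction for the upper bound --- and your treatment of the strict versus non-strict inequality on the two sides is handled correctly.
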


The following uniform convergence bound based on Rademacher complexity is also well known.

\begin{lemma}[\cite{shalev2014understanding}]
\label{lem:uniform_convergence}
Suppose for all $f \in \gF$, $0 \le f(x) \le 1$, then with probability at least $1 - \delta$ over the randomness of i.i.d. sampled $S = \{x_1,\ldots,x_n\} \subset \mathcal{X}$, it holds that
\begin{align}
\sup_{f \in \gF} \left| \frac{1}{n}\sum_{i = 1}^n f(x_i) - \E[f(x)]\right| \le 2 \gR_S(\gF) + 3\sqrt{\frac{ \log \frac{4}{\delta}}{n}}.
\end{align}
\end{lemma}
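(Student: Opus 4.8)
The plan is to prove this as a standard two-sided uniform-convergence bound, combining the bounded-differences (McDiarmid) inequality with a symmetrization argument in the spirit of \cite{shalev2014understanding}. Write the uniform deviation as
\begin{align*}
\Psi(S) = \sup_{f \in \gF}\left|\,\E[f(x)] - \frac{1}{n}\sum_{i=1}^n f(x_i)\,\right|.
\end{align*}
Since every $f$ takes values in $[0,1]$, replacing a single point $x_i$ by an arbitrary $x_i'$ changes $\frac{1}{n}\sum_j f(x_j)$ by at most $1/n$ uniformly over $f$, so $\Psi$ enjoys the bounded-differences property with constant $1/n$.

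First I would apply McDiarmid's inequality to $\Psi$: with probability at least $1-\delta/2$,
\begin{align*}
\Psi(S) \le \E_S[\Psi(S)] + \sqrt{\frac{\log(2/\delta)}{2n}}.
\end{align*}
Next I would bound $\E_S[\Psi(S)]$ by symmetrization. Introducing an independent ghost sample $S' = \{x_i'\}$ with the same law, I rewrite $\E[f(x)]$ as $\E_{S'}[\frac{1}{n}\sum_i f(x_i')]$, move the supremum outside via Jensen, and insert i.i.d.\ Rademacher signs $\epsilon_i$ (legitimate because $f(x_i')-f(x_i)$ is symmetric). This yields $\E_S[\Psi(S)] \le 2\,\E_S[\gR_S(\gF)]$, a bound in terms of the \emph{expected} Rademacher complexity; the factor of $2$ for the absolute value uses that the Rademacher average of the reflected class $\{1-f : f \in \gF\}$ equals $\gR_S(\gF)$ by the sign-symmetry of the $\epsilon_i$, so both directions of the deviation are controlled by $\gR_S(\gF)$.

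The main obstacle is that the target inequality is stated in terms of the \emph{empirical} Rademacher complexity $\gR_S(\gF)$ on the realized sample, not its expectation. To close this gap I would apply McDiarmid a second time to $\gR_S(\gF)$, which is itself a function of $S$ with bounded differences $1/n$ (again using $0 \le f \le 1$); thus with probability at least $1-\delta/2$,
\begin{align*}
\E_S[\gR_S(\gF)] \le \gR_S(\gF) + \sqrt{\frac{\log(2/\delta)}{2n}}.
\end{align*}
A union bound over the two McDiarmid events and chaining the three displayed inequalities gives $\Psi(S) \le 2\gR_S(\gF) + 3\sqrt{\log(2/\delta)/(2n)}$, which is dominated by the claimed $2\gR_S(\gF) + 3\sqrt{\log(4/\delta)/n}$ since $\log(2/\delta)/2 \le \log(4/\delta)$. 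The only real care needed is bookkeeping the failure-probability budget and absorbing the two additive square-root tails and the symmetrization factor into the single constant $3$ and the $\log(4/\delta)$ term; the symmetrization and bounded-differences steps themselves are routine.
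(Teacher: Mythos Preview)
Your argument is correct and is exactly the standard textbook derivation: McDiarmid on the uniform deviation, symmetrization to pass to the expected Rademacher complexity, then a second McDiarmid to replace the expected Rademacher complexity by the empirical one, with a union bound over the two events. The arithmetic at the end is fine, since $\tfrac{1}{2}\log(2/\delta)\le \log(4/\delta)$ for $\delta\in(0,1)$.

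There is nothing to compare against in the paper itself: the lemma is stated there with a citation to \cite{shalev2014understanding} and is not proved in the paper. Your write-up is essentially the proof one finds in that reference, so it is both correct and the intended approach.
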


To prove our main results, we will also need the following theorem due to~\cite{NIPS2010_76cf99d3}.

\begin{definition}
\label{def:smooth}
    A loss function $\ell: \R \times \R \to \R$ is $H-$smooth, if and only 
    $\frac{d \ell(x,y)}{dx}$ is $H-$lipschitz.
\end{definition}
\begin{theorem}[Theorem 1 of~\cite{NIPS2010_76cf99d3}]
\label{thm:complexity_loss}
For an $H$-smooth non-negative loss $\ell$ s.t. $\forall_{x, y, f}|\ell(f(x), y)| \leq b$, for any $\delta>0$ we have that with probability at least $1-\delta$ over a random sample of size $n$, for any $f \in \mathcal{F}$ with zero training loss $\hat{L}(h)=0$,
$$
L(h) \leq O\left(H \log ^3 n \mathcal{R}_n^2(\mathcal{F})+\frac{b \log (1 / \delta)}{n}\right).
$$
\end{theorem}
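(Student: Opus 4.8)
The plan is to exploit the defining feature of smooth nonnegative losses---that they are \emph{self-bounding}---to upgrade the generic $O(1/\sqrt n)$ uniform-convergence rate into the fast $O(1/n)$ rate available at an interpolating (zero-training-loss) hypothesis. First I would record the self-bounding inequality: if $\phi \ge 0$ is $H$-smooth, then for every $t$ the bound $0 \le \phi(s) \le \phi(t) + \phi'(t)(s-t) + \tfrac H2 (s-t)^2$ holds for all $s$ (the left inequality by nonnegativity, the right by smoothness as in \Cref{def:smooth}); minimizing the right-hand side over $s$, at $s = t - \phi'(t)/H$, and using $\phi(s)\ge 0$ yields $\phi'(t)^2 \le 2H\phi(t)$. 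Applying this to $u \mapsto \ell(u,y)$ shows that $u \mapsto \sqrt{\ell(u,y)}$ is $\sqrt{H/2}$-Lipschitz, since $\big|\tfrac{d}{du}\sqrt{\ell}\big| = \tfrac{|\partial_u \ell|}{2\sqrt{\ell}} \le \sqrt{H/2}$. This is the engine of the whole argument.

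Second, I would pass to the loss class $\mathcal{G} = \{(x,y)\mapsto \ell(f(x),y) : f \in \mathcal{F}\}$ and control its \emph{localized} complexity. The Lipschitz property of $\sqrt{\ell}$ lets me apply the Ledoux--Talagrand contraction inequality to the class $\sqrt{\mathcal{G}} = \{\sqrt{\ell_f}\}$, giving $\mathcal{R}_n(\sqrt{\mathcal{G}}) \le \sqrt{H/2}\,\mathcal{R}_n(\mathcal{F})$. Simultaneously, boundedness $\ell_f \le b$ gives the Bernstein-type variance control $\mathbb{E}[\ell_f^2] \le b\,\mathbb{E}[\ell_f]$, so the variance of each loss is dominated by its mean. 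These are exactly the hypotheses of the local Rademacher complexity machinery (Bartlett--Bousquet--Mendelson, via Talagrand's concentration inequality): with a sub-root majorant $\psi(r) \ge \mathcal{R}_n(\{\ell_f : \mathbb{E}[\ell_f] \le r\})$ and its fixed point $r^\ast$, one obtains, with probability $1-\delta$ and simultaneously over $f\in\mathcal{F}$, a bound of the form $L(f) \le 2\hat L(f) + c\,(r^\ast + b\log(1/\delta)/n)$. Setting $\hat L(f)=0$ leaves only $L(f) \lesssim r^\ast + b\log(1/\delta)/n$.

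Third, I would compute the fixed point $r^\ast$, which is where the real work---and the $\log^3 n$ factor---lives. On the localized set $\{\ell_f : \mathbb{E}[\ell_f]\le r\}$ the constraint forces $\|\sqrt{\ell_f}\|_{L_2} \le \sqrt r$, and I would translate the clean contraction bound on $\sqrt{\mathcal{G}}$ into a bound on $\mathcal{G} = (\sqrt{\mathcal{G}})^2$ at this scale, via an $\ell_\infty$-covering / Dudley chaining estimate (using $\sqrt{\ell_f}\le \sqrt b$ to handle the squaring). This should produce $\psi(r) \lesssim \sqrt{H\,r}\,\mathcal{R}_n(\mathcal{F})\,\mathrm{polylog}(n)$; solving $r^\ast \approx \psi(r^\ast)$ gives $r^\ast \lesssim H\log^3 n\,\mathcal{R}_n^2(\mathcal{F})$, and combining with the previous display finishes the proof. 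The main obstacle is precisely this last conversion: self-bounding cleanly controls $\sqrt{\ell}$, but the local complexity we need is of the \emph{unsquared} class $\mathcal{G}$, and matching the two at the data-dependent scale $r$ while losing only logarithmic factors---rather than a factor of $b$---requires the careful chaining argument of Srebro--Sridharan--Tewari. Tracking the three logarithmic factors and verifying the sub-root and concentration hypotheses is the delicate bookkeeping that the cited theorem encapsulates.
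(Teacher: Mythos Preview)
The paper does not prove this statement at all: it is quoted verbatim as Theorem~1 of \cite{NIPS2010_76cf99d3} (Srebro--Sridharan--Tewari) and used as a black box in the proof of \Cref{thm:sharpness_generalization_complexity_general}. There is therefore nothing in the paper to compare your argument against.

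That said, your sketch is a faithful outline of the actual proof in the cited reference. The self-bounding inequality $|\partial_u\ell|^2 \le 2H\ell$ for nonnegative $H$-smooth losses, the resulting $\sqrt{H/2}$-Lipschitzness of $\sqrt{\ell}$, contraction to bound $\mathcal{R}_n(\sqrt{\mathcal{G}})$, the variance--mean relation $\mathbb{E}[\ell_f^2]\le b\,\mathbb{E}[\ell_f]$, and then local Rademacher complexity with a sub-root fixed-point computation are exactly the ingredients Srebro--Sridharan--Tewari assemble. Your identification of the main technical step---passing from a complexity bound on $\sqrt{\mathcal{G}}$ to one on the localized slice of $\mathcal{G}$ itself while losing only polylogarithmic factors---is also where the cited paper does its real work, and where the $\log^3 n$ appears. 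So your plan is correct in spirit and in structure; it simply reproduces (at the level of a sketch) a result the present paper takes for granted.
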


Finally, we also need lemmas bounding the Rademacher complexity of norm-bounded linear hypothesis and~\mlpnobias.

\begin{lemma}
\label{lem:radamacher_complexity_network}
For any constant $C > 0$ and number of samples $n$, for the set of parameters for~\mlpnobias~$\Theta_C \triangleq  \{ \theta \mid \sum_{j = 1}^m \| \weight_{1,j} \|_2 |\weight_{2,j}| \le C, \theta = (W_1,W_2)\}$ and any training set $\{x_i\}_{i \in [n]}$ satisfying that $\|x_i\|_2 \le B$, it holds that $\gR_S(\{  \modelnobias_{\theta} \mid \theta \in \Theta_C\}) \le \frac{2 C B}{\sqrt{n}}$.
\end{lemma}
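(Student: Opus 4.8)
The plan is to reduce the Rademacher complexity of the full network class to that of a single normalized $\ReLU$ neuron, and then strip the $\ReLU$ off by contraction. First I would expand $\modelnobias_\theta(x)=\sum_{j=1}^m \weight_{2,j}\,\ReLU(\weight_{1,j}^\top x)$ and use the positive homogeneity of $\ReLU$ to write, for $\weight_{1,j}\neq 0$, $\weight_{2,j}\ReLU(\weight_{1,j}^\top x)=\sgn(\weight_{2,j})\,a_j\,\ReLU(u_j^\top x)$, where $a_j=|\weight_{2,j}|\,\|\weight_{1,j}\|_2$ and $u_j=\weight_{1,j}/\|\weight_{1,j}\|_2$ is a unit vector. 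The constraint defining $\Theta_C$ is then exactly $\sum_j a_j\le C$. Fixing the Rademacher signs $\epsilon$, the objective $\sum_i \epsilon_i \modelnobias_\theta(x_i)=\sum_j \sgn(\weight_{2,j})\,a_j\, g(u_j)$, with $g(u)\triangleq\sum_{i=1}^n \epsilon_i\ReLU(u^\top x_i)$, is linear in the nonnegative weights $a_j$ over the simplex $\{\sum_j a_j\le C\}$; hence its supremum is attained by placing all mass $C$ on a single neuron and choosing the best direction and sign, giving $\sup_{\theta\in\Theta_C}\sum_i \epsilon_i \modelnobias_\theta(x_i)=C\sup_{\|u\|_2=1}|g(u)|$.

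Next I would handle the absolute value. Since $|g(u)|=\max(g(u),-g(u))$ and $\max(A,B)\le\max(A,0)+\max(B,0)$ for all reals, together with positive homogeneity of $g$ (so that $\sup_{\|u\|_2\le1}g(u)=\max\{\sup_{\|u\|_2=1}g(u),0\}$), I would obtain $\sup_{\|u\|_2=1}|g(u)|\le \sup_{\|u\|_2\le1} g(u)+\sup_{\|u\|_2\le1}(-g(u))$. Taking expectations and using that $\epsilon\mapsto-\epsilon$ preserves the distribution, the two terms have equal expectation, so $\E_\epsilon\sup_{\|u\|_2=1}|g(u)|\le 2\,\E_\epsilon\sup_{\|u\|_2\le1} g(u)$.

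Then I would remove the $\ReLU$ by contraction: since $\ReLU$ is $1$-Lipschitz with $\ReLU(0)=0$, the Ledoux--Talagrand contraction principle gives $\E_\epsilon\sup_{\|u\|_2\le1}\sum_i\epsilon_i\ReLU(u^\top x_i)\le \E_\epsilon\sup_{\|u\|_2\le1}\sum_i\epsilon_i\,u^\top x_i=\E_\epsilon\|\sum_i\epsilon_i x_i\|_2$. By Jensen's inequality and independence of the $\epsilon_i$ (the cross terms vanish in expectation), $\E_\epsilon\|\sum_i\epsilon_i x_i\|_2\le(\sum_i\|x_i\|_2^2)^{1/2}\le B\sqrt{n}$. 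Combining would give $\gR_S(\{\modelnobias_\theta:\theta\in\Theta_C\})=\frac{C}{n}\E_\epsilon\sup_{\|u\|_2=1}|g(u)|\le\frac{C}{n}\cdot 2B\sqrt{n}=\frac{2CB}{\sqrt n}$.

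The step requiring the most care is the reduction to a single neuron together with the appearance of the absolute value: because the sign of the second-layer weight is free, the signed Rademacher supremum collapses onto the worst single direction measured in absolute value, and one must argue both that concentrating the budget $C$ on one neuron is optimal (linearity on the simplex) and that splitting $|g|$ into two one-sided, contraction-friendly suprema costs only a factor of $2$. The contraction step and the $B\sqrt n$ vector Rademacher bound are then routine.
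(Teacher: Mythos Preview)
Your proposal is correct and follows essentially the same route as the paper: homogeneity of $\ReLU$ to reduce to a single normalized neuron carrying the full budget $C$, the factor-of-$2$ splitting of the absolute value via symmetry of the Rademacher signs, Talagrand contraction to remove $\ReLU$, and the standard $B/\sqrt{n}$ bound for the unit-ball linear class. Your justification of the absolute-value step (via $\max(A,B)\le\max(A,0)+\max(B,0)$ and positive homogeneity of $g$) is in fact more explicit than the paper's, which writes the inequality $\E_\sigma\sup_{\|u\|\le1}|g(u)|\le 2\,\E_\sigma\sup_{\|u\|\le1}g(u)$ without comment.
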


\begin{proof}
Let $\overline{u}$ denotes $u / \|u\|_2$ for $u \neq 0$ and $0$ when $u = 0$, 
\begin{align*}
R_S(\{  \modelnobias_{\theta} \mid \theta \in \Theta_C\}) & =\frac{1}{n} \underset{\sigma}{\mathbb{E}}\left[\sup _\theta \sum_{i=1}^n \sigma_i \modelnobias_\theta\left(x_i\right)\right] \\
& =\frac{1}{n} \underset{\sigma}{\mathbb{E}}\left[\sup _\theta \sum_{i=1}^n \sigma_i\left[\sum_{j=1}^m \weight_{2,j} \ReLU\left(\weight_{1,j} x_i\right)\right]\right]  \\
& =\frac{1}{n} \underset{\sigma}{\mathbb{E}}\left[\sup _\theta \sum_{i=1}^n \sigma_i\left[\sum_{j=1}^m \weight_{2,j}\left\|\weight_{1,j}\right\|_2 \ReLU\left(\overline{W_{1,j}}^T x_i\right)\right]\right] \\
& =\frac{1}{n} \underset{\sigma}{\mathbb{E}}\left[\sup _\theta \sum_{j=1}^m \weight_{2,j}\left\|\weight_{1,j}\right\|_2\left[\sum_{i=1}^n \sigma_i \ReLU\left(\overline{W_{1,j}}^T x_i\right)\right]\right] \\
& \leq \frac{1}{n} \underset{\sigma}{\mathbb{E}}\left[\sup _\theta \sum_{j=1}^m\left|\weight_{2,j}\right|\left\|\weight_{1,j}\right\|_2 \max _{k \in[n]}\left|\sum_{i=1}^n \sigma_i \ReLU\left(\overline{W_{1,k}}^Tx_i\right)\right|\right] \\
& \leq  =\frac{C}{n} \underset{\sigma}{\mathbb{E}}\left[\sup _{\bar{u}:\|\bar{u}\|_2=1}\left|\sum_{i=1}^n \sigma_i \ReLU\left(\bar{u}^T x_i\right)\right|\right] \\
& \leq \frac{C}{n} \underset{\sigma}{\mathbb{E}}\left[\sup _{\bar{u}:\|\bar{u}\|_2 \leq 1}\left|\sum_{i=1}^n \sigma_i \ReLU\left(\bar{u}^T x_i\right)\right|\right] \\
& \leq \frac{2 C}{n} \underset{\sigma}{\mathbb{E}}\left[\sup _{\bar{u}:\|\bar{u}\|_2 \leq 1} \sum_{i=1}^n \sigma_i \ReLU\left(\bar{u}^T x_i\right)\right] \\
& =2 C R_S\left(\mathcal{H}{ }^{\prime}\right),
\end{align*}

where $\mathcal{H}^{\prime}=\left\{x \mapsto \ReLU\left(\bar{u}^{\top} x\right): \bar{u} \in \mathbb{R}^d,\|\bar{u}\|_2 \leq 1\right\}$. By Talagrand's lemma, since $\ReLU$ is 1-Lipschitz, $R_S\left(\mathcal{H}^{\prime}\right) \leq$ $R_S\left(\mathcal{H}^{\prime \prime}\right)$ where $\mathcal{H}^{\prime \prime}=\left\{x \mapsto \bar{u}^{\top} x: \bar{u} \in \mathbb{R}^d,\|\bar{u}\|_2 \leq 1\right\}$ is a linear hypothesis space. Using $R_S\left(\mathcal{H}^{\prime \prime}\right) \leq \frac{B}{\sqrt{n}}$ by~\Cref{lem:radamacher_complexity_linear} concludes the proof.
\end{proof}

\begin{lemma}
    \label{lem:radamacher_complexity_linear}
    For any constant $C > 0$ and number of samples $n$, for any set $S = \{x_i \}_{i \in [n]}$ satisfying that $\forall i, x_i \in \R^d, \|x_i\|_2^2 \leq C^2$ and function class $\mathcal{F}=\left\{x \mapsto\langle w, x\rangle \mid w \in \mathbb{R}^d,\|w\|_2 \leq 1\right\}$, it holds that,
    $$
    \gR_S(\mathcal{F}) \leq \frac{C}{\sqrt{n}}.
    $$
\end{lemma}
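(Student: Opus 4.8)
The plan is to reduce the supremum over the unit ball to a norm via duality, then control the expected norm by a second-moment computation. First I would write out the definition $\gR_S(\gF) = \frac{1}{n}\Exp_{\epsilon}\sup_{\norm{w}_2\le 1}\sum_{i=1}^n \epsilon_i \inner{w}{x_i}$ and pull the Rademacher sum inside the inner product, so that the inner supremum becomes $\sup_{\norm{w}_2\le 1}\inner{w}{\sum_{i=1}^n\epsilon_i x_i}$. By the variational characterization of the Euclidean norm (Cauchy--Schwarz, with equality attained at $w = (\sum_i\epsilon_i x_i)/\norm{\sum_i \epsilon_i x_i}_2$), this supremum equals exactly $\norm{\sum_{i=1}^n \epsilon_i x_i}_2$. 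Hence the whole quantity collapses to $\gR_S(\gF) = \frac{1}{n}\Exp_{\epsilon}\norm{\sum_{i=1}^n \epsilon_i x_i}_2$, eliminating the dependence on $w$ entirely.

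The next step is to bound this expected norm. I would apply Jensen's inequality to the concave square-root function, giving $\Exp_{\epsilon}\norm{\sum_i \epsilon_i x_i}_2 \le \big(\Exp_{\epsilon}\norm{\sum_i \epsilon_i x_i}_2^2\big)^{1/2}$. The squared term expands as $\sum_{i,j}\Exp_{\epsilon}[\epsilon_i\epsilon_j]\inner{x_i}{x_j}$, and since the $\epsilon_i$ are independent Rademacher signs we have $\Exp[\epsilon_i\epsilon_j] = \delta_{ij}$, so all cross terms vanish and only the diagonal survives: $\Exp_{\epsilon}\norm{\sum_i \epsilon_i x_i}_2^2 = \sum_{i=1}^n \norm{x_i}_2^2$. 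Using the hypothesis $\norm{x_i}_2^2 \le C^2$ this is at most $nC^2$.

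Combining the two steps yields $\gR_S(\gF) \le \frac{1}{n}\sqrt{nC^2} = \frac{C}{\sqrt{n}}$, which is the claim. This argument is entirely standard, so there is no genuine obstacle; the only point worth care is that the duality step gives an exact identity (not merely an inequality) so that no slack is lost before the Jensen bound, and that the off-diagonal terms are killed by independence of the signs rather than by any assumption on the $x_i$ (which may be arbitrary vectors in the ball of radius $C$).
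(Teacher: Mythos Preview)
Your proof is correct and follows essentially the same route as the paper: reduce the supremum over the unit ball to $\norm{\sum_i \epsilon_i x_i}_2$ via Cauchy--Schwarz, then bound the expected norm by the root of its second moment. If anything, your write-up is slightly more careful, since you make the Jensen step explicit whereas the paper writes the last passage as an equality when it is in fact an inequality.
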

\begin{proof}
\begin{align*}
\gR_S(\mathcal{F}) & =\underset{\sigma}{\mathbb{E}}\left[\sup _{\|w\|_2 \leq 1} \frac{1}{n} \sum_{i=1}^n \sigma_i\left\langle w, x_i\right\rangle\right] \\
& =\frac{1}{n} \underset{\sigma}{\mathbb{E}}\left[\sup _{\|w\|_2 \leq 1}\left\langle w, \sum_{i=1}^n \sigma_i x_i\right\rangle\right] \\
& =\frac{1}{n} \underset{\sigma}{\mathbb{E}}\left[\left\|\sum_{i=1}^n \sigma_i x_i\right\|_2\right] \\
& =\frac{1}{n} \sqrt{\sum_{i=1}^n\left\|x_i\right\|_2^2} \le \frac{C}{\sqrt{n}}.
\end{align*}
\end{proof}

\subsubsection{Elementary inequalities}

We will prove some elementary inequalities that will be useful in the proof of our main results.

\begin{lemma}
\label{lem:relu_lipschitz}
For any $x, y \in \R$,
$|\ReLU(x)^2 - \ReLU(y)^2|\le (|x| + |y|) |x - y|$.
\end{lemma}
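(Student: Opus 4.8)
The plan is to exploit the algebraic factorization of the difference of squares together with two elementary properties of $\ReLU$. First I would write
\[
\ReLU(x)^2 - \ReLU(y)^2 = \left(\ReLU(x) - \ReLU(y)\right)\left(\ReLU(x) + \ReLU(y)\right),
\]
which reduces the problem to bounding each of the two factors separately.

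For the first factor, I would use that $\ReLU$ is $1$-Lipschitz, i.e. $\left|\ReLU(x) - \ReLU(y)\right| \le |x - y|$. This follows immediately from $\ReLU(t) = \max(t,0)$, either by checking the cases $x,y \ge 0$, $x,y \le 0$, and opposite signs, or by noting that $t \mapsto \max(t,0)$ has derivative in $\{0,1\}$ wherever it is differentiable. For the second factor, which is nonnegative, I would use the pointwise bound $\ReLU(t) \le |t|$ to obtain $\ReLU(x) + \ReLU(y) \le |x| + |y|$.

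Combining the two bounds, and using that the second factor is nonnegative so that its absolute value equals itself, gives
\[
\left|\ReLU(x)^2 - \ReLU(y)^2\right| = \left|\ReLU(x) - \ReLU(y)\right|\cdot\left(\ReLU(x) + \ReLU(y)\right) \le (|x|+|y|)\,|x-y|,
\]
which is exactly the claimed inequality.

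I do not anticipate any genuine obstacle here; the only thing requiring care is the sign bookkeeping, i.e. verifying that the factor $\ReLU(x)+\ReLU(y)$ is nonnegative so no extra absolute value is needed. A mean-value-theorem argument applied to $g(t) = \ReLU(t)^2$ (which is $C^1$ with $g'(t) = 2\ReLU(t)$) also proves a Lipschitz-type estimate, but it yields the looser constant $2\max(|x|,|y|)$ in place of $|x|+|y|$, so the factorization route is preferable for matching the stated bound exactly.
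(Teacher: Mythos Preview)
Your proof is correct. It differs from the paper's argument, which proceeds by a direct case analysis: assuming WLOG $x>y$, the paper checks the three sign configurations $0\ge x>y$, $x>0\ge y$, and $x>y>0$ separately, computing $\ReLU(x)^2-\ReLU(y)^2$ explicitly in each. Your factorization route is cleaner in that it avoids cases entirely and isolates the two general facts (1-Lipschitzness of $\ReLU$ and $\ReLU(t)\le|t|$) that drive the bound; the paper's case analysis, on the other hand, is slightly more informative in that its third case shows the inequality is actually an equality when $x,y>0$. Either approach settles the lemma with equal ease.
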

\begin{proof}
    We will assume WLOG that $x > y$. We then discuss the following three cases.
    \begin{enumerate}
        \vspace{-0.1in}
        \item $0 \ge x > y$, then the result is trivial.
        \item $x > 0 \ge y$, then $|\ReLU(x)^2 - \ReLU(y)^2| = x^2 \le (|x| + |y|) |x - y|$.
        \item $x > y > 0$, then $|\ReLU(x)^2 - \ReLU(y)^2| = x^2 - y^2 =   (|x| + |y|) |x - y|$.
    \end{enumerate}
    This completes the proof.
\end{proof}

\begin{lemma}
\label{lem:relu_bn}
For any $a, b, c, d \in \R$, if $a + d = b + c$, then
\begin{align*}
    | \relu(a) + \relu(d) - \relu(b) - \relu(c) |^2 \le \left(\relu(a)\right)^2 + \left(\relu(d)\right)^2 + \left(\relu(b)\right)^2 + \left(\relu(c)\right)^2.
\end{align*}
The equality holds if and only if three of the four values are not positive.
\vspace{-0.1in}
\end{lemma}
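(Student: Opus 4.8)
The plan is to reduce the squared inequality to an equivalent bilinear inequality and then split on the sign of the common sum $S := a + d = b + c$. Writing $A = \relu(a)$, $D = \relu(d)$, $B = \relu(b)$, $C = \relu(c)$ (all nonnegative), I would first expand $(A + D - B - C)^2$ and cancel the common squared terms $A^2 + B^2 + C^2 + D^2$ against the right-hand side. This shows the claim is equivalent to
\[
AD + BC \le (A + D)(B + C).
\]
This reformulation is the key simplification; everything afterward is just verifying this single inequality under the constraint $a + d = b + c$.

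Next I would case-split on $S$. If $S \le 0$, then $a + d \le 0$ forces at most one of $a, d$ to be positive, so $AD = \relu(a)\relu(d) = 0$; symmetrically $BC = 0$. The left-hand side is then $0$, while the right-hand side is a product of nonnegative numbers, so the inequality holds. If instead $S > 0$, I would use two elementary facts: first, $A + D = \relu(a) + \relu(d) \ge a + d = S$ and likewise $B + C \ge S$, so the right-hand side is at least $S^2$; second, by AM--GM $AD \le S^2/4$ (since $AD = ad \le ((a+d)/2)^2$ when both are positive, and $AD = 0$ otherwise), and similarly $BC \le S^2/4$, so the left-hand side is at most $S^2/2$. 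Combining gives $AD + BC \le S^2/2 < S^2 \le (A+D)(B+C)$, which is in fact \emph{strict}.

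For the equality characterization I would argue as follows. Since the $S > 0$ case yields a strict inequality, equality forces $S \le 0$; and when $S \le 0$ we have $AD = BC = 0$, so equality reduces to $(A + D)(B + C) = 0$, i.e.\ $A + D = 0$ or $B + C = 0$. Unwinding, this says one of the two pairs $\{a, d\}$, $\{b, c\}$ is entirely non-positive. Together with the observation that $S \le 0$ already prevents both entries of the \emph{other} pair from being positive, this is exactly the statement that at most one of the four values is positive, i.e.\ three of the four are non-positive.

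The main obstacle is not the inequality itself (which is short once reformulated) but the equality analysis: I must check carefully that the algebraic condition ``$A + D = 0$ or $B + C = 0$, with $S \le 0$'' is logically equivalent to ``at least three of $a, b, c, d$ are non-positive.'' The crucial observation making this work is that $a + d = b + c > 0$ would force a positive entry in each pair, hence at least two positive values overall; this is precisely what rules out the equality regime as soon as $S > 0$, and it ties the two directions of the characterization together.
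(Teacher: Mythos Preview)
Your proof is correct, but it proceeds quite differently from the paper's. The paper argues in two lines: after arranging WLOG $a \ge b \ge c \ge d$ (possible under the constraint $a+d=b+c$ by the symmetries of the statement), convexity of $\relu$ gives $\relu(a)+\relu(d)-\relu(b)-\relu(c)\ge 0$, and monotonicity gives $\relu(a)+\relu(d)-\relu(b)-\relu(c)\le \relu(a)-\relu(b)\le \relu(a)$; squaring and comparing with the full sum of squares finishes. Your route instead expands the square to reduce the claim to $AD+BC\le (A+D)(B+C)$ and then splits on the sign of $S=a+d=b+c$, using $\relu(x)\ge x$ and AM--GM in the $S>0$ branch. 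The paper's ordering-plus-convexity argument is shorter for the inequality itself, while your algebraic reformulation has the advantage that the equality analysis falls out cleanly: you show the $S>0$ branch is always strict, and in the $S\le 0$ branch equality becomes the transparent condition $(A+D)(B+C)=0$, from which the ``three of four non-positive'' characterization is immediate. The paper's proof leaves the equality case implicit (it follows by tracking when $\relu(b)=\relu(c)=\relu(d)=0$ under the WLOG ordering), so your treatment of that part is actually more complete.
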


\begin{proof}
    WLOG we assume $a \ge b \ge c \ge d$. As ReLU is convex, we have that $\relu(a) + \relu(d) - \relu(b) - \relu(c) \ge 0$. Further, we have that $\relu(a) + \relu(d) - \relu(b) - \relu(c) \le \relu(a) - \relu(b) \le \relu(a)$. Thus, we have the desired result.
\end{proof}

\newpage
\vspace{-0.1in}
\begin{table}[h]
    \centering
    \begin{tabular}{|l|l|l|l|l|l|}
    \hline
                                                                         & Learning Rate & Perturbation Radius & Batch size & Weight Decay & Epochs \\ \hline
    \Cref{fig:classification_no_bias_baseline}       & 0.01          & 0                   & 100        & 0.05         & 1e5    \\ \hline
    \Cref{fig:classification_no_bias_sam}            &         0.01      &            0.05         &        1    &       0       &  1e5      
    \\ \hline
    \Cref{fig:classification_bn}            &         0.005      &            0.1         &        1    &       0       &  1e5       \\
    \hline
               &         0.003      &            1         &        1    &       0       &  1e5    
    \\ \hline
    \Cref{fig:classification_bias_baseline}           & 0.01          & 0                   & 100        & 0.05         & 1e5    \\
    \hline
    \Cref{fig:classification_bias_sam} & 0.01         & 0.05                   & 1          & 0         & 2e5    \\
    \hline
     & 0.01         & 0.1                   & 1          & 0         & 4e5    \\
    \hline
    \Cref{fig:classification_bias_softplus_baseline} & 0.001         & 0                   & 1          & 0.05         & 1e5   \\
    \hline
    \Cref{fig:classification_bias_softplus_sam} & 0.0005         & 0.05                   & 1          & 0         & 1e5   
    \\
    \hline
     & 0.001         & 0.1                   & 1          & 0         & 1e5   
    \\
    \hline
  & 0.005         & 1                   & 1          & 0         & 5e3    \\
  \hline
  \Cref{fig:classification_simple_ln_bias} & 0.1        & 0.1                   & 1          & 0         & 1e5  
  \\
  \hline
  \Cref{fig:classification_simple_ln_bias_norm_bounded} & 0.01        & 0.1                   & 1          & 0         & 5e2   \\
  \hline
   & 0.01        & 0.5                   & 1          & 0         & 5e2 \\
  \hline
   & 0.01        & 1                   & 1          & 0         & 5e2
   \\
  \hline
  \Cref{fig:regression_no_bias} & 0.01        & 0.2                   & 1          & 0         & 1e5
  \\
  \hline
  \Cref{fig:regression_bias} & 0.01        & 0.2                   & 1          & 0         & 1e5 
   \\
   \hline
  \Cref{fig:logistic_no_bias_baseline} & 0.1        & 0                   & 10          & 0.01         & 1e5 
   \\
   \hline
  \Cref{fig:logistic_no_bias_sam} & 0.1        & 0.2                   & 1          & 0         & 1e5 
   \\
   \hline
  \Cref{fig:logistic_bias_baseline} & 0.01        & 0                   & 1          & 0.05         & 1e5
  \\
   \hline
  \Cref{fig:logistic_bias_sam} & 0.1        & 0.2                   & 1          & 0         & 1e5 
  \\
  \hline
  \Cref{fig:logistic_simple_bn_bias} & 0.1        & 0.2                   & 1          & 0         & 4e4 
  \\
  \hline
  \Cref{fig:logistic_simple_ln_bias} & 1        & 0.5                   & 1          & 0         & 1e3 \\
  \hline
   & 1        & 1                   & 1          & 0         & 1e5
   \\
   \hline
  \Cref{fig:logistic_depth_3_bias_baseline} & 0.01        & 0                   & 1          & 0.05         & 1e5  
  \\
  \hline
  \Cref{fig:logistic_depth_3_bias_sam} & 0.01        & 0.05                   & 1          & 0         & 1e5 \\
    \hline
    \end{tabular}
    \caption{\textbf{Training details for Experiments.} For~\Cref{fig:classification_simple_ln_bias,fig:logistic_simple_ln_bias}, we scale down the initialization of the first layer by a factor of $100$ to avoid minimizing the sharpness by simply increasing the norm at the beginning. }
    \label{tab:details}
    \end{table}
\section{Experiments}
\label{sec:appendix_experiment}

\subsection{Training Details}

For all the experiments, we use networks with width $500$. The learning rates, perturbation radius, and training epochs are summarized in~\Cref{tab:details}. For those experiments where there are adjustments in hyperparameters through the training process, we report all the hyperparameters in multiple rows. We use 8 NVIDIA 2080 GPUs to train the models. The training time for each experiment is around 12 hours per 1e5 epochs

\subsection{Extension To Uniform Ball Distribution}
As our~\Cref{thm:sharpness_generalization_complexity_general,thm:sharpness_counter_example} suggests, the generalization and memorization results should hold for data distribution other than boolean hypercube. We perform experiments on uniform ball distribution to verify this. Specifically, we sample data points uniformly from the ball with radius $\sqrt{d}$ with dimension $d = 10$ and the label is defined as $y = |x[1]| - |x[2]|$. The results are shown in~\Cref{fig:regression_uniform_ball}. We can see that the flattest minimizers of the two architectures have very different generalization behavior. The flattest minimizer of the MLP without bias has a much better generalization performance than the one with bias. This is consistent with our theoretical results.

\begin{figure}
    \centering
    \begin{subfigure}[t]{0.45\textwidth}
        \centering
        \includegraphics[width=\textwidth]{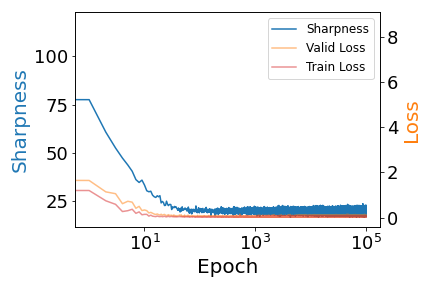}
        \caption{\mlpnobias}
        \label{fig:regression_no_bias}
    \end{subfigure}
    \begin{subfigure}[t]{0.45\textwidth}
       \centering
       \includegraphics[width=\textwidth]{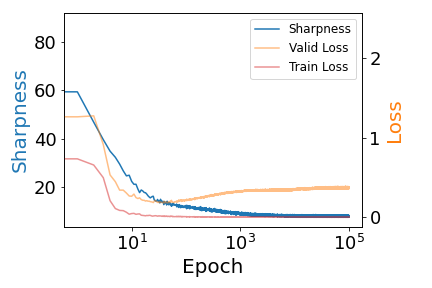}
       \caption{\mlpbias}
       \label{fig:regression_bias}
   \end{subfigure}
\caption{
\textbf{Uniform Ball Distribution.} We train a 2-layer MLP with ReLU activation with and without Bias using 1-SAM on uniform ball distribution with dimension $d = 10$ and training set size $n = 100$. One can again see the striking difference between the generalization behavior of the flattest minimizers of the two architectures.}
\label{fig:regression_uniform_ball}
\vspace{-0.1in}
\end{figure}

\subsection{Extension To Logistic Loss}
As~\Cref{thm:loss_minimizer,lem:sharpness_jacobian_logistic} suggests, our results can be extended to logistic loss with label smoothing. We perform all our experiments mentioned in the main text on the same distribution $\distr$, with the mean squared error loss replaced by logistic loss with label smoothing $p = 0.2$  to verify this. The results are shown in~\Cref{fig:logistic_no_bias,fig:logistic_bias,fig:logistic_normalization}.

\begin{figure}
    \centering
    \begin{subfigure}[t]{0.45\textwidth}
        \centering
        \includegraphics[width=\textwidth]{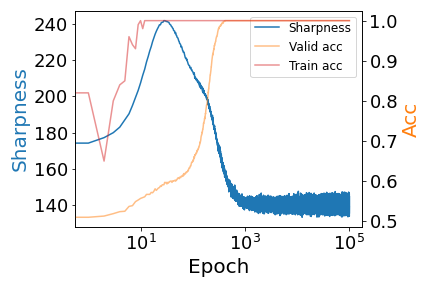}
        \caption{Baseline}
        \label{fig:logistic_no_bias_baseline}
    \end{subfigure}
    \begin{subfigure}[t]{0.45\textwidth}
       \centering
       \includegraphics[width=\textwidth]{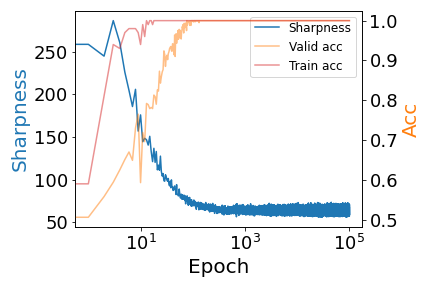}
       \caption{1-SAM}
       \label{fig:logistic_no_bias_sam}
   \end{subfigure}
\caption{
\textbf{Scenario I with Logistic Loss.} We train a 2-layer MLP with ReLU activation without Bias using gradient descent with weight decay and 1-SAM on $\distr$ with dimension $d = 30$ and training set size $n = 100$. In both cases, the model reaches perfect generalization. Notice that although weight decay doesn't explicitly regularize model sharpness, the flatness of the model decreases through training, which is consistent with our~\Cref{lem:norm_bound} relating sharpness to the norm of the weight.}
\label{fig:logistic_no_bias}
\vspace{-0.2in}
\end{figure}

\begin{figure}
    \centering
    \begin{subfigure}[t]{0.45\textwidth}
        \centering
        \includegraphics[width=\textwidth]{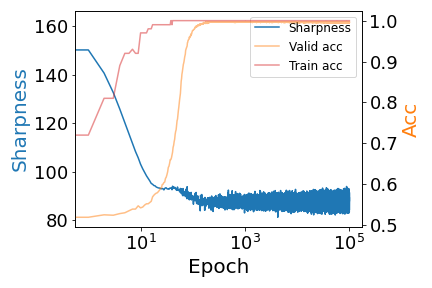}
        \caption{Baseline}
        \label{fig:logistic_bias_baseline}
    \end{subfigure}
    \begin{subfigure}[t]{0.45\textwidth}
       \centering
       \includegraphics[width=\textwidth]{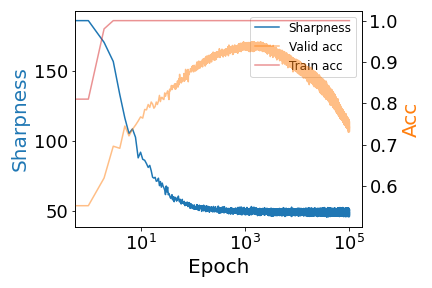}
       \caption{1-SAM}
       \label{fig:logistic_bias_sam}
   \end{subfigure}
\caption{
\textbf{Scenario II with Logistic Loss.} We train a 2-layer MLP with ReLU activation with Bias using gradient descent with weight decay and 1-SAM on $\distr$ with dimension $d = 30$ and training set size $n = 100$. One can observe a distinction between the two settings. The minimum reached by 1-SAM is flatter but the model generalizes much worse and even starts to degenerate after 2000 epochs. The difference between~\Cref{fig:logistic_no_bias_sam,fig:logistic_bias_sam} is similar to the difference between~\Cref{fig:classification_no_bias_sam,fig:classification_bias_sam}}
\label{fig:logistic_bias}
\end{figure}

\begin{figure}
    \centering
    \begin{subfigure}[t]{0.45\textwidth}
        \centering
        \includegraphics[width=\textwidth]{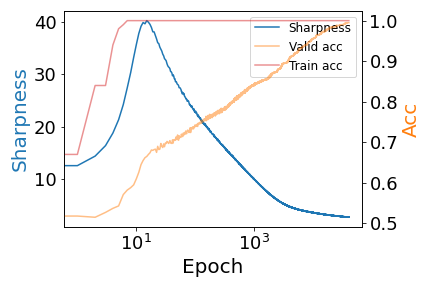}
        \caption{Simplified BatchNorm}
        \label{fig:logistic_simple_bn_bias}
    \end{subfigure}
    \begin{subfigure}[t]
        {0.45\textwidth}
        \centering
        \includegraphics[width=\textwidth]{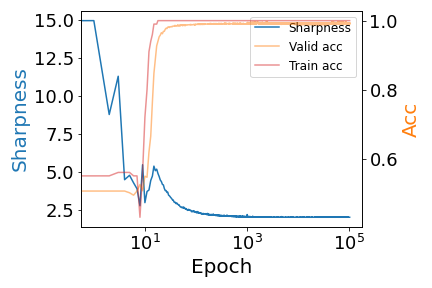}
        \caption{Simplified LayerNorm}
        \label{fig:logistic_simple_ln_bias}
    \end{subfigure}
    \caption{\textbf{Models with Normalization and Logistic Loss.} We train two-layer ReLU networks with simplified BatchNorm and LayerNorm on data distribution $\distr$ with dimension $d = 30$ and sample complexity $n = 100$ using 1-SAM. We can see that in both cases, the models nearly perfectly generalize.}
\label{fig:logistic_normalization}
\vspace{-0.3cm}
\end{figure}

\begin{figure}
    \centering
    \begin{subfigure}[h]{0.45\textwidth}
        \centering
        \includegraphics[width=\textwidth]{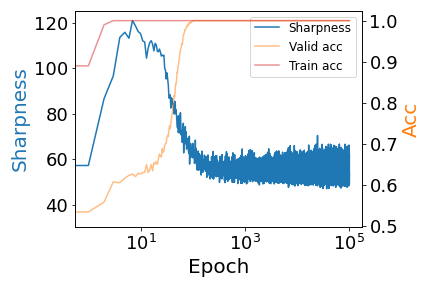}
        \caption{Baseline}
        \label{fig:logistic_depth_3_bias_baseline}
    \end{subfigure}
    \begin{subfigure}[h]{0.45\textwidth}
       \centering
       \includegraphics[width=\textwidth]{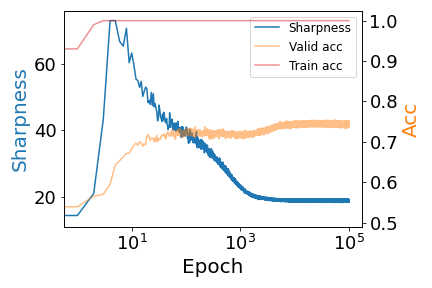}
       \caption{1-SAM}
       \label{fig:logistic_depth_3_bias_sam}
   \end{subfigure}
\caption{
\textbf{Scenario II with Deeper Networks.} We train a 3-layer MLP with ReLU activation with Bias using gradient descent with weight decay and 1-SAM on $\distr$ with dimension $d = 30$ and training set size $n = 100$. One can observe a distinction between the two settings. The minimum reached by 1-SAM is flatter, but the model generalizes much worse.}
\label{fig:logistic_bias_depth_3}
\end{figure}

\subsection{Extension To Deeper Networks}

Our~\Cref{thm:sharpness_counter_example} suggests that memorization solutions can exist for deeper networks with biased terms in the first layer. We perform experiments on deeper networks to verify this. Specifically, we train a 3-layer MLP with ReLU activation with bias term in the first layer on $\distr$ with dimension $d = 30$ and training set size $n = 100$. The results are shown in~\Cref{fig:logistic_bias_depth_3}. We can see that the flattest minimizer of the 3-layer MLP with bias term in the first layer has a much worse generalization performance than the baseline. This is consistent with our theoretical results.

\end{document}